\documentclass[final]{styles/arxiv/colt2021} 

\newif\ifspacehack
\usepackage{natbib}
\hypersetup{
    colorlinks = blue,
    breaklinks,
    linkcolor = blue,
    citecolor = blue,
    urlcolor  = blue,
}
\usepackage{url} 
\usepackage{graphicx}
\usepackage{mathtools}
\usepackage{footnote}
\usepackage{float}
\usepackage{xspace}
\usepackage{multirow}
\usepackage{xcolor}
\usepackage{wrapfig}
\usepackage{framed}
\usepackage{bbm}
\usepackage{footnote}
\usepackage{nicefrac}
\usepackage{makecell}
\usepackage[algo2e]{algorithm2e} 
\usepackage{algorithm}
\usepackage{amssymb}
\usepackage{bm}
\makesavenoteenv{tabular}
\makesavenoteenv{table}

\newcommand{\combine}{p}        
\newcommand{\sleep}{\hat{p}} 

\renewcommand{\tilde}{\widetilde}
\renewcommand{\hat}{\widehat}

\newcommand{\corral}{\textsc{Corral}\xspace}
\newcommand{\exptwo}{\textsc{Exp2}\xspace}
\newcommand{\expthree}{\textsc{Exp3}\xspace}
\newcommand{\expfour}{\textsc{Exp4}\xspace}

\newcommand{\scrible}{\textsc{SCRiBLe}\xspace}
\newcommand{\holder}{H\"{o}lder\xspace}

\newcommand \term[1]{\textsc{term}~(\textsc{#1})}

\def \R {\mathbb{R}}

\newcommand{\calB}{{\mathcal{B}}}
\newcommand{\calX}{{\mathcal{X}}}

\newcommand{\calI}{{\mathcal{I}}}

\newcommand{\Reg}{\textsc{Reg}}
\newcommand{\RegS}{\textsc{Reg}_{S}}

\newcommand{\posbias}{\textsc{Pos-Bias}\xspace}
\newcommand{\negbias}{\textsc{Neg-Bias}\xspace}

\newcommand{\target}{\textsc{Target}\xspace}

\newcommand{\injbias}{b}
\newcommand{\Correct}{\lambda}
\newcommand{\baseLR}{\eta}
\newcommand{\metaLR}{\epsilon}

\DeclareMathOperator*{\argmin}{argmin}

\newcommand{\E}{\field{E}}

\newcommand{\inner}[1]{ \left\langle {#1} \right\rangle }
\newcommand{\innersmall}[1]{ \langle {#1} \rangle }

\newcommand{\norm}[1]{\left\|{#1}\right\|}

\def \indi {\mathbbm{1}}

\newcommand{\constone}{C_1}

\newcommand{\ber}{\mathbf{Ber}}

\newcommand{\wh}{\widehat}
\newcommand{\wt}{\widetilde}

\newcommand{\ellhat}{\wh{\ell}}

\newcommand{\ellbar}{\bar{\ell}}

\newcommand{\costhat}{\wh{c}}
\newcommand{\cost}{c}

\newcommand{\posterm}{\textsc{Pos-Bias}\xspace}
\newcommand{\negterm}{\textsc{Neg-Bias}\xspace}
\newcommand{\bias}{\textsc{Deviation}\xspace}

\newcommand{\metareg}{\textsc{Meta}\mbox{-}\textsc{Regret}}
\newcommand{\basereg}{\textsc{Base}\mbox{-}\textsc{Regret}}

\newcommand{\ind}{\mathbbm{1}}

\newcommand{\order}{\ensuremath{\mathcal{O}}}
\newcommand{\otil}{\ensuremath{\tilde{\mathcal{O}}}}


\usepackage{lipsum,booktabs}
\usepackage{amsmath,mathrsfs,amssymb,amsfonts,bm,enumitem}
\usepackage{rotating}
\usepackage{pdflscape}
\usepackage{hyperref,url}
\hypersetup{
    colorlinks,
    breaklinks,
    linkcolor = blue,
    citecolor = blue,
    urlcolor  = blue,
}
\allowdisplaybreaks
\usepackage{appendix}
\usepackage{multirow,makecell}

\usepackage{algorithmic,algorithm}

\renewcommand{\tilde}{\widetilde}
\renewcommand{\hat}{\widehat}

\def \A {\mathcal{A}}
\def \B {\mathbb{B}}
\def \B {\mathcal{B}}

\def \E {\mathbb{E}}

\def \I {\mathcal{I}}

\def \O {\mathcal{O}}

\def \R {\mathbb{R}}
\def \S {\mathcal{S}}
\def \T {\top}

\def \V {\mathcal{V}}

\def \X {\mathcal{X}}

\def \Z {\mathcal{Z}}

\def \Ot {\tilde{\O}}

\usepackage{mathtools}
\let\norm\undefined 
\DeclarePairedDelimiter\norm{\lVert}{\rVert}
\DeclarePairedDelimiter\abs{\lvert}{\rvert}

\DeclareMathOperator*{\poly}{\text{poly}}



\newtheorem{myRequirement}{Requirement}

\usepackage{graphicx,color} 

\definecolor{wine_red}{RGB}{228,48,64}
\definecolor{DSgray}{cmyk}{0,1,0,0}

\usepackage{prettyref}
\newcommand{\pref}[1]{\prettyref{#1}}

\newcommand{\savehyperref}[2]{\texorpdfstring{\hyperref[#1]{#2}}{#2}}
\newrefformat{eq}{\savehyperref{#1}{Eq.~\textup{(\ref*{#1})}}}
\newrefformat{eqn}{\savehyperref{#1}{Eq.~(\ref*{#1})}}
\newrefformat{lem}{\savehyperref{#1}{Lemma~\ref*{#1}}}
\newrefformat{lemma}{\savehyperref{#1}{Lemma~\ref*{#1}}}
\newrefformat{def}{\savehyperref{#1}{Definition~\ref*{#1}}}
\newrefformat{line}{\savehyperref{#1}{Line~\ref*{#1}}}
\newrefformat{thm}{\savehyperref{#1}{Theorem~\ref*{#1}}}
\newrefformat{corr}{\savehyperref{#1}{Corollary~\ref*{#1}}}
\newrefformat{cor}{\savehyperref{#1}{Corollary~\ref*{#1}}}
\newrefformat{sec}{\savehyperref{#1}{Section~\ref*{#1}}}
\newrefformat{app}{\savehyperref{#1}{Appendix~\ref*{#1}}}
\newrefformat{appendix}{\savehyperref{#1}{Appendix~\ref*{#1}}}
\newrefformat{assum}{\savehyperref{#1}{Assumption~\ref*{#1}}}
\newrefformat{ex}{\savehyperref{#1}{Example~\ref*{#1}}}
\newrefformat{fig}{\savehyperref{#1}{Figure~\ref*{#1}}}
\newrefformat{alg}{\savehyperref{#1}{Algorithm~\ref*{#1}}}
\newrefformat{rem}{\savehyperref{#1}{Remark~\ref*{#1}}}
\newrefformat{conj}{\savehyperref{#1}{Conjecture~\ref*{#1}}}
\newrefformat{prop}{\savehyperref{#1}{Proposition~\ref*{#1}}}
\newrefformat{proto}{\savehyperref{#1}{Protocol~\ref*{#1}}}
\newrefformat{prob}{\savehyperref{#1}{Problem~\ref*{#1}}}
\newrefformat{claim}{\savehyperref{#1}{Claim~\ref*{#1}}}
\newrefformat{que}{\savehyperref{#1}{Question~\ref*{#1}}}
\newrefformat{op}{\savehyperref{#1}{Open Problem~\ref*{#1}}}
\newrefformat{fn}{\savehyperref{#1}{Footnote~\ref*{#1}}}
\newrefformat{require}{\savehyperref{#1}{Requirement~\ref*{#1}}}

\def \epsilon {\varepsilon}

\newcommand{\meta}{\textsc{Meta}\mbox{-}\textsc{Regret}\xspace}
\newcommand{\base}{\textsc{Base}\mbox{-}\textsc{Regret}\xspace}

\renewcommand{\ln}{\log}

\newcounter{protocol}

\def \ucirc {\mathring{u}}
\title[Corralling a Larger Band of Bandits]{Corralling a Larger Band of Bandits: \\ A Case Study on Switching Regret for Linear Bandits}
\usepackage{times}




\coltauthor{%
	\Name{Haipeng Luo\nametag{\thanks{Authors are listed in alphabetical order.}}} \Email{haipengl@usc.edu}\\
	\addr University of Southern California
	\AND
	\Name{Mengxiao Zhang\nametag{\footnotemark[1]}} \Email{mengxiao.zhang@usc.edu}\\
	\addr University of Southern California
	\AND
	\Name{Peng Zhao\nametag{\footnotemark[1]}} \Email{zhaop@lamda.nju.edu.cn}\\
	\addr National Key Laboratory for Novel Software Technology, Nanjing University
	\AND
	\Name{Zhi-Hua Zhou\nametag{\footnotemark[1]}} \Email{zhouzh@lamda.nju.edu.cn}\\
	\addr National Key Laboratory for Novel Software Technology, Nanjing University
}
\begin{document}

\maketitle

\begin{abstract}%
We consider the problem of combining and learning over a set of adversarial bandit algorithms with the goal of adaptively tracking the best one on the fly.
The \corral algorithm of~\citet{COLT'17:Corral} and its variants~\citep{NIPS'20:AdaptingMisspec} achieve this goal with a regret overhead of order $\Ot(\sqrt{MT})$ where $M$ is the number of base algorithms and $T$ is the time horizon. 
The polynomial dependence on $M$, however, prevents one from applying these algorithms to many applications where $M$ is $\poly(T)$ or even larger.
Motivated by this issue, we propose a new recipe to corral a larger band of bandit algorithms whose regret overhead has only \emph{logarithmic} dependence on $M$ as long as some conditions are satisfied.
As the main example, we apply our recipe to the problem of adversarial linear bandits over a $d$-dimensional $\ell_p$ unit-ball for $p \in (1,2]$.
By corralling a large set of $T$ base algorithms, each starting at a different time step, our final algorithm achieves the first optimal switching regret $\Ot(\sqrt{d S T})$ when competing against a sequence of comparators with $S$ switches (for some known $S$).
We further extend our results to linear bandits over a smooth and strongly convex domain as well as unconstrained linear bandits.
\end{abstract}

\section{Introduction}
\label{sec: intro}
We consider the problem of combining a set of bandit algorithms to learn the best one on the fly, which has many applications in dealing with uncertainty from the environment.
Indeed, by combining a set of base algorithms, each dedicated for a certain type of environments, the final meta algorithm can then automatically adapt to and perform well in every problem instance encountered, as long as the price of such meta-level learning is small enough.
While such ideas have a long history in online learning, doing so with partial information (that is, bandit feedback) is particularly challenging, and only recently have we seen success in various settings~\citep{COLT'17:Corral, NIPS'20:pacchiano2020model,NIPS'20:AdaptingMisspec,lee2020closer,ICML'21:misspecification,COLT'21:black-box,JMLR'21:BCO, ALT'22:corruption}.

We focus on an adversarial setting where the data are generated in an arbitrary and potentially malicious manner.
The closest work is~\citep{COLT'17:Corral}, where a generic algorithm called \corral is developed to learn over a set of $M$ base algorithms with extra regret overhead $\Ot(\sqrt{MT})$ after $T$ rounds.
In order to maintain $\Ot(\sqrt{T})$ overall regret, which is often the optimal bound and the goal when designing bandit algorithms, \corral can thus at most tolerate $M = \poly(\log T)$ base algorithms.
However, there are many applications where $M$ needs to be much larger to cover all possible scenarios of interest (we will soon provide an example where $M$ needs to be of order $T$).
Therefore, a natural question arises: \emph{can we corral an even larger band of bandit algorithms, ideally with only logarithmic dependence on $M$ in the regret?}

As an attempt to answer this question, we focus on the adversarial linear bandit problem and develop a new recipe to combine base algorithms, which reduces the problem to designing good unbiased loss estimators for the base algorithms and good optimistic loss estimators for the meta algorithm.
As long as these estimators ensure certain properties, the resulting algorithm enjoys logarithmic dependence on $M$ in the regret.
We discuss this recipe in detail along with a warm-up example on the classic multi-armed bandit problem in \pref{sec: corral-larger}.

Then, as a main example, in \pref{sec: lp_ball} we apply this recipe to develop the first optimal switching regret bound for adversarial linear bandits over a $d$-dimensional $\ell_p$ unit ball with $p \in (1,2]$.
Switching regret measures the learner's performance against a sequence of changing comparators with $S$ switches, 
and a standard technique to achieve so in the full-information setting is by combining $T$ base algorithms, each of which starts at a different time step and is guaranteed to perform well against a fixed comparator starting from this step (that is, a standard static regret guarantee); see for example \citep{hazan2007adaptive, ICML'15:Daniely-adaptive, luo2015achieving}.
Applying the same idea to bandit problems was not possible before because as mentioned, previous methods such as \corral cannot afford $T$ base algorithms.\footnote{One can compromise and corral $o(T)$ base algorithms instead, which leads to suboptimal switching regret; see such an attempt in~\citep[Appendix~G]{luo2018efficient}.}
However, this is exactly where our approach shines.
Indeed, by using our recipe to combine $T$ instances of the algorithm of~\citep{ALT'18:hybrid-barrier} together with carefully designed loss estimators, 
we manage to achieve logarithmic dependence on the number of base algorithms, resulting in the optimal (up to logarithmic factors) switching regret $\Ot(\sqrt{dST})$ for this problem for any fixed $S$.
As another example, in~\pref{appendix: strongly cvx set} we also generalize our results from $\ell_p$ balls to smooth and strongly convex sets.

Finally, in \pref{sec: unconstrained} we further generalize our results to the unconstrained linear bandit problem and obtain the first comparator-adaptive switching regret of order $\otil\big(\max_{k\in [S]}\|\ucirc_k\|_2 \cdot \sqrt{dST}\big)$ where $\ucirc_k$ is the $k$-th (arbitrary) comparator.
The algorithm requires two components, one of which is exactly our new algorithm developed for $\ell_p$ balls, the other being a new parameter-free algorithm for unconstrained Online Convex Optimization with the first comparator-adaptive switching regret. 
We note that this latter algorithm/result might be of independent interest.

\paragraph{High-level ideas.}
For such as a meta learning framework,
it is standard to decompose the overall regret as $\meta$, which measures the regret of the meta algorithm to the best base algorithm, and $\base$, which measures the best base algorithm to the best elementary action.
The main difficulty for bandit problems is that, it is hard to control $\base$ in such a framework due to possible starvation of feedback for the base algorithm.
The \corral algorithm of~\citet{COLT'17:Corral} addresses this via a new meta algorithm based on Online Mirror Descent (OMD) with the log-barrier regularizer and an increasing learning rate schedule, which together provides a negative term in $\meta$ large enough to (approximately) cancel $\base$.
However, the log-barrier regularizer unavoidably introduces $\poly(M)$ dependence in $\meta$.

Our ideas to address this issue are two-fold.
First, to make sure $\meta$ enjoys logarithmic dependence on $M$, 
we borrow the idea of the \expfour algorithm~\citep{SICOMP'02:Auer-EXP3}, which combines $M$ static experts (instead of learning algorithms) without paying polynomial dependence on $M$.
This is achieved by OMD with the negative entropy regularizer, plus a better loss estimator with lower variance for each expert.
In our case, this requires coming up with similar low-variance loss estimator for each base algorithm as well as updating each base algorithm no matter whether it is selected by the meta algorithm or not (in contrast, \corral only updates the selected base algorithm).
Without the log-barrier regularizer, however, we now cannot use the same increasing learning rate schedule as \corral to generate a large enough negative term to cancel $\base$.
To address this, our second main idea is to inject negative bias to the loss estimators (making term optimistic underestimators), with the goal of  generating a reasonably small positive bias in the regret and at the same time a large enough negative bias to cancel $\base$.
This idea is similar to that of~\citep{NIPS'20:AdaptingMisspec}, but they did not push it far enough and only improved \corral on logarithmic factors.
\paragraph{Related work.} 
Since the work of~\citet{COLT'17:Corral}, there have been several follow-ups in the same direction, either for adversarial environments~\citep{NIPS'20:AdaptingMisspec} or stochastic environments~\citep{NIPS'20:pacchiano2020model,ICML'21:dynamic-balance,AISTATS'21:corral-stochastic, ICML'21:misspecification}. 
The problem is also highly related to model selection in online learning with bandit feedback~\citep{NIPS'19ModelSelect,COLT'20:open-model-selection,NIPS'21Pareto}.

The optimal regret for adversarial linear bandits over a general $d$-dimensional set is $\otil(d\sqrt{T})$ \citep{NIPS'07:bandit-lower-bound, COLT'12:OptimalOLO}, but it becomes $\otil(\sqrt{dT})$ for the special case of $\ell_p$ balls with $p\in[1,2]$~\citep{ALT'18:hybrid-barrier}.
To the best of our knowledge, switching regret has not been studied for adversarial linear bandits, except for its special case of multi-armed bandits~\citep{SICOMP'02:Auer-EXP3,JMLR'10:AudibertB10}.
We discuss several natural attempts in \pref{appendix:potential-methods} to extend existing methods to linear bandits, but the best we can get is $\Ot(d\sqrt{ST})$ via combining the \exptwo algorithm~\citep{COLT'12:OptimalOLO} and the idea of uniform mixing~\citep{journals/ml/HerbsterW98,SICOMP'02:Auer-EXP3}.
On the other hand, our proposed approach achieves the optimal $\Ot(\sqrt{dST})$ regret.
In fact, our algorithm is also more computationally efficient as \exptwo requires log-concave sampling.

We assume a known and fixed $S$ in most places. Achieving the same result for \emph{all} $S$ simultaneously is known to be impossible for adaptive adversaries~\citep{NIPS'21Pareto}, and remains open for oblivious adversaries (our setting) even for the classic multi-armed bandit problem, so this is beyond the scope of this work.
We mention that, however, without knowing $S$ we can still achieve $\Ot(S\sqrt{dT})$ regret via a slightly different parameter tuning of our algorithm, or $\otil(\sqrt{dST}+T^{\nicefrac{3}{4}})$ regret via wrapping our algorithm with the generic Bandits-over-Bandits strategy of~\citet{cheung2019learning}.
As a final remark, note that for the easier stochastic environments, adapting to unknown $S$ without price has been shown possible; see~\citep{COLT'21:black-box} and references therein.

Regarding our extension to the unconstrained setting, while unconstrained online learning has been extensively studied in the full-information setting with gradient feedback since the work of~\citep{mcmahan2012no} (see e.g.~\citep{NIPS'13:unconstrainedEG,COLT'14:unconstrained-Hilbert,NIPS'15:Dylan-adaptive,COLT'17:Ashok,COLT'18:black-box-reduction}), as far as we know~\citep{NIPS'20:Dirk} is the only existing work considering the same with bandit feedback.
They consider static regret and propose a black-box reduction approach, taking inspiration from a similar reduction from the full-information setting~\citep{COLT'18:black-box-reduction}. 
We consider the more general switching regret, and our algorithm is also built on a similar reduction.

\section{Problem Setup and Notations}\label{sec: pre}

\paragraph{Problem setup.} 
While our idea is applicable to more general setting, for ease of discussions we focus on the adversarial linear bandit problem throughout the paper. 
Specifically, at the beginning of a $T$-round game, an adversary (knowing the learner's algorithm) secretly chooses a sequence of linear loss functions parametrized by $\ell_1,\ldots,\ell_T \in \R^d$. 
Then, at each round $t \in [T]$, the learner makes a decision by picking a point (also called action) $x_t$ from a known feasible domain $\X \subseteq \R^d$, and subsequently suffers and observes the loss $\ell_t^\T x_t$. 
Note that $\ell_t^\T x_t$ is the only feedback on $\ell_t$ revealed to the learner.
We measure the learner's performance via the \emph{switching regret}, defined as
\begin{equation}
    \label{eq:switching-regret-bound}
     \Reg(u_1,\ldots,u_T) \triangleq \sum_{t=1}^T \ell_t^\T x_t - \sum_{t=1}^T \ell_t^\T u_{t} = \sum_{k=1}^S \sum_{t \in \I_k} \ell_t^\T (x_t - \ucirc_{k}),
\end{equation}
where $u_1,\ldots,u_T \in \calX$ is a sequence of arbitrary comparators with $S-1$ switches for some known $S$ (that is, $\sum_{t=2}^T \mathbbm{1}\{u_{t-1} \neq u_t\} = S-1$) and $\I_1,\ldots,\I_S$ denotes a partition of $[T]$ such that for each $k$, $u_t$ remains the same (denoted by $\ucirc_k$) for all $t \in \I_k$.
Except for comparator-adaptive bounds discussed in \pref{sec: unconstrained}, our results have no explicit dependence on $\ucirc_1, \ldots, \ucirc_S$ other than the number $S$, so we often use $\RegS$ as a shorthand for $\Reg(u_1,\ldots,u_T)$.
The classic static regret is simply $\Reg_1$ (that is, competing with a fixed comparator throughout), which we also simply write as $\Reg$.

\paragraph{Notations.} For any integer $n$, we denote by $[n]$ the set $\{1,2,\ldots,n\}$, and $\Delta_n$ the simplex $\{ p \in \R_{\geq 0}^n \mid \sum_{i=1}^n p_i=1\}$.
We use $e_i$ to denote the standard basis vector (of appropriate dimension) with the $i$-th coordinate being 1 and others being 0. 
Given a vector $x \in \R^d$, its $\ell_p$ norm is defined by $\norm{x}_p = (\sum_{n=1}^d |x_n|^p)^{1/p}$. $\E_t[\cdot]$ denotes the conditional expectation given the history before round $t$.  The $\Ot(\cdot)$ notation omits the logarithmic dependence on the time horizon $T$ and the dimension $d$. For a differential convex function $\psi: \R^d \mapsto \R$, the induced Bregman divergence is defined by $D_{\psi}(x,y) = \psi(x) - \psi(y) - \inner{\nabla \psi(y), x-y}$.
\section{Corralling a Larger Band of Bandits: A Recipe}\label{sec: corral-larger}

\begin{algorithm}[t]
\floatname{algorithm}{Protocol}
   \caption{Combining $M$ base algorithms in adversarial linear bandits}
   \label{proto:corral}
    \For{$t=1,\ldots,T$}{
        Each base algorithm $\calB_i$ submits an action $\wt{a}_t^{(i)}\in \calX$ to the meta algorithm, for all $i\in[M]$.
        
        Meta algorithm selects $x_t$ such that $\mathbb{E}_t[x_t]=\sum_{i\in [M]}p_{t,i}\wt{a}_{t}^{(i)}$ for some distribution $p_t\in\Delta_M$.
        
        Play $x_t$ and receive feedback $\ell_t^\top x_t$.
        
        Construct base loss estimator $\ellhat_t\in\R^d$ and meta loss estimator $\costhat_t\in\mathbb{R}^M$.
        
        Base algorithms $\{\calB_i\}_{i=1}^M$ update themselves based on the base loss estimator $\ellhat_t$.

        Meta algorithm updates the weight $p_{t+1}$ based on $p_t$ and the meta loss estimator $\costhat_t$.
    }   
\end{algorithm}

In this section, we describe our general recipe to corral a large set of bandit algorithms.
We start by showing a general and natural protocol of such a meta-base framework in \pref{proto:corral}.
Specifically, suppose we maintain $M$ base algorithms $\{\calB_i\}_{i=1}^M$.
At the beginning of each round, each base algorithm $\calB_i$ submits its own action $\wt{a}_t^{(i)}\in \calX$ to the meta algorithm, which then decides the final action $x_t$ with expectation $\sum_{i\in [M]}p_{t,i}\wt{a}_{t}^{(i)}$ for some distribution $p_t\in\Delta_M$ specifying the importance/quality of each base algorithm.
After playing $x_t$ and receiving the feedback $\ell_t^\top x_t$,
we construct \emph{base loss estimator} $\ellhat_t\in\R^d$ and \emph{meta loss estimator} $\costhat_t\in\mathbb{R}^M$.
As their name suggests, base loss estimator estimates $\ell_t$ and is used to update each base algorithm, while meta loss estimator estimates $A_t^\top\ell_t$,
where the $i$-th column of $A_t \in \R^{d\times M}$ is $\wt{a}_t^{(i)}$,
and is used to update the meta algorithm to obtain the next distribution $p_{t+1}\in\Delta_M$.

In the following, we formalize the high-level idea discussed in \pref{sec: intro}.
For simplicity, we focus on the static regret $\Reg$ in this discussion (that is, $S=1$) and let $u$ be the fixed comparator.
The first step is to decompose the regret into two parts as mentioned in \pref{sec: intro}:
as long as $\ellhat_t$ and $\costhat_t$ are unbiased estimators (that is, $\mathbb{E}_t[\ellhat_t]=\ell_t$ and $\mathbb{E}_t[\costhat_t]=A_t^\top\ell_t$), one can show:
\begin{align}\label{eqn:decompose-corral}
\forall j\in[M],\;\;  \mathbb{E}[\Reg] = \underbrace{\mathbb{E}\Bigg[\sum_{t=1}^T\inner{p_t-e_j, \costhat_t}\Bigg]}_{\meta} + \underbrace{\mathbb{E}\Bigg[\sum_{t=1}^T\inner{\wt{a}_t^{(j)}-u, \ellhat_t}\Bigg]}_{\base}.
\end{align}
Controlling $\base$ is the key challenge.
Indeed, even if the base algorithm enjoys a good regret guarantee when running on its own, it might not ensure the same guarantee any more in this meta-base framework because it cannot fully control the final action and observe the feedback it needs.
At a technical level, this is reflected in a larger variance of $\ellhat_t$ due to the randomness from the meta algorithm, which then ruins the base algorithm's original regret guarantee.

As mentioned, the way \corral~\citep{COLT'17:Corral} addresses this issue is by using  OMD with the log-barrier regularizer and increasing learning rates as the meta algorithm, which ensures that $\meta$ is at most $\otil(\sqrt{MT})$ plus some \emph{negative} term large enough to cancel the prohibitively large part of \base.
The $\poly(M)$ dependence in their approach is unavoidable because they treat the problem that the meta algorithm is facing as a classic multi-armed bandit problem and ignores the fact that information can be shared among different base algorithms.
The recent follow-up \citep{NIPS'20:AdaptingMisspec} shares the same issue.

Instead, we propose the following idea.
We use OMD with entropy regularizer (a.k.a. multiplicative weights update) as the meta algorithm to update $p_{t+1}$, usually in the form $p_{t+1, i}\propto p_{t,i}e^{-\epsilon\costhat_{t,i}}$ where $\epsilon > 0$ is some learning rate.
This first ensures that the so-called regularization penalty term in $\meta$ is of order $\frac{\log M}{\epsilon}$ instead of $\frac{M}{\epsilon}$ as in \corral.
To control the other so-called stability term in $\meta$, the estimator $\costhat_t$ has to be constructed in a way with low variance, but we defer the discussion and first look at how to control \base in this case.
Since we are no longer using the log-barrier regularizer of \corral, a different way to generate a large negative term in $\meta$ to cancel $\base$ is needed.
To this end, we propose to inject a (negative) bias $b_t\in \mathbb{R}_{+}^M$ to the meta loss estimator $\costhat_t$, making it an optimistic underestimator.
More specifically, introduce another notation $\cost_t$ for some unbiased estimator of $A_t^\top\ell_t$. Then the adjusted meta loss estimator is defined as 
$\costhat_t=\cost_t-b_t$. 
Since $\costhat_t$ is biased now, the decomposition~\eqref{eqn:decompose-corral} needs to be updated accordingly as
\begin{align*}
\mathbb{E}[\Reg] = \underbrace{\mathbb{E}\Bigg[\sum_{t=1}^T\inner{p_t-e_j, \costhat_t}\Bigg]}_{\meta} + \underbrace{\mathbb{E}\Bigg[\sum_{t=1}^T\inner{\wt{a}_t^{(j)}-u, \ellhat_t}\Bigg]}_{\base}
+ \underbrace{\mathbb{E}\Bigg[\sum_{t=1}^T\inner{p_t, b_t}\Bigg]}_{\posbias}
- \underbrace{\mathbb{E}\Bigg[\sum_{t=1}^T\inner{e_j, b_t}\Bigg]}_{\negbias}.
\end{align*}
Based on this decomposition, our goal boils down to designing good base and meta loss estimators such that the following three terms are all well controlled:
\begin{align}
    &\base - \negbias \leq \target, \label{eqn:cancel-stab} \\
    &\posbias \leq \target,\label{eqn:pos-term-bias}\\
    &\meta \leq \target. \label{eqn:master_reg} 
\end{align}
Here, $\target$ represents the final targeted regret bound with logarithmic dependence on $M$ and usually $\sqrt{T}$-dependence on $T$ (such as $\otil(\sqrt{dST\log M})$ for our main application of switching regret discussed in \pref{sec: lp_ball}).

\paragraph{A recipe.}
We are now ready to summarize our recipe in the following three steps.
\begin{itemize}[leftmargin=*]
  \setlength\itemsep{0em}
\item \textbf{Step 1.}  Start from designing $\ellhat_t$, which often follows similar ideas of the original base algorithm.

\item \textbf{Step 2.} Then, by analyzing \base with such a base loss estimator, figure out what $b_t$ needs to be in order to ensure \pref{eqn:cancel-stab} and \pref{eqn:pos-term-bias} simultaneously.

\item \textbf{Step 3.} Finally, design $\cost_t$ to ensure \pref{eqn:master_reg}. As mentioned in \pref{sec: intro}, this is a problem similar to combining static experts as in the \expfour algorithm~\citep{SICOMP'02:Auer-EXP3}, and the key is to ensure that $\cost_t$ allows information sharing between base algorithms and enjoys low variance.
A natural choice is $\cost_{t,i}=\langle\wt{a}_t^{(i)}, \ellhat_t\rangle$, which is exactly what \expfour does and works in the toy example we show below, but sometimes one needs to replace $\ellhat_t$ with yet another better unbiased estimator of $\ell_t$, which turns out to be indeed the case for our main example in \pref{sec: lp_ball}.
\end{itemize}

\paragraph{A toy example.}
Now, we provide a warm-up example to show how to successfully apply our three-step recipe to the multi-armed bandit problem.
We note that this example does \emph{not} really lead to meaningful applications, as in the end we are simply combining different copies of the exact same algorithm.
Nevertheless, this serves as a simple and illustrating exercise to execute our recipe, paving the way for the more complicated scenario to be discussed in the next section.

Specifically, in multi-armed bandit, we have $\calX = \Delta_d$ and $\ell_t\in[0,1]^d$ for all $t\in[T]$, and we set the target to be $\target = \otil(\sqrt{dT\log M})$ (optimal up to logarithmic factors).
The meta algorithm is as specified before (multiplicative weights update).	
For the base algorithm, we choose a slight variant of the classic \expthree algorithm~\citep{SICOMP'02:Auer-EXP3}, so that $\wt{a}_{t+1}^{(i)}=\argmin_{a\in \Delta_d\cap [\eta, 1]^d}\big\{\langle a, \ellhat_t\rangle+\frac{1}{\eta}D_{\psi}(a,a_t^{(i)})\big\}$, where 
$\eta > 0$ is a clipping threshold (and also a learning rate) and $\psi(a)=\sum_{n=1}^da_n\log a_n$ is the negative entropy.
Given $q_t = \sum_{i=1}^{M}p_{t,i}\wt{a}_{t}^{(i)} \in \Delta_d$, the meta algorithm naturally samples an arm $n_t \in [d]$ according to $q_t$, meaning $x_t = e_{n_t}$.

\paragraph{Step 1.} With the feedback $\ell_t^\top x_t = \ell_{t,n_t}$, 
following \expthree we let the base loss estimator be the standard importance-weighted estimator: 
$\ellhat_{t}=\frac{\ell_{t,n_t}}{q_{t,n_t}}x_t $, which is clearly unbiased with $\mathbb{E}_t[\ellhat_t]=\ell_t$.

\paragraph{Step 2.}
By standard analysis (e.g.~\citep[Theorem 3.1]{bubeck2012regret}), 
$\base$ is at most $\eta dT+ \frac{\log d}{\eta} + \eta\mathbb{E}\big[\sum_{t=1}^T\sum_{n=1}^d\wt{a}_{t,n}^{(j)}\ellhat_{t,n}^2\big]$.
Since $\mathbb{E}_t[\ellhat_{t,n}^2] = \frac{\ell_{t,n}^2}{q_{t,n}}$,
the last term is further bounded by $\eta \mathbb{E}\big[\sum_{t=1}^T\sum_{n=1}^d\nicefrac{\wt{a}_{t,n}^{(j)}}{q_{t,n}}\big]$.
This is exactly the problematic stability term that can be prohibitively large.
We thus directly define the bias term $b_{t,j}$ as $\eta \sum_{n=1}^d\nicefrac{\wt{a}_{t,n}^{(j)}}{q_{t,n}}$,
so that $\base - \negbias$ is simply bounded by $\eta dT + \frac{\log d}{\eta}$.
Picking the optimal $\eta$ ensures \pref{eqn:cancel-stab}.
On the other hand, $\posbias$ happens to be small as well: $\posbias= \mathbb{E}\big[\sum_{t=1}^T\langle p_{t}, b_{t} \rangle\big] 
=\eta\mathbb{E}\big[\sum_{t=1}^T\sum_{i=1}^M p_{t,i} \sum_{n=1}^d\nicefrac{\wt{a}_{t,n}^{(i)}}{q_{t,n}} \big] = \eta\mathbb{E}\big[\sum_{t=1}^T \sum_{n=1}^d\nicefrac{q_{t,n}}{q_{t,n}} \big] = \eta dT$,
ensuring \pref{eqn:pos-term-bias}.

\paragraph{Step 3.}
Finally, we use the natural meta loss estimator $\cost_{t,i}=\langle\wt{a}_t^{(i)}, \ellhat_t\rangle$.
Since $q_{t,n}\geq \eta$ due to the clipping threshold and thus $0 \leq b_{t,i} \leq 1$ and $\costhat_{t,i} \geq -1$ (that is, not too negative), 
standard analysis shows $\meta \leq \frac{\log M}{\metaLR} + \metaLR\mathbb{E}\big[\sum_{t=1}^T\sum_{i=1}^Mp_{t,i}\costhat_{t,i}^2\big]$,
with the last term further bounded by $2\metaLR\mathbb{E}\big[\sum_{t=1}^T\sum_{i=1}^M (p_{t,i}\cost_{t,i}^2 + p_{t,i}b_{t,i}^2)\big] \leq 4\epsilon dT$.
Picking the optimal $\epsilon$ in the final bound $\meta \leq \frac{\log M}{\metaLR} + 4\epsilon dT$ then ensures \pref{eqn:master_reg}. 
This concludes our example and shows that our recipe indeed enjoys logarithmic dependence on $M$ in this case, which \corral fails to achieve.

\section{Optimal Switching Regret for Linear Bandits over $\ell_p$ Balls}
\label{sec: lp_ball}

As the main application in this work, we now discuss how to apply our recipe to achieve the optimal switching regret for adversarial linear bandits over $\ell_p$ balls.
In this problem, the feasible domain is an $\ell_p$ unit ball for some $p\in (1,2]$, namely, $\calX=\{x \in \mathbb{R}^d \mid \|x\|_p\leq 1\}$,
and each $\ell_t$ is assumed to be from the dual $\ell_q$ unit ball with $q = p/(p-1)$, such that $\abs{\ell_t^\T x} \leq 1$ for all $x \in \X$ and $t \in [T]$.
\citet{ALT'18:hybrid-barrier} show that the optimal regret in this case is $\Theta(\sqrt{dT})$, which is better than the general linear bandit problem by a factor of $\sqrt{d}$.
This implies that the optimal switching regret for this problem is $\Omega(\sqrt{dST})$ --- indeed, simply consider the case where $\I_1, \ldots, \I_S$ is an even partition of $[T]$ and the adversary forces the learner to suffer $\Omega(\sqrt{d |\I_k|}) = \Omega(\sqrt{d T/S})$ regret on each interval $\I_k$ by generating a new worst case instance regarding the static regret.
Therefore, our target regret bound here is set to $\target = \otil(\sqrt{dST})$.
We remind the reader that this problem has not been studied before and that in \pref{appendix:potential-methods}, we discuss other potential approaches and why none of them is able to achieve this goal.

The pseudocode of our final algorithm is shown in \pref{alg:ell-p-ball}. 
At a high-level, it is simply following the standard idea in the literature on obtaining switching regret, that is, maintain a set of $M=T$ base algorithms with static regret guarantees, the $t$-th of which $\calB_t$ starts learning from time step $t$ (before time $t$, one pretends that $\calB_t$ picks the same action as the meta algorithm).
If the meta algorithm itself enjoys a switching regret guarantee,\footnote{We point out that in the full-information setting, even a certain static regret guarantee from the meta algorithm is enough, but a switching regret guarantee is needed in the bandit setting for technical reasons.} then by competing with $\calB_{j_k}$ on interval $\I_k$ where $j_k$ is the first time step of $\I_k$ so that $\calB_{j_k}$ enjoys a (static) regret guarantee on $\I_k$,
the overall algorithm enjoys a switching regret for the original problem.
While this is a standard and simple idea, applying it to the bandit setting was not possible before our work due to the large number of base algorithms ($T$) needed to be combined.
Our approach, however, is able to overcome this with logarithmic dependence on $M$, making it the first successful execution of this long-standing idea in bandit problems.

\paragraph{Base algorithm overview.}
Our base algorithm is naturally the one proposed in~\citep{ALT'18:hybrid-barrier} that achieves $\otil(\sqrt{dT})$ static regret.\footnote{To be more accurate, the version we present here is a slightly simpler variant with the same guarantee.} 
Specifically, let $\calX'=\{x \mid \|x\|_p\leq 1-\gamma\}$ for some clipping parameter $\gamma$ be a slightly smaller ball.
At each round $t$, each base algorithm $\B_i$ (for $i \leq t$) has a vector $a_t^{(i)} \in \calX'$ at hand.
Then, it generates a Bernoulli random variable $\xi_t^{(i)}$ with mean $\|a_t^{(i)}\|_p$. If $\xi_t^{(i)}=0$, then its final decision $\wt{a}_t^{(i)}$ is uniformly sampled from $\{ \pm e_n \}_{n=1}^d$; otherwise, $\wt{a}_t^{(i)}=a_t^{(i)}/\|a_t^{(i)}\|_p$. Next, $\B_i$ submits $(\wt{a}_t^{(i)}, a_t^{(i)}, \xi_t^{(i)})$ to the meta algorithm. After receiving the base loss estimator $\ellhat_t$ (to be specified later), $\B_i$ updates $a_{t+1}^{(i)}$ using OMD with the regularizer $R(a)=-\ln(1-\|a\|_p^p)$, that is,
$a_{t+1}^{(i)} = \argmin_{a\in \calX'}\big\{ \langle a,\ellhat_t \rangle+\frac{1}{\baseLR}D_R(a,a_t^{(i)})\big\}$ for some learning rate $\eta > 0$.
We defer the pseudocode \pref{alg:ell-p-ball-base} to \pref{appendix:lp_ball_base}.

\begin{algorithm}[!t]
   \caption{Algorithm for adversarial linear bandits over $\ell_p$ balls with optimal switching regret}
   \label{alg:ell-p-ball}
    \textbf{Input:} 
    clipping parameter $\gamma$, base learning rate $\eta$, meta learning rate $\metaLR$, mixing rate $\mu$, exploration parameter $\beta$, bias coefficient $\lambda$, initial uniform distribution ${p}_1\in \Delta_T$.
    
   \For{$t=1, \ldots, T$}{
      \nl Start a new base algorithm $\B_t$, which is an instance of~\pref{alg:ell-p-ball-base} with learning rate $\eta$, clipping parameter $\gamma$, and initial round $t$.\label{line:initialize base}
      
      \nl Receive local decision $(\wt{a}_t^{(i)},a_t^{(i)},\xi_t^{(i)})$ from base algorithm $\B_i$ for each $i \leq t$. \label{line:receive local}
      
      \nl Compute the renormalized distribution $\wh{p}_t \in \Delta_t$ such that $\wh{p}_{t,i}\propto {p}_{t,i}$ for $i\in[t]$. \label{line:renormalize}
      
      \nl Sample a Bernoulli random variable $\rho_t$ with mean $\beta$. 
      If $\rho_t=1$, uniformly sample $x_t$ from $\{\pm e_n\}_{n=1}^d$;
      otherwise, sample $i_t \in [t]$ according to $\wh{p}_{t}$, and set $x_t = \tilde{a}_t^{(i_t)}$ and $\xi_t = \xi_t^{(i_t)}$. \label{line:make prediction}

      \nl Make the final decision $x_t$ and receive feedback $\ell_t^\top x_t$.

      \nl Construct the base loss estimator $\wh{\ell}_t \in \R^d$ as follows and send it to all base algorithms $\{\calB_i\}_{i=1}^t$: \label{line:estimator base}
      \begin{align}\label{eq:base-estimator}
          \wh{\ell}_t=\frac{\indi\{\rho_t=0\}\indi\{\xi_t=0\}}{1-\beta}\cdot \frac{d(\ell_t^\T x_t)}{1-\sum_{i=1}^t \wh{p}_{t,i}\|a_{t}^{(i)}\|_p} \cdot x_t.
      \end{align}
           

      \nl Construct another loss estimator $\bar{\ell}_t \in \R^d$ as
      \begin{align}\label{eq:meta-estimator}
          \bar{\ell}_t=\wt{M}_t^{-1}x_tx_t^\top\ell_t,
      \end{align}
      where $\wt{M}_t=\frac{\beta}{d}\sum_{n=1}^de_ne_n^\top+(1-\beta)\sum_{i=1}^t \wh{p}_{t,i}\wt{a}_t^{(i)}\wt{a}_t^{(i)^\top}$. \label{line:estimator meta pre}

      \nl Construct the meta loss estimator $\costhat_t\in \mathbb{R}^T$ as: 
      \begin{align}\label{eqn:meta_loss_estimator}
          \costhat_{t,i} = \begin{cases}
                   \innersmall{\wt{a}_t^{(i)}, \bar{\ell}_t}-b_{t,i}, &\text{$i \leq t$,}\\
                      \sum_{j=1}^t \wh{p}_{t,j}\costhat_{t,j}, &\text{$i > t$,}
          \end{cases}
          \quad\text{ where }\;
          b_{t,i}=\frac{1}{\Correct T(1-\beta)}\frac{1-\|a_t^{(i)}\|_p}{1-\sum_{j=1}^t \wh{p}_{t,j}\|a_t^{(j)}\|_p}.
      \end{align}\label{line:estimator meta}
      
      \nl Meta algorithm updates the weight ${p}_{t+1}\in\Delta_T$ according to \label{line:meta update}
        \begin{align}\label{eqn:meta-strategy-update}
        {p}_{t+1,i} = (1-\mu)\frac{{p}_{t,i} \exp(-\metaLR \costhat_{t,i})}{\sum_{j=1}^T {p}_{t,j} \exp(-\metaLR \costhat_{t,j})} + \frac{\mu}{T}, \quad\forall i \in [T].
        \end{align}

      }
\end{algorithm}

\paragraph{Meta algorithm overview.}
The meta algorithm maintains the distribution $p_t \in \Delta_T$ again via multiplicative weights update, but since a switching regret guarantee is required as mentioned, a slight variant studied in~\citep{SICOMP'02:Auer-EXP3} is needed which mixes the multiplicative weights update with a uniform distribution: $p_{t+1,i} = (1-\mu)\frac{{p}_{t,i} \exp(-\metaLR \costhat_{t,i})}{\sum_{j=1}^T {p}_{t,j} \exp(-\metaLR \costhat_{t,j})} + \frac{\mu}{T}$ for some mixing rate $\mu$, learning rate $\metaLR$, and meta loss estimator $\costhat_t$ (to be specified later).
As mentioned, at time $t$, all base algorithm $\calB_i$ with $i > t$ should be thought of as making the same decision as the meta algorithm,
so in a sense we are looking for an action $\wt{x}_t$ such that $\wt{x}_t = \sum_{i=1}^t p_{t,i} \wt{a}_t^{(i)} + \sum_{i=t+1}^T p_{t,i} \wt{x}_t$, or equivalently $\wt{x}_t = \sum_{i=1}^t \wh{p}_{t,i} \wt{a}_t^{(i)}$ with a distribution $\wh{p}_{t} \in \Delta_t$ satisfying $\wh{p}_{t,i} \propto p_{t,i}$.
Combining this with some extra exploration for technical reasons, the final decision $x_t$ of our algorithm  is decided as follows:
sample a Bernoulli random variable $\rho_t$ with mean $\beta$ (a small parameter);
if $\rho_t = 1$, then $x_t$ is uniformly sampled from $\{ \pm e_n\}_{n=1}^d$,
otherwise $x_t$ is sampled from $\wt{a}_t^{(1)}, \ldots, \wt{a}_t^{(t)}$ according to the distribution $\wh{p}_{t}$.
See \pref{line:renormalize}, \pref{line:make prediction}, and \pref{line:meta update}.
We are now ready to follow the three steps of our recipe to design the loss estimators.

\paragraph{Step 1.}
The design of the base loss estimator $\ellhat_t$ mostly follows~\citep{ALT'18:hybrid-barrier}, except for the extra consideration due to the sampling scheme of the meta algorithm (\pref{line:make prediction}).
The final form is shown in~\pref{eq:base-estimator},
and direction calculation verify its unbiasedness $\mathbb{E}_t[\ellhat_t]=\ell_t$ (see~\pref{lem:unbiasedness}).

\paragraph{Step 2.}
With $\ellhat_t$ fixed, for an interval $\I_k$, we analyze the static regret of $\calB_{j_k}$ on this interval (recall that $j_k$ is the first time step of $\I_k$), mostly following the analysis of~\citep{ALT'18:hybrid-barrier}. 
This corresponds to $\base$ (since we have moved from static regret to switching regret).
More concretely, in \pref{lem:base-reg} we show for some universal constant $\constone > 0$:
\begin{align*}
    \mathbb{E}\Bigg[\sum_{t\in\calI_k}\inner{\wt{a}_t^{(j_k)}-\ucirc_k, \ellhat_t}\Bigg]\leq \frac{\log(1/\gamma)}{\eta}+\eta\constone\sum_{t\in\calI_k}\frac{1-\|a_t^{(j_k)}\|_p}{1-\sum_{j=1}^t \wh{p}_{t,j} \|a_t^{(j)}\|_p}.
\end{align*}
Again, the second term above is the prohibitively large term, and we thus define $b_{t,i}$ in the same form; see \pref{eqn:meta_loss_estimator}.
As long as the parameters are chosen such that $\eta\constone \leq \frac{1}{\lambda T(1-\beta)}$, $\base-\negbias$ is simply bounded by $\frac{\log(1/\gamma)}{\eta}$, and \pref{eqn:cancel-stab} can be ensured.
Direct calculation shows that with such a bias term $b_{t,i}$, $\posbias$ is also small enough to ensure \pref{eqn:pos-term-bias}; see \pref{appendix:bound-part1}.

\paragraph{Step 3.}
Finally, it remains to design unbiased loss estimator $\cost_{t,i}$ and finalize the meta loss estimator $\costhat_{t,i}$.
As mentioned, a natural choice would be $\cost_{t,i}=\langle \wt{a}_t^{(i)}, \ellhat_t \rangle$.
However, despite its unbiasedness, it turns out to suffer a large variance in this case and cannot lead to a favorable guarantee for $\meta$.
To address this issue, we introduce yet another unbiased loss estimator $\ellbar_t$ for $\ell_t$, defined in~\pref{eq:meta-estimator}, which follows standard idea from the general linear bandit literature (see for example the \exptwo algorithm of~\citep{COLT'12:OptimalOLO}).
With that, $\cost_{t,i}$ is defined as $\langle \wt{a}_t^{(i)}, \ellbar_t \rangle$ instead, which now has a small enough variance.
We find it intriguing that using different unbiased loss estimators  ($\ellhat_t$ for base algorithms and $\ellbar_t$ for the meta algorithm) for the same quantity $\ell_t$ appears to be necessary for this problem.
As the final piece of the puzzle, we set $\costhat_{t,i} = \cost_{t,i} - b_{t,i}$ for $i \leq t$ as our recipe describes, and for $i > t$, recall that these base algorithms are thought of as making the same prediction of the meta algorithm, thus we set $\costhat_{t,i} = \sum_{j=1}^t \wh{p}_{t,j}\costhat_{t,j}$; see \pref{eqn:meta_loss_estimator}.
This ensures an important property $\langle p_t, \costhat_t\rangle = \sum_{i\leq t}\wh{p}_{t,i} \costhat_{t,i}$, which we use to finally prove that $\meta$ is small enough to ensure \pref{eqn:master_reg} (see \pref{lemma:meta-final}).

This concludes the description of our entire algorithm.
We formally prove in \pref{appendix:lp_ball} that our algorithm enjoys the following switching regret guarantee.
\begin{theorem}
\label{thm:ell-p-switching-regret}
Define $C=\sqrt{p-1}\cdot 2^{-\frac{2}{p-1}}$. With parameters  $\gamma = 4C\sqrt{\frac{dS}{T}}$, $\eta=C\sqrt{\frac{S}{dT}}$, $\metaLR=\min\big\{\sqrt{\frac{S}{dT}},\frac{1}{16d},\frac{C^2}{2}\big\}$, $\mu = \frac{1}{T}$, $\beta=8d\metaLR$, and $\Correct = \frac{C}{\sqrt{dST}}$, \pref{alg:ell-p-ball} guarantees
\begin{align*}
    \E[\RegS] = \E\left[ \sum_{t=1}^T \ell_t^\T x_t - \sum_{t=1}^T \ell_t^\T u_{t}\right] =\otil\left(\sqrt{dST}\right),
\end{align*}
where $u_1,\ldots,u_T\in \calX$ are arbitrary comparators such that $\sum_{t=2}^T\indi\{u_{t-1}\ne u_t\}\leq S-1$.
\end{theorem}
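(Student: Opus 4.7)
The plan is to execute the three-step recipe of Section~3 with the algorithm above, translated from static to switching regret. Using unbiasedness of $\ellhat_t$ (verified by the referenced unbiasedness lemma) and the identity $\langle p_t,\costhat_t\rangle=\sum_{i\leq t}\wh{p}_{t,i}(\cost_{t,i}-b_{t,i})$, which follows from the self-referential definition of $\costhat_{t,i}$ for $i>t$ together with $\wh{p}_{t,i}\propto p_{t,i}$, I decompose $\E[\RegS]$ as
\begin{align*}
\E[\RegS]=\underbrace{\sum_{k=1}^S\E\bigg[\sum_{t\in\I_k}\innersmall{p_t-e_{j_k},\costhat_t}\bigg]}_{\meta}+\underbrace{\sum_{k=1}^S\E\bigg[\sum_{t\in\I_k}\innersmall{\wt{a}_t^{(j_k)}-\ucirc_k,\ellhat_t}\bigg]}_{\base}+\posbias-\negbias+O(\beta T),
\end{align*}
where $j_k$ denotes the first index of $\I_k$, $\posbias=\E[\sum_t\sum_{i\leq t}\wh{p}_{t,i}b_{t,i}]$, $\negbias=\sum_k\E[\sum_{t\in\I_k}b_{t,j_k}]$, and the $O(\beta T)$ term accounts for the pure-exploration rounds $\rho_t=1$ (since $|\ell_t^\T x_t|\leq 1$ always).

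Applying the per-interval base-regret lemma stated in the excerpt to each $\I_k$ and summing yields $\base\leq S\log(1/\gamma)/\eta+\eta\constone\cdot\Correct T(1-\beta)\cdot\negbias$; the parameter choice $\eta\constone\leq 1/(\Correct T(1-\beta))$, which holds with the stated tunings, therefore gives $\base-\negbias\leq S\log(1/\gamma)/\eta$. For \posbias, the numerators and denominators in the definition of $b_{t,i}$ collapse $\sum_{i\leq t}\wh{p}_{t,i}b_{t,i}$ to the constant $1/(\Correct T(1-\beta))$ at every round, so $\posbias\leq 1/(\Correct(1-\beta))$.

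The main obstacle --- step~3 of the recipe --- is controlling \meta. Here I invoke the switching-regret guarantee of multiplicative weights with uniform mixing \citep{SICOMP'02:Auer-EXP3}, which gives $\meta\leq S\log(T/\mu)/\metaLR+\metaLR\E[\sum_t\sum_i p_{t,i}\costhat_{t,i}^2]+\mu T$ \emph{provided} $|\metaLR\costhat_{t,i}|\leq 1$. Verifying the range condition uses $\wt{M}_t\succeq(\beta/d)I$ --- a consequence of the pure exploration of rate $\beta$ --- which forces $|\cost_{t,i}|\leq\|\wt{a}_t^{(i)}\|_1\|\ellbar_t\|_\infty=O(d/\beta)$, combined with $b_{t,i}=O(1/(\Correct T\beta))$ and the choice $\beta=8d\metaLR$. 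Bounding the variance uses the decomposition $\costhat_{t,i}^2\leq 2\cost_{t,i}^2+2b_{t,i}^2$ together with the key identities $\E_t[\ellbar_t\ellbar_t^\T]\preceq\wt{M}_t^{-1}$ and $(1-\beta)\sum_i\wh{p}_{t,i}\wt{a}_t^{(i)}\wt{a}_t^{(i)\T}\preceq\wt{M}_t$; these combine to give $\E_t[\sum_i p_{t,i}\cost_{t,i}^2]=O(d)$, while the $b_{t,i}^2$ contribution is absorbed into lower-order terms by the choice of $\Correct$.

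Assembling the four bounds yields
\begin{align*}
\E[\RegS]\lesssim\frac{S\log(1/\gamma)}{\eta}+\frac{1}{\Correct}+\frac{S\log T}{\metaLR}+\metaLR dT+\mu T+\beta T,
\end{align*}
and plugging in the stated tunings $\gamma,\eta,\metaLR,\mu,\beta,\Correct$ balances every term at $\Ot(\sqrt{dST})$. The central difficulty lies in the stability analysis of the meta algorithm: the optimistic (negative) bias $b_{t,i}$ would destroy the standard multiplicative-weights range condition without the $\beta$-exploration, while the ``two-loss-estimator'' trick of using $\ellbar_t$ inside $\cost_{t,i}$ --- rather than the naive $\langle\wt{a}_t^{(i)},\ellhat_t\rangle$, which would inflate the stability term from $O(d)$ to $O(d^2)$ --- is exactly what keeps the recipe closing. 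The tight coupling between $\beta$, $\metaLR$, and $\Correct$ is essential throughout.
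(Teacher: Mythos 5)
Your proposal follows the same architecture as the paper's proof: the five-term decomposition into \metareg, \basereg, \posterm, \negterm, and a deviation term; the per-interval base-regret bound from~\pref{lem:base-reg} cancelled by the injected negative bias; the uniform-mixing multiplicative-weights bound for the meta; and the substitution of $\ellbar_t$ for $\ellhat_t$ inside $\cost_{t,i}$ to tame the variance. All of the essential ideas are present.

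One step as written, however, does not close. To verify the range condition needed for $\exp(-x)\leq 1-x+x^2$, you bound $|\cost_{t,i}|$ by the \Holder pairing $\|\wt{a}_t^{(i)}\|_1\|\ellbar_t\|_\infty$ and claim this is $O(d/\beta)$. But $\|\wt{a}_t^{(i)}\|_1$ can be as large as $d^{1-1/p}$ (e.g.\ $\sqrt{d}$ for $p=2$), while $\|\ellbar_t\|_\infty\leq\|\wt{M}_t^{-1}x_t\|_2\leq d/\beta$ because $\wt{M}_t\succeq(\beta/d)I$; the product is thus $O(d^{2-1/p}/\beta)$. With $\beta=8d\metaLR$ this yields $\metaLR|\cost_{t,i}|\lesssim d^{1-1/p}$, which is \emph{not} $O(1)$ and does not satisfy the required $\metaLR\max_i|\costhat_{t,i}|\leq\frac{1}{2}$. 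The paper instead pairs $\|\wt{a}_t^{(i)}\|_p\|\wt{M}_t^{-1}x_t\|_q\leq\|\wt{M}_t^{-1}x_t\|_2\leq(d/\beta)\|x_t\|_p\leq d/\beta$ (using $p\leq 2\leq q$, $\|\wt{a}_t^{(i)}\|_p\leq1$, and $\|x_t\|_p\leq1$), which gives the tight $\metaLR d/\beta=1/8$. You should replace the $\ell_1/\ell_\infty$ pairing by the $\ell_p/\ell_q$ one. A second, more minor, slip: your assembled bound contains $\beta T$ from the exploration rounds but omits the $\gamma T$ contribution from shifting the comparators into the clipped domain $\calX'$; with $\gamma=4C\sqrt{dS/T}$ this is another $\otil(\sqrt{dST})$ term, so it changes nothing in the end, but it should appear in the list.
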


We point out again that this is the first optimal switching regret guarantee for linear bandits over $\ell_p$ balls with $p\in(1,2]$, demonstrating the importance of our new corralling method.

\paragraph{Extensions to smooth and strongly convex domain.}
Our ideas and results can be generalized to adversarial linear bandits over any smooth and strongly convex set, a setting studied in~\citep{arxiv'21:uniform-convex}.
Specifically, for a smooth and strongly convex set containing the $\ell_p$ unit ball and contained by the dual $\ell_q$ unit ball (for some $p \in (1,2]$), our algorithm achieves $\otil\big(d^{\nicefrac{1}{p}}\sqrt{ST}\big)$ switching regret.
We defer all details to \pref{appendix: strongly cvx set}.

\section{Extension to Unconstrained Linear Bandits}\label{sec: unconstrained}
In this section, we further extend our results on linear bandits to the \emph{unconstrained} setting, that is, $\X = \R^d$, which means
both the learner's decisions $\{x_t\}_{t=1}^T$ and the comparators $\{u_t\}_{t=1}^T$ can be chosen arbitrarily in $\R^d$. 
The loss vectors are assumed to have bounded $\ell_2$ norm: $\norm{\ell_t}_2 \leq 1$ for all $t \in [T]$. 
As mentioned, \citep{NIPS'20:Dirk} is the only existing work considering the same setting.
They study static regret and achieve a comparator-adaptive bound $\E[\Reg] = \otil(\|u\|_2\sqrt{dT})$ simultaneously for all $u$ (the fixed comparator).\footnote{The actual bound stated in~\citep{NIPS'20:Dirk} is actually $\otil(\|u\|_2 d\sqrt{T})$, but it is straightforward to see that it can be improved to $\otil(\|u\|_2\sqrt{dT})$ by picking the optimal linear bandit algorithm over $\ell_2$ balls in their reduction.
}
Building on our results in \pref{sec: lp_ball}, we generalize their static regret bound to switching regret and achieve a similar comparator-adaptive bound $\E[\Reg(u_1,\ldots,u_T)] = \otil\big(\max_{k\in [S]}\|\ucirc_k\|_2 \cdot \sqrt{dST}\big)$
simultaneously for all $u_1,\ldots,u_T$ with $S-1$ switches.

Instead of using our recipe and starting from scratch to solve this problem,
we directly make use of the reduction of~\citep{NIPS'20:Dirk} which reduces the unconstrained problem to the constrained counterpart (already solved by our \pref{alg:ell-p-ball}) plus another one-dimensional unconstrained problem; see \pref{sec:unconstrained-black-box}.
To solve the latter problem, in~\pref{sec:subroutine-OCO} we design a new unconstrained algorithm for general Online Convex Optimization (OCO) that enjoys a comparator-adaptive switching regret guarantee and might be of independent interest. 
Finally, we summarize the overall algorithm and provide the formal guarantees in~\pref{sec:unconstrained-overall}.

\subsection{Black-box reduction for switching regret of unconstrained linear bandits}
\label{sec:unconstrained-black-box}

The reduction of~\citep{NIPS'20:Dirk} takes heavy inspiration from~\citep{COLT'18:black-box-reduction}.
Specifically, suppose that we have two subroutines denoted by $\A_{\Z}$ and $\A_{\V}$: $\A_{\Z}$ is a constrained linear bandit algorithm over the $\ell_2$ ball $\Z =\{z \in \R^d \mid \|z\|_2\leq 1\}$ and 
$\A_{\V}$ is an unconstrained and one-dimensional online linear optimization algorithm with full-information feedback (in fact, in the one-dimensional linear case, there is no difference between full-information and bandit feedback).
Then, one can solve an unconstrained linear bandit problem as follows:
at each round $t \in [T]$, the learner makes the decision $x_t = v_t \cdot z_t$, where $z_t \in \Z$ is the direction returned by the constrained bandit algorithm $\A_{\Z}$, and $v_t \in \R$ is the scalar returned by the one-dimensional algorithm $\A_{\V}$. After observing the loss $\ell_t^\T x_t$, the learner then feeds $\ell_t^\T z_t = \frac{\ell_t^\T x_t}{v_t}$ to both $\A_{\Z}$ and $\A_{\V}$ so they can update themselves.
See \pref{alg:unconstrained-linear-bandits} (\pref{appendix:reduction-code}) for the pseudocode.

\citet{NIPS'20:Dirk} show that the static regret of such a reduction can be expressed using the regret of the two subroutines.
This can be directly generalized to switching regret, formally described below (see \pref{appendix:proof-unconstrained-reduction} for the proof).
\begin{lemma}
\label{lemma:unconstrained-decompose}
For an interval $\I \subseteq [T]$, let $\Reg^{\mathcal{V}}_{\I}(v) = \sum_{t \in \I} (v_t - v) \inner{z_t, \ell_t}$ be the regret of the unconstrained one-dimensional algorithm $\A_{\mathcal{V}}$ against a comparator $v \in \R$ on this interval, and similarly $\Reg^{\mathcal{Z}}_{\I}(z) = \sum_{t \in \I} \langle z_t - z, \ell_t\rangle$ be the regret of the constrained linear bandits algorithm $\A_{\mathcal{Z}}$ against a comparator $z \in \Z=\{z \in \R^d \mid \|z\|_2\leq 1\}$ on this interval. Then \pref{alg:unconstrained-linear-bandits} (with decision $x_t = z_t \cdot v_t$) satisfies
\begin{equation}
    \label{eq:unconstrained-decompose}
    \Reg(u_1,\ldots,u_T) = \sum_{k=1}^S \Reg_{\I_k}^{\mathcal{V}}(\norm{\ucirc_k}_2) + \sum_{k=1}^S \norm{\ucirc_k}_2 \cdot \Reg_{\I_k}^{\mathcal{Z}}\left( \frac{\ucirc_k}{\norm{\ucirc_k}_2} \right),
\end{equation}
where we recall that $\I_1,\ldots,\I_S$ denotes a partition of $[T]$ such that for each $k$, $u_t$ remains the same (denoted by $\ucirc_k$) for all $t \in \I_k$.
\end{lemma}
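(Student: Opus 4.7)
The plan is to prove the identity purely by algebraic decomposition on each interval $\I_k$ separately, then sum over $k$. Since the definition of switching regret immediately gives
\begin{equation*}
\Reg(u_1,\ldots,u_T) = \sum_{k=1}^S \sum_{t \in \I_k} \ell_t^\T (x_t - \ucirc_k),
\end{equation*}
it suffices to show that for each $k$ the per-interval quantity $\sum_{t \in \I_k} \ell_t^\T (x_t - \ucirc_k)$ equals $\Reg_{\I_k}^{\mathcal{V}}(\|\ucirc_k\|_2) + \|\ucirc_k\|_2 \cdot \Reg_{\I_k}^{\mathcal{Z}}(\ucirc_k/\|\ucirc_k\|_2)$.

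The key step is to substitute $x_t = v_t \cdot z_t$ so that $\ell_t^\T x_t = v_t \innersmall{z_t,\ell_t}$, and then add and subtract the term $\|\ucirc_k\|_2 \innersmall{z_t, \ell_t}$. This produces the split
\begin{equation*}
\ell_t^\T(x_t - \ucirc_k) = \bigl(v_t - \|\ucirc_k\|_2\bigr)\innersmall{z_t, \ell_t} + \|\ucirc_k\|_2 \,\Bigl\langle z_t - \tfrac{\ucirc_k}{\|\ucirc_k\|_2},\, \ell_t \Bigr\rangle,
\end{equation*}
where the last bracket simplifies because $\|\ucirc_k\|_2 \cdot \innersmall{\ucirc_k/\|\ucirc_k\|_2, \ell_t} = \innersmall{\ucirc_k, \ell_t}$. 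Summing this identity over $t \in \I_k$ matches exactly the definitions of $\Reg_{\I_k}^{\mathcal{V}}(\|\ucirc_k\|_2)$ and $\Reg_{\I_k}^{\mathcal{Z}}(\ucirc_k/\|\ucirc_k\|_2)$, and summing the result over $k$ yields~\pref{eq:unconstrained-decompose}.

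There is essentially no analytical obstacle, but one bookkeeping subtlety will need to be addressed: when $\ucirc_k = 0$, the direction $\ucirc_k/\|\ucirc_k\|_2$ is undefined. I would handle this by the convention that $\|\ucirc_k\|_2 \cdot \Reg_{\I_k}^{\mathcal{Z}}(\ucirc_k/\|\ucirc_k\|_2) = 0$ in that case (which is consistent because the factor $\|\ucirc_k\|_2$ in front vanishes), or equivalently pick any unit vector as a stand-in; the per-interval identity reduces to $\ell_t^\T x_t = (v_t - 0)\innersmall{z_t,\ell_t}$, which is $\Reg_{\I_k}^{\mathcal{V}}(0)$. Once this edge case is noted, the proof is a one-line algebraic verification followed by summation over the $S$ intervals.
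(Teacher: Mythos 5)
Your proof is correct and follows essentially the same route as the paper: substitute $x_t = v_t \cdot z_t$, add and subtract $\|\ucirc_k\|_2\langle z_t,\ell_t\rangle$ inside each interval, and sum. Your explicit treatment of the $\ucirc_k = 0$ edge case is a minor refinement the paper leaves implicit but does not change the argument.
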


One can see that the first term in~\pref{eq:unconstrained-decompose} is clearly the switching regret of $\A_{\mathcal{V}}$,
while the second term, after upper bounded by $\max_{k\in[S]} \norm{\ucirc_k}_2 \sum_{k=1}^S  \Reg_{\I_k}^{\mathcal{Z}}\left( \frac{\ucirc_k}{\norm{\ucirc_k}_2} \right)$, is the the switching regret of $\A_{\mathcal{Z}}$ scaled by the maximum comparator norm.
Therefore, to control the second term, we simply apply our \pref{alg:ell-p-ball} as the subroutine $\A_{\Z}$, making it at most $\otil\big(\max_{k\in [S]}\|\ucirc_k\|_2 \cdot \sqrt{dST}\big)$.
On the other hand, to the best of our knowledge, there are no existing unconstrained algorithms with switching regret guarantees. 
To this end, we design one such algorithm in the next section.
In fact, for full generality, we do so for the more general unconstrained OCO problem of arbitrary dimension \emph{without the knowledge of $S$}, which might be of independent interest.

\subsection{Subroutine: switching regret of unconstrained online convex optimization}
\label{sec:subroutine-OCO}

As a slight detour, in this section we consider a general unconstrained OCO problem: at round $t \in [T]$, the learner makes a decision $v_t \in \R^d$ and simultaneously the adversary chooses a loss function $f_t: \R^d \mapsto \R$, then the algorithm suffers   loss $f_t(v_t)$ and observes the gradient $\nabla f_t(v_t)$ as feedback. Notably, the feasible domain is $\R^d$ (that is, no constraints). The goal of the learner is to minimize the switching regret 
\begin{equation}
    \label{eq:switching-regret-OCO}
    \Reg(u_1,\ldots,u_T) = \sum_{t=1}^T f_t(v_t) - \sum_{t=1}^T f_t(u_t) = \sum_{k=1}^S \sum_{t \in \I_k} \Big(f_t(v_t) - f_t(\ucirc_k)\Big),
\end{equation}
where the notations $\I_1, \ldots, \I_S$ and $\ucirc_1,\ldots,\ucirc_S \in \R^d$ are  defined similarly as in \pref{sec: pre}. 
Without loss of generality, it is assumed that $\max_{x}\|\nabla f_t(x)\|_2 \leq 1$ for all $t$.
Note that this setup is a strict generalization of what we need for the one-dimensional subroutine $\A_{\V}$ discussed in \pref{sec:unconstrained-black-box}.

Our idea is once again via a meta-base framework, which is in fact easier than our earlier discussions because now we have gradient feedback.
There are two quantities that we aim to adapt to: the number of switches $S$ and 
the comparator norm $\|\ucirc_k\|_2$  (although the latter can be unbounded, it suffices to consider a maximum norm of $2^T$ as~\citep[Appendix D.5]{COLT'21:impossible-tuning} shows).
Therefore, we create an exponential grid for these two quantities, and maintain one base algorithm for each possible configuration.
These base algorithms only need to satisfy some mild conditions specified in \pref{require:data-independent} of \pref{appendix:OCO-algorithm},
and many existing algorithms such as~\citep{ICML'15:Daniely-adaptive,AISTATS'17:coin-betting-adaptive,ICML'20:Ashok} indeed meet the requirements.

The design of the meta algorithm requires some care to ensure the desirable adaptive guarantees, and we achieve so by building upon the recent progress in the classic expert problem~\citep{COLT'21:impossible-tuning}.
In short, our meta algorithm is OMD with a multi-scale entropy regularizer and certain important correction terms. We defer the details to~\pref{appendix:OCO-algorithm}, including the pseudocode of the full algorithm in~\pref{alg:unconstrained-OCO}. Below we present the main comparator-adaptive switching regret guarantee of this algorithm.

\begin{theorem}
\label{thm:unconstrained-OCO}
\pref{alg:unconstrained-OCO} with a base algorithm satisfying~\pref{require:data-independent} guarantees that for any $S$, any partition $\I_1, \ldots, \I_S$ of $[T]$, and any comparator sequence $\ucirc_1,\ldots,\ucirc_S \in \R^d$, we have
\begin{equation*}
    \sum_{k=1}^S \left(\sum_{t \in\I_k} f_t(v_t) -  \sum_{t \in\I_k} f_t(\ucirc_k) \right) \leq \otil\left( \sum_{k=1}^S \norm{\ucirc_k}_2\sqrt{\abs{\I_k}}  \right) \leq \otil\left( \max_{k \in [S]} \norm{\ucirc_k}_2\cdot \sqrt{ST}\right).
\end{equation*}
\end{theorem}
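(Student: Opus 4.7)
The plan is a meta-base framework exploiting the gradient feedback. First, by convexity it suffices to control $\sum_{k=1}^S \sum_{t \in \I_k}\langle g_t, v_t - \ucirc_k\rangle$ with $g_t = \nabla f_t(v_t)$. Assume $\|\ucirc_k\|_2 \leq 2^T$ without loss of generality (following~\citep[Appendix D.5]{COLT'21:impossible-tuning}), and build an exponential grid of norm scales $D_j = 2^j$ for $j = 0, \ldots, O(T)$. To handle unknown $S$, use the geometric covering intervals of~\citep{ICML'15:Daniely-adaptive}: for each $m$, intervals of length $2^m$ begin at times that are multiples of $2^m$. For each (norm scale $D_j$, geometric covering interval $\I$) pair, instantiate a base algorithm from \pref{require:data-independent}, which delivers static comparator-adaptive regret $\otil(D_j\sqrt{|\I|})$ against any comparator of $\ell_2$ norm at most $D_j$ on its active interval. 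The total number of base algorithms is $\mathrm{poly}(T)$, affordable since the meta regret will depend only logarithmically on it.

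Second, bound the base regret per segment. By a standard property of geometric covering intervals, each $\I_k$ admits a disjoint decomposition into $L_k = O(\log T)$ sub-intervals $\I_{k,1}, \ldots, \I_{k,L_k}$, each coinciding with some geometric covering interval. Let $j_k$ be the smallest index with $D_{j_k} \geq \|\ucirc_k\|_2$, so $D_{j_k} \leq 2\|\ucirc_k\|_2$. Denote by $\A_{k,\ell}$ the base algorithm whose active interval is $\I_{k,\ell}$ and whose norm scale is $D_{j_k}$; by~\pref{require:data-independent} its base regret on $\I_{k,\ell}$ against $\ucirc_k$ is $\otil(\|\ucirc_k\|_2 \sqrt{|\I_{k,\ell}|})$. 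Summing over $\ell$ via Cauchy-Schwarz and using $L_k = O(\log T)$ produces $\otil(\|\ucirc_k\|_2 \sqrt{|\I_k|})$ per segment.

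Third, control the meta regret, which is the main technical challenge. The meta algorithm must track the sequence $(\A_{k,\ell})$ with overhead matching the per-sub-interval base scale $\otil(\|\ucirc_k\|_2 \sqrt{|\I_{k,\ell}|})$, despite the large pool of base algorithms, the widely varying magnitudes of $\langle g_t, v_t^{(i)}\rangle$ across bases of different scales $D_j$, and the unknown target scale. The meta algorithm is OMD with the multi-scale entropy regularizer and correction terms from~\citep{COLT'21:impossible-tuning}: a weighted entropy assigning each base $i$ regularization strength proportional to its norm scale $D_{j(i)}$, combined with quadratic correction terms in the meta loss estimators that cancel the scale-dependent second-order stability. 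This yields meta regret against any base $i$ over any contiguous interval $\I'$ of order $\otil(D_{j(i)} \sqrt{|\I'|})$, matching the base regret scale of $\A_{k,\ell}$ on $\I_{k,\ell}$. Combining meta and base regrets and summing over segments and sub-intervals yields $\otil(\sum_k \|\ucirc_k\|_2 \sqrt{|\I_k|})$; a final Cauchy-Schwarz with $\sum_k |\I_k| = T$ converts this to $\otil(\max_k \|\ucirc_k\|_2 \sqrt{ST})$. The main obstacle is calibrating the multi-scale entropy weights and correction terms so that the meta regret against each chosen base scales with that base's own $D_{j(i)}$ rather than the global maximum, which is exactly what preserves comparator-adaptivity per segment.
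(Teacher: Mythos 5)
Your plan is a genuinely different route from the paper's. The paper never uses geometric covering intervals: \pref{require:data-independent} already forces each base to be \emph{strongly adaptive} (the regret bound must hold on every interval $\I\subseteq[T]$ simultaneously), so a single base instance can directly supply the needed regret on the whole segment $\I_k$. The grid $\{\B_{i,r}\}_{(i,r)\in[H]\times[R]}$ therefore consists of always-awake bases, and for fixed $i$ all copies $\B_{i,1},\ldots,\B_{i,R}$ are in fact \emph{identical} instances of $\mathfrak{B}(\X_i)$: the index $r$ is a purely meta-level device that lets the weighted entropy $\psi(w)=\sum_{(i,r)}\frac{c_i}{\eta_r}w_{(i,r)}\ln w_{(i,r)}$ try out a grid of learning rates $\eta_r/c_i$ for the same underlying base. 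For segment $\I_k$ the meta competes against the single expert $(i_k^*,r_k^*)$ with $c_{i_k^*}\approx\|\ucirc_k\|_2$ and $\eta_{r_k^*}\approx 1/\sqrt{|\I_k|}$, and the Bregman telescoping is over $S$ terms. You instead bind each base to a geometric covering interval and decompose $\I_k$ into $O(\log T)$ covering sub-intervals, so the covering-interval length $2^m$ implicitly plays the role the paper assigns to the explicit $r$ index, and the meta switches $O(S\log T)$ times. The potential upside of your route is that the bases would only need \emph{static} regret on a fixed designated interval rather than the full strongly-adaptive guarantee of \pref{require:data-independent}.

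There is a concrete gap, however. Tying bases to covering intervals introduces a \emph{sleeping-expert} structure --- each base is only active on its own covering interval --- which the paper's meta analysis deliberately avoids. The multi-scale OMD bound (\pref{lemma:impossible-tuning-lemma}, after \citet{COLT'21:impossible-tuning}) is stated for a fixed pool of always-active experts over a fixed clipped simplex $\Omega$ whose per-coordinate floors $\frac{1}{T^2\cdot 2^{2i}}$ are precisely what controls the $\ln(1/w_{s,(i,r)})$ in the per-switch Bregman cost. With sleeping experts the active set changes every round, so one must specify how asleep experts' weights are treated (renormalization as in \pref{alg:ell-p-ball}, or parking weight at the meta's own play), adapt the clipped domain accordingly, and re-prove the stability bound and the Bregman telescoping under the time-varying weighting --- none of which is in your sketch. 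You also do not spell out how the per-(scale, covering-interval) entry of the weighted entropy is set; the correct choice is $\eta\propto 1/\sqrt{2^m}$ paired with $c\propto D_j$, but you only assert the conclusion that the meta regret ``matches the per-sub-interval base scale.'' These two pieces --- handling sleeping experts inside the multi-scale OMD and calibrating the learning-rate grid by covering-interval length --- are exactly what the paper sidesteps by duplicating strongly adaptive bases under the $r$ index, and they would need to be worked out for your plan to close.
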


We emphasize again that in contrast to our other results on bandit problems, the guarantee above is achieved for all $S$ simultaneously (in other words, the algorithm does not need the knowledge of $S$).
It also adapts to the norm of the comparator $\norm{\ucirc_k}_2$ on each interval $\I_k$, instead of only the maximum norm $\max_{k \in [S]} \norm{\ucirc_k}_2$.
As another remark,
if the base algorithms further guarantee a data-dependent regret (this is satisfied by for example the algorithm of~\citet{ICML'20:Ashok}), our switching regret guarantee can be further improved to $\Ot\Big( \sum_{k=1}^S \norm{\ucirc_k}_2\sqrt{\sum_{t \in \I_k} \norm{\nabla f_t(v_t)}_2^2}\Big) \leq \otil\Big( \max_{k \in [S]} \norm{\ucirc_k}_2\cdot \sqrt{S\sum_{t=1}^T \norm{\nabla f_t(v_t)}_2^2}\Big)$,
 replacing the dependence on $T$ by the cumulative gradient norm square. 
This results holds even if the algorithm is required to make decisions from a bounded domain, thus strictly improving the $\Ot\Big(D_{\max} \sqrt{S \sum_{t=1}^T \norm{\nabla f_t(v_t)}_2^2}\Big)$ result of~\citep{ICML'20:Ashok,NIPS'20:sword} where $D_{\max}$ is the diameter of the domain.
See \pref{appendix:data-dependent-OCO} for details.

\subsection{Summary: comparator-adaptive switching regret for unconstrained linear bandits}
\label{sec:unconstrained-overall}

Combining all previous discussions, we now present the final result on unconstrained linear bandits.
\begin{theorem}
\label{thm:unconstrained-linear-bandits}
Using~\pref{alg:ell-p-ball} (with $p =2$) as the subroutine $\A_{\Z}$ and~\pref{alg:unconstrained-OCO} as the subroutine $\A_\V$ in the black-box reduction~\pref{alg:unconstrained-linear-bandits}, the overall algorithm enjoys the following comparator-adaptive switching regret against any partition $\I_1, \ldots, \I_S$ of $[T]$ and any corresponding comparators $\ucirc_1,\ldots,\ucirc_S \in \R^d$:
\begin{align*}
    \E[\RegS] \leq \otil \left(\sum_{k=1}^S\|\ucirc_k\|_2\left(\sqrt{\frac{dT}{S}}+\sqrt{\frac{dS}{T}}\left|\calI_k\right|\right)\right) \leq \otil\left(\max_{k\in [S]}\|\ucirc_k\|_2 \cdot \sqrt{dST}\right).
\end{align*}
\end{theorem}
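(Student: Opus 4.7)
The plan is to prove \pref{thm:unconstrained-linear-bandits} in a modular fashion: plug the chosen subroutines into the decomposition from \pref{lemma:unconstrained-decompose}, bound each of the two resulting terms using the regret guarantees established earlier in the paper, and then massage the per-interval bounds into the stated form.

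First, I would invoke \pref{lemma:unconstrained-decompose} to write
\begin{equation*}
\E[\Reg(u_1,\ldots,u_T)] = \underbrace{\E\Bigg[\sum_{k=1}^S \Reg^{\V}_{\I_k}(\|\ucirc_k\|_2)\Bigg]}_{(\mathrm{i})} + \underbrace{\E\Bigg[\sum_{k=1}^S \|\ucirc_k\|_2 \cdot \Reg^{\Z}_{\I_k}\!\left(\tfrac{\ucirc_k}{\|\ucirc_k\|_2}\right)\Bigg]}_{(\mathrm{ii})}.
\end{equation*}
For term~$(\mathrm{i})$, I would view $\A_{\V}$ as solving a one-dimensional OCO problem with loss $f_t(v) = v\cdot\langle z_t,\ell_t\rangle$; since $z_t\in\Z$ (unit $\ell_2$ ball) and $\|\ell_t\|_2\leq 1$, the gradient $|\langle z_t,\ell_t\rangle|$ is bounded by $1$, so \pref{thm:unconstrained-OCO} applies and yields $(\mathrm{i})\leq \otil\big(\sum_{k=1}^S\|\ucirc_k\|_2\sqrt{|\I_k|}\big)$.

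For term~$(\mathrm{ii})$, I would use \pref{thm:ell-p-switching-regret} (with $p=2$). The only subtlety is that \pref{thm:ell-p-switching-regret} is stated as a \emph{total} switching-regret bound $\otil(\sqrt{dST})$ against a single sequence of comparators, whereas here the interval comparators $\ucirc_k/\|\ucirc_k\|_2$ are each weighted by $\|\ucirc_k\|_2$, which need not be uniform. The clean way around this is to follow the proof of \pref{thm:ell-p-switching-regret} and observe that the \base contribution on each interval is $\otil(\sqrt{d|\I_k|})$ (from the fresh base algorithm $\B_{j_k}$), while the \meta/\posbias terms combine to an overall $\otil(\sqrt{dST})$. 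The \base contribution scales naturally with $\|\ucirc_k\|_2$ per interval, and the meta contribution can be absorbed by the uniform bound $\max_k\|\ucirc_k\|_2 \cdot \otil(\sqrt{dST})$.

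Combining the two, and then applying the AM-GM inequality $\sqrt{d|\I_k|}\leq \tfrac{1}{2}\bigl(\sqrt{dT/S}+\sqrt{dS/T}\,|\I_k|\bigr)$ term-by-term (which also absorbs the $\V$-term's $\sqrt{|\I_k|}$ since $d\geq 1$), yields the first inequality of the theorem. The second inequality follows from $\sum_{k=1}^S \sqrt{dT/S}=\sqrt{dST}$, $\sum_{k=1}^S |\I_k|=T$, so $\sum_{k=1}^S(\sqrt{dT/S}+\sqrt{dS/T}|\I_k|)=2\sqrt{dST}$, and pulling $\max_k\|\ucirc_k\|_2$ out of the sum. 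The main obstacle I anticipate is the per-interval handling of term~$(\mathrm{ii})$: the cleanest statement we are given is a total switching bound, and extracting an interval-wise $\otil(\sqrt{d|\I_k|})$ contribution requires opening up the analysis of \pref{alg:ell-p-ball} (in particular, verifying that the negative-bias cancellation in $\base-\negbias$ and the mixing term in \meta both split across the intervals $\I_k$ in an appropriate way); everything else is essentially a direct substitution and AM-GM manipulation.
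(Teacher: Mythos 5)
Your high-level structure (decompose via \pref{lemma:unconstrained-decompose}, invoke \pref{thm:unconstrained-OCO} for $(\mathrm{i})$, then control $(\mathrm{ii})$ with a per-interval bound for \pref{alg:ell-p-ball}) is exactly the paper's approach, and you correctly identify that the crux is a per-interval regret bound for the $\A_\Z$ subroutine. However, the way you supply that per-interval bound is incorrect, and the paper already provides the right tool, which you appear to have missed: \pref{thm:interval}, stated immediately after the proof of \pref{thm:ell-p-switching-regret}, gives
\begin{equation*}
\E\Big[\Reg^{\Z}_{\I_k}(u)\Big] \leq \otil\Big(\sqrt{\tfrac{dT}{S}} + |\I_k|\sqrt{\tfrac{dS}{T}}\Big),
\end{equation*}
which is precisely what you need to plug into the decomposition.

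Your substitute derivation has two genuine gaps. First, the claim that ``the \base contribution on each interval is $\otil(\sqrt{d|\I_k|})$'' is false: the base learning rate in \pref{alg:ell-p-ball} is a single global $\eta = C\sqrt{S/(dT)}$, so the penalty term $\log(1/\gamma)/\eta$ of $\B_{j_k}$ on interval $\I_k$ (after the negative-bias cancellation) is $\otil(\sqrt{dT/S})$, independent of $|\I_k|$ --- not $\otil(\sqrt{d|\I_k|})$. Second, and more importantly, you bound the \meta contribution only globally as $\max_k\|\ucirc_k\|_2\cdot\otil(\sqrt{dST})$, but this cannot be absorbed into $\otil\big(\sum_{k}\|\ucirc_k\|_2(\sqrt{dT/S}+\sqrt{dS/T}|\I_k|)\big)$: if all but one $\|\ucirc_k\|_2$ vanish, the right-hand side drops to $\otil(\|\ucirc_{k^*}\|_2(\sqrt{dT/S}+\sqrt{dS/T}|\I_{k^*}|))$, which is much smaller than $\|\ucirc_{k^*}\|_2\sqrt{dST}$ for large $S$ and moderate $|\I_{k^*}|$. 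The fix is that the \meta, \posbias, \negbias, and \bias terms all split per interval (this is exactly what the proof of \pref{thm:interval} carries out, using \pref{eq:meta-inter} for the meta term), so the entire switching regret of $\A_\Z$ on $\I_k$ is $\otil(\sqrt{dT/S}+|\I_k|\sqrt{dS/T})$ and can be multiplied by $\|\ucirc_k\|_2$ interval by interval. With \pref{thm:interval} in hand, your final AM-GM manipulation ($\sqrt{|\I_k|}\leq\sqrt{T/S}+\sqrt{S/T}|\I_k|$, $\sum_k|\I_k|=T$) and the passage to the second inequality are correct.
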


The proof can be found in~\pref{appendix:proof-unconstrained}. 
Again, this is the first switching regret for unconstrained linear bandits, and it strictly generalizes the static regret results of~\citep{NIPS'20:Dirk}.
Although we are not directly using our new corralling recipe to achieve this result, it clearly serves as an indispensable component for this result due to the usage of \pref{alg:ell-p-ball}.

\section{Conclusion and Discussions}
\label{sec: conclusion}
In this paper, we propose a new mechanism for combining a collection of bandit algorithms with regret overhead only logarithmically depending on the number of base algorithms. As a case study, 
we provide a set of new results on switching regret for adversarial linear bandits using this recipe.
One future direction is to extend our switching regret results to linear bandits with general domains or even to general convex bandits, which appears to require additional new ideas to execute our recipe. Another interesting direction is to find more applications for our corralling mechanism beyond obtaining switching regret, as we know that logarithmic dependence on the number of base algorithms is possible.

\acks{Peng Zhao and Zhi-Hua Zhou are supported by NSFC (61921006). HL and MZ are supported by NSF Award IIS-1943607. The authors thank Chen-Yu Wei for helpful discussions on the idea of negative bias injection in the meta algorithm design.}
\bibliography{ref}

\begin{thebibliography}{41}
\providecommand{\natexlab}[1]{#1}
\providecommand{\url}[1]{\texttt{#1}}
\expandafter\ifx\csname urlstyle\endcsname\relax
  \providecommand{\doi}[1]{doi: #1}\else
  \providecommand{\doi}{doi: \begingroup \urlstyle{rm}\Url}\fi

\bibitem[Abernethy et~al.(2008)Abernethy, Hazan, and
  Rakhlin]{COLT'08:CompetingDark}
Jacob~D Abernethy, Elad Hazan, and Alexander Rakhlin.
\newblock Competing in the dark: An efficient algorithm for bandit linear
  optimization.
\newblock In \emph{Proceedings of the 21st Annual Conference on Learning Theory
  (COLT)}, pages 263--274, 2008.

\bibitem[Agarwal et~al.(2017)Agarwal, Luo, Neyshabur, and
  Schapire]{COLT'17:Corral}
Alekh Agarwal, Haipeng Luo, Behnam Neyshabur, and Robert~E. Schapire.
\newblock Corralling a band of bandit algorithms.
\newblock In \emph{Proceedings of the 30th Conference on Learning Theory
  (COLT)}, pages 12--38, 2017.

\bibitem[Arora et~al.(2021)Arora, Marinov, and
  Mohri]{AISTATS'21:corral-stochastic}
Raman Arora, Teodor~Vanislavov Marinov, and Mehryar Mohri.
\newblock Corralling stochastic bandit algorithms.
\newblock In \emph{The 24th International Conference on Artificial Intelligence
  and Statistics (AISTATS)}, pages 2116--2124, 2021.

\bibitem[Audibert and Bubeck(2010)]{JMLR'10:AudibertB10}
Jean{-}Yves Audibert and S{\'{e}}bastien Bubeck.
\newblock Regret bounds and minimax policies under partial monitoring.
\newblock \emph{Journal of Machine Learning Research}, 11:\penalty0 2785--2836,
  2010.

\bibitem[Auer et~al.(2002)Auer, Cesa{-}Bianchi, Freund, and
  Schapire]{SICOMP'02:Auer-EXP3}
Peter Auer, Nicol{\`{o}} Cesa{-}Bianchi, Yoav Freund, and Robert~E. Schapire.
\newblock The nonstochastic multiarmed bandit problem.
\newblock \emph{{SIAM} Journal on Computing}, 32\penalty0 (1):\penalty0 48--77,
  2002.

\bibitem[Bubeck and Cesa{-}Bianchi(2012)]{bubeck2012regret}
S{\'{e}}bastien Bubeck and Nicol{\`{o}} Cesa{-}Bianchi.
\newblock Regret analysis of stochastic and nonstochastic multi-armed bandit
  problems.
\newblock \emph{Foundations and Trends in Machine Learning}, 5\penalty0
  (1):\penalty0 1--122, 2012.

\bibitem[Bubeck et~al.(2012)Bubeck, Cesa-Bianchi, and
  Kakade]{COLT'12:OptimalOLO}
S{\'e}bastien Bubeck, Nicolo Cesa-Bianchi, and Sham~M Kakade.
\newblock Towards minimax policies for online linear optimization with bandit
  feedback.
\newblock In \emph{Proceedings of the 25th Annual Conference on Learning Theory
  (COLT)}, pages 41.1--41.14, 2012.

\bibitem[Bubeck et~al.(2018)Bubeck, Cohen, and Li]{ALT'18:hybrid-barrier}
S{\'{e}}bastien Bubeck, Michael~B. Cohen, and Yuanzhi Li.
\newblock Sparsity, variance and curvature in multi-armed bandits.
\newblock In \emph{Proceedings of the 29th International Algorithmic Learning
  Theory (ALT)}, pages 111--127, 2018.

\bibitem[Chen et~al.(2021)Chen, Luo, and Wei]{COLT'21:impossible-tuning}
Liyu Chen, Haipeng Luo, and Chen{-}Yu Wei.
\newblock Impossible tuning made possible: {A} new expert algorithm and its
  applications.
\newblock In \emph{Proceedings of the 34th Conference on Learning Theory
  (COLT)}, pages 1216--1259, 2021.

\bibitem[Cheung et~al.(2019)Cheung, Simchi-Levi, and Zhu]{cheung2019learning}
Wang~Chi Cheung, David Simchi-Levi, and Ruihao Zhu.
\newblock Learning to optimize under non-stationarity.
\newblock In \emph{Proceedings of the 22nd International Conference on
  Artificial Intelligence and Statistics (AISTATS)}, pages 1079--1087, 2019.

\bibitem[Cutkosky(2020)]{ICML'20:Ashok}
Ashok Cutkosky.
\newblock Parameter-free, dynamic, and strongly-adaptive online learning.
\newblock In \emph{Proceedings of the 37th International Conference on Machine
  Learning (ICML)}, pages 2250--2259, 2020.

\bibitem[Cutkosky and Boahen(2017)]{COLT'17:Ashok}
Ashok Cutkosky and Kwabena Boahen.
\newblock Online learning without prior information.
\newblock In \emph{Proceedings of the 30th Conference on Learning Theory
  (COLT)}, pages 643--677, 2017.

\bibitem[Cutkosky and Orabona(2018)]{COLT'18:black-box-reduction}
Ashok Cutkosky and Francesco Orabona.
\newblock Black-box reductions for parameter-free online learning in banach
  spaces.
\newblock In \emph{Proceedings of the 31st Conference on Learning Theory
  (COLT)}, pages 1493--1529, 2018.

\bibitem[Cutkosky et~al.(2021)Cutkosky, Dann, Das, Gentile, Pacchiano, and
  Purohit]{ICML'21:dynamic-balance}
Ashok Cutkosky, Christoph Dann, Abhimanyu Das, Claudio Gentile, Aldo Pacchiano,
  and Manish Purohit.
\newblock Dynamic balancing for model selection in bandits and {RL}.
\newblock In \emph{Proceedings of the 38th International Conference on Machine
  Learning (ICML)}, pages 2276--2285, 2021.

\bibitem[Dani et~al.(2008)Dani, Hayes, and Kakade]{NIPS'07:bandit-lower-bound}
Varsha Dani, Thomas~P. Hayes, and Sham~M. Kakade.
\newblock The price of bandit information for online optimization.
\newblock In \emph{Advances in Neural Information Processing Systems 20
  (NIPS)}, pages 345--352, 2008.

\bibitem[Daniely et~al.(2015)Daniely, Gonen, and
  Shalev-Shwartz]{ICML'15:Daniely-adaptive}
Amit Daniely, Alon Gonen, and Shai Shalev-Shwartz.
\newblock Strongly adaptive online learning.
\newblock In \emph{Proceedings of the 32nd International Conference on Machine
  Learning (ICML)}, pages 1405--1411, 2015.

\bibitem[Foster et~al.(2015)Foster, Rakhlin, and
  Sridharan]{NIPS'15:Dylan-adaptive}
Dylan~J. Foster, Alexander Rakhlin, and Karthik Sridharan.
\newblock Adaptive online learning.
\newblock In \emph{Advances in Neural Information Processing Systems 28
  (NIPS)}, pages 3375--3383, 2015.

\bibitem[Foster et~al.(2019)Foster, Krishnamurthy, and Luo]{NIPS'19ModelSelect}
Dylan~J Foster, Akshay Krishnamurthy, and Haipeng Luo.
\newblock Model selection for contextual bandits.
\newblock \emph{Advances in Neural Information Processing Systems}, 32, 2019.

\bibitem[Foster et~al.(2020{\natexlab{a}})Foster, Gentile, Mohri, and
  Zimmert]{NIPS'20:AdaptingMisspec}
Dylan~J. Foster, Claudio Gentile, Mehryar Mohri, and Julian Zimmert.
\newblock Adapting to misspecification in contextual bandits.
\newblock In \emph{Advances in Neural Information Processing Systems 33
  (NeurIPS)}, pages 11478--11489, 2020{\natexlab{a}}.

\bibitem[Foster et~al.(2020{\natexlab{b}})Foster, Krishnamurthy, and
  Luo]{COLT'20:open-model-selection}
Dylan~J. Foster, Akshay Krishnamurthy, and Haipeng Luo.
\newblock Open problem: Model selection for contextual bandits.
\newblock In \emph{Proceedings of the 33rd Conference on Learning Theory
  (COLT)}, pages 3842--3846, 2020{\natexlab{b}}.

\bibitem[Hazan and Seshadhri(2007)]{hazan2007adaptive}
Elad Hazan and Comandur Seshadhri.
\newblock Adaptive algorithms for online decision problems.
\newblock \emph{Electronic colloquium on computational complexity (ECCC)},
  14\penalty0 (088), 2007.

\bibitem[Herbster and Warmuth(1998)]{journals/ml/HerbsterW98}
Mark Herbster and Manfred~K. Warmuth.
\newblock Tracking the best expert.
\newblock \emph{Machine Learning}, 32\penalty0 (2):\penalty0 151--178, 1998.

\bibitem[Herbster and Warmuth(2001)]{JMLR'01:Herbster}
Mark Herbster and Manfred~K. Warmuth.
\newblock Tracking the best linear predictor.
\newblock \emph{Journal of Machine Learning Research}, 1:\penalty0 281--309,
  2001.

\bibitem[Jun et~al.(2017)Jun, Orabona, Wright, and
  Willett]{AISTATS'17:coin-betting-adaptive}
Kwang-Sung Jun, Francesco Orabona, Stephen Wright, and Rebecca Willett.
\newblock Improved strongly adaptive online learning using coin betting.
\newblock In \emph{Proceedings of the 20th International Conference on
  Artificial Intelligence and Statistics (AISTATS)}, pages 943--951, 2017.

\bibitem[Kerdreux et~al.(2021)Kerdreux, Roux, d'Aspremont, and
  Pokutta]{arxiv'21:uniform-convex}
Thomas Kerdreux, Christophe Roux, Alexandre d'Aspremont, and Sebastian Pokutta.
\newblock Linear bandits on uniformly convex sets.
\newblock \emph{Journal of Machine Learning Research}, 22\penalty0
  (284):\penalty0 1--23, 2021.

\bibitem[Krishnamurthy et~al.(2021)Krishnamurthy, Hadad, and
  Athey]{ICML'21:misspecification}
Sanath~Kumar Krishnamurthy, Vitor Hadad, and Susan Athey.
\newblock Adapting to misspecification in contextual bandits with offline
  regression oracles.
\newblock In \emph{Proceedings of the 38th International Conference on Machine
  Learning (ICML)}, pages 5805--5814, 2021.

\bibitem[Lee et~al.(2020)Lee, Luo, and Zhang]{lee2020closer}
Chung-Wei Lee, Haipeng Luo, and Mengxiao Zhang.
\newblock A closer look at small-loss bounds for bandits with graph feedback.
\newblock In \emph{Proceedings of the 33th Conference on Learning Theory
  (COLT)}, pages 2516--2564, 2020.

\bibitem[Lovász and Vempala(2007)]{journal'07:log-concave}
László Lovász and Santosh Vempala.
\newblock The geometry of logconcave functions and sampling algorithms.
\newblock \emph{Random Structures \& Algorithms}, 30\penalty0 (3):\penalty0
  307--358, 2007.

\bibitem[Luo and Schapire(2015)]{luo2015achieving}
Haipeng Luo and Robert~E Schapire.
\newblock Achieving all with no parameters: {AdaNormalHedge}.
\newblock In \emph{Proceedings of the 28th Annual Conference Computational
  Learning Theory (COLT)}, pages 1286--1304, 2015.

\bibitem[Luo et~al.(2018)Luo, Wei, Agarwal, and Langford]{luo2018efficient}
Haipeng Luo, Chen-Yu Wei, Alekh Agarwal, and John Langford.
\newblock Efficient contextual bandits in non-stationary worlds.
\newblock In \emph{Proceedings of the 31st Conference On Learning Theory
  (COLT)}, pages 1739--1776, 2018.

\bibitem[Marinov and Zimmert(2021)]{NIPS'21Pareto}
Teodor~Vanislavov Marinov and Julian Zimmert.
\newblock The pareto frontier of model selection for general contextual
  bandits.
\newblock In \emph{Advances in Neural Information Processing Systems 34
  (NeurIPS)}, 2021.

\bibitem[Mcmahan and Streeter(2012)]{mcmahan2012no}
Brendan Mcmahan and Matthew Streeter.
\newblock No-regret algorithms for unconstrained online convex optimization.
\newblock In \emph{Advances in Neural Information Processing Systems 25
  (NIPS)}, 2012.

\bibitem[McMahan and Orabona(2014)]{COLT'14:unconstrained-Hilbert}
H.~Brendan McMahan and Francesco Orabona.
\newblock Unconstrained online linear learning in hilbert spaces: Minimax
  algorithms and normal approximations.
\newblock In \emph{Proceedings of The 27th Conference on Learning Theory
  (COLT)}, pages 1020--1039, 2014.

\bibitem[Orabona(2013)]{NIPS'13:unconstrainedEG}
Francesco Orabona.
\newblock Dimension-free exponentiated gradient.
\newblock In \emph{Advances in Neural Information Processing Systems 26
  (NIPS)}, pages 1806--1814, 2013.

\bibitem[Pacchiano et~al.(2020)Pacchiano, Phan, Abbasi-Yadkori, Rao, Zimmert,
  Lattimore, and Szepesv{\'{a}}ri]{NIPS'20:pacchiano2020model}
Aldo Pacchiano, My~Phan, Yasin Abbasi-Yadkori, Anup Rao, Julian Zimmert, Tor
  Lattimore, and Csaba Szepesv{\'{a}}ri.
\newblock Model selection in contextual stochastic bandit problems.
\newblock In \emph{Advances in Neural Information Processing Systems 33
  (NeurIPS)}, pages 10328--10337, 2020.

\bibitem[van~der Hoeven et~al.(2020)van~der Hoeven, Cutkosky, and
  Luo]{NIPS'20:Dirk}
Dirk van~der Hoeven, Ashok Cutkosky, and Haipeng Luo.
\newblock Comparator-adaptive convex bandits.
\newblock In \emph{Advances in Neural Information Processing Systems 33
  (NeurIPS)}, 2020.

\bibitem[Wei and Luo(2021)]{COLT'21:black-box}
Chen-Yu Wei and Haipeng Luo.
\newblock Non-stationary reinforcement learning without prior knowledge: An
  optimal black-box approach.
\newblock In \emph{Proceedings of the 34th Conference on Learning Theory
  (COLT)}, pages 4300--4354, 2021.

\bibitem[Wei et~al.(2022)Wei, Dann, and Zimmert]{ALT'22:corruption}
Chen-Yu Wei, Christoph Dann, and Julian Zimmert.
\newblock A model selection approach for corruption robust reinforcement
  learning.
\newblock In \emph{Proceedings of the 33rd International Algorithmic Learning
  Theory (ALT)}, page to appear, 2022.

\bibitem[Zhang et~al.(2019)Zhang, Liu, and Zhou]{ICML19:Zhang-Adaptive-Smooth}
Lijun Zhang, Tie-Yan Liu, and Zhi-Hua Zhou.
\newblock Adaptive regret of convex and smooth functions.
\newblock In \emph{Proceedings of the 36th International Conference on Machine
  Learning (ICML)}, pages 7414--7423, 2019.

\bibitem[Zhao et~al.(2020)Zhao, Zhang, Zhang, and Zhou]{NIPS'20:sword}
Peng Zhao, Yu-Jie Zhang, Lijun Zhang, and Zhi-Hua Zhou.
\newblock Dynamic regret of convex and smooth functions.
\newblock In \emph{Advances in Neural Information Processing Systems 33
  (NeurIPS)}, pages 12510--12520, 2020.

\bibitem[Zhao et~al.(2021)Zhao, Wang, Zhang, and Zhou]{JMLR'21:BCO}
Peng Zhao, Guanghui Wang, Lijun Zhang, and Zhi-Hua Zhou.
\newblock Bandit convex optimization in non-stationary environments.
\newblock \emph{Journal of Machine Learning Research}, 22\penalty0
  (125):\penalty0 1--45, 2021.

\end{thebibliography}

\appendix
\section{Potential Approaches for Switching Regret of Linear Bandits}
\label{appendix:potential-methods}

As mentioned in the main paper, to the best of our knowledge, we are not aware of any paper with switching regret for adversarial linear bandits. In this section, we present two potential approaches to achieve switching regret for adversarial linear bandits with $\ell_p$-ball feasible domain, however, the regret bounds are suboptimal.

\paragraph{Method 1. Periodical Restart.} The first generic method for tackling the switching regret of linear bandits is by running a classic linear bandits algorithm with a periodical restart. Specifically, suppose we employ an algorithm $\A$ as the base algorithm and restart it for every $\Delta >0$ rounds. Then, the switching regret of the overall algorithm satisfies:
\begin{align}
    \E[\RegS] \leq S \cdot \Delta + \left(\frac{T}{\Delta} - S \right) \cdot \Reg(\A; \Delta) \leq \Ot\left(S \Delta + \frac{T}{\sqrt{\Delta}}\right) = \Ot\Big( S^{\frac{1}{3}}T^{\frac{2}{3}} \Big),
\end{align}
where the first inequality holds because there are at most $S$ periods that contains a shift of comparators and we bound the regret in those periods trivially by $S  \Delta$, and for the other periods the regret is controlled by the base algorithm $\A$. The second inequality is by chosen base algorithm $\A$ such that the regret is of order $\Ot(\sqrt{\Delta})$, which can be satisfied by for example \scrible~\citep{COLT'08:CompetingDark}. The last equality is by set the period optimally as $\Delta = \lceil (T/S)^{\frac{1}{3}} \rceil$. To summarize, the restarting algorithm applies to general adversarial linear bandits and attains a suboptimal switching regret of order $\Ot(S^{\frac{1}{3}} T^{\frac{2}{3}} )$, given the knowledge of $S$.

\paragraph{Method 2. \textsc{Exp2} with Fixed-share Update.} The second method is by using the \textsc{Exp2} algorithm~\citep{NIPS'07:bandit-lower-bound} with a uniform mixing update~\citep{journals/ml/HerbsterW98,SICOMP'02:Auer-EXP3}, which can give an $\Ot(d\sqrt{ST})$ switching regret for adversarial linear bandits with a general convex and compact domain. Note that the method is based on continuous exponential weights and thus requires log-concave sampling~\citep{journal'07:log-concave}, which is theoretically efficient but usually time-consuming in practice. More importantly, the dimensional dependence is linear and hence not optimal when the feasible domain is an $\ell_p$ ball, $p\in(1,2]$.

Beyond the above two methods, one may wonder whether we can simply use FTRL/OMD with some barrier regularizer (such as \scrible~\citep{COLT'08:CompetingDark}) along with either a uniform mixing  update~\citep{journals/ml/HerbsterW98,SICOMP'02:Auer-EXP3} or a clipped domain~\citep{JMLR'01:Herbster} to achieve switching regret for linear bandits. However, the attempt fails to work as the regularization term in the regret bound will become too large to control due to the property of barrier regularizer. Indeed, this method cannot even achieve switching regret guarantees for MAB due to the same reason.
\section{Omitted Details for~\pref{sec: lp_ball}}\label{appendix:lp_ball}
In this section, we provide the omitted details for~\pref{sec: lp_ball}, including the pseudocode of the base algorithm (in~\pref{appendix:lp_ball_base}) and the proof of~\pref{thm:ell-p-switching-regret} (in~\pref{appendix:unbias-loss-estimator} -- \ref{appendix:proof-main}). To prove~\pref{thm:ell-p-switching-regret}, we first prove the unbiasedness of loss estimators in~\pref{appendix:unbias-loss-estimator}, then decompose the regret in~\pref{appendix:regret-decompose}, and subsequently upper bound each term in~\pref{appendix:bound-part1},~\pref{appendix:base-reg}, and~\pref{appendix:metareg}. We finally put everything together and present the proof in~\pref{appendix:proof-main}. 

\subsection{Pseudocode of Base Algorithm}
\label{appendix:lp_ball_base}
\pref{alg:ell-p-ball-base} shows the pseudocode of the base algorithm for linear bandits with $\ell_p$ unit-ball feasible domain, which is the same as the one proposed in~\citep{ALT'18:hybrid-barrier}.
\begin{algorithm}[!h]
   \caption{Base algorithm for linear bandits on $\ell_p$ ball}
   \label{alg:ell-p-ball-base}
   \textbf{Input:} learning rate $\baseLR$, clipping parameter $\gamma$, initial round $t_0$.
    
   \textbf{Define:} clipped feasible domain $\calX'=\{x \mid \|x\|_p\leq 1-\gamma\}$.

   \textbf{Initialize:} $a_{t_0}^{(t_0)}=\argmin_{x\in\calX'}R(x)$ and $\xi_{t_0}^{(t_0)}=0$.

   Draw $\wt{a}_{t_0}^{(t_0)}$ uniformly randomly from $\{\pm e_n\}_{n=1}^d$.
       
   \For{$t=t_0$ {\bfseries to} $T$}{
   
      Send $(\wt{a}_{t}^{(t_0)},a_{t}^{(t_0)},\xi_{t}^{(t_0)})$ to the meta algorithm.
      
      Receive a loss estimator $\ellhat_t$.
      
      Update the strategy based on OMD with regularizer $R(x)=-\log (1-\|x\|_p^p)$:
      \begin{align}
        \label{eq:base-OMD-update}
          a_{t+1}^{(t_0)} = \argmin_{a\in \calX'}\left\{ \inner{a,\ellhat_t}+\frac{1}{\baseLR}D_R(a,a_t^{(t_0)})\right\}.
      \end{align}
      
      Generate a random variable $\xi_{t+1}^{(t_0)}\sim \ber(\|a_{t+1}^{(t_0)}\|_p)$ and set
      \begin{align*}
         \wt{a}_{t+1}^{(t_0)} =
          \begin{cases}
             \nicefrac{a_{t+1}^{(t_0)}}{\|a_{t+1}^{(t_0)}\|_p} & \mbox{if $\xi_{t+1}^{(t_0)}=1$}, \\
                \delta e_n & \mbox{if $\xi_{t+1}^{(t_0)}=0$},
          \end{cases}
      \end{align*}
      where $n$ is uniformly chosen from $\{1,\ldots,d\}$ and $\delta$ is a uniform random variable over $\{-1,+1\}$.
      }
\end{algorithm}

\subsection{Unbiasedness of Loss Estimators}
\label{appendix:unbias-loss-estimator}
The following lemma shows the unbiasedness of the constructed loss estimators for both  meta and base algorithms.
\begin{lemma}\label{lem:unbiasedness}
The meta loss estimator $\bar{\ell}_t$ defined in~\pref{eq:meta-estimator} and the base loss estimator $\ellhat_t$ defined in~\pref{eq:base-estimator} satisfy that $\mathbb{E}_t[\bar{\ell}_t]=\ell_t$ and $\mathbb{E}_t[\ellhat_t]=\ell_t$ for all $t\in [T]$.
\end{lemma}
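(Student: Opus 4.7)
The plan is to handle the two estimators separately, in each case computing a conditional expectation that matches the normalization appearing in the estimator.

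For $\bar{\ell}_t$, the strategy is to show $\E_t\!\bigl[x_t x_t^\top \bigm| \{\wt a_t^{(i)}\}_{i\le t}\bigr]=\wt M_t$, from which $\E_t[\bar\ell_t]=\wt M_t^{-1}\wt M_t\ell_t=\ell_t$ by the tower rule. This reduces to reading off the sampling in \pref{line:make prediction}: with probability $\beta$, $x_t$ is uniform over $\{\pm e_n\}_{n=1}^d$, contributing $\frac{\beta}{d}\sum_n e_n e_n^\top$ (since both $+e_n$ and $-e_n$ give the same outer product); with probability $1-\beta$, $x_t=\wt a_t^{(i_t)}$ for $i_t\sim\wh p_t$, contributing $(1-\beta)\sum_{i=1}^t \wh p_{t,i}\wt a_t^{(i)}\wt a_t^{(i)\top}$. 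Summing these matches the definition of $\wt M_t$ exactly.

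For $\ellhat_t$, the plan is to condition on $\{a_t^{(i)}\}_{i\le t}$ (which are determined by the history) and compute $\E_t\bigl[\indi\{\rho_t=0\}\indi\{\xi_t=0\}(\ell_t^\top x_t)\,x_t\bigr]$ by further conditioning on $i_t$. The key structural observation, inherited from the base algorithm, is that marginally $\xi_t^{(i)}\sim\mathbf{Ber}(\|a_t^{(i)}\|_p)$, and on the event $\{\xi_t^{(i)}=0\}$ we have $\wt a_t^{(i)}$ uniform over $\{\pm e_n\}_{n=1}^d$, independent of everything else. Hence
\[
\Pr\bigl(\rho_t=0,\,i_t=i,\,\xi_t^{(i)}=0\bigr)=(1-\beta)\,\wh p_{t,i}\bigl(1-\|a_t^{(i)}\|_p\bigr),
\]
and on this event $\E[(\ell_t^\top x_t)x_t]=\frac{1}{d}\ell_t$ because $x_t$ is uniform over $\{\pm e_n\}$. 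Summing over $i$ gives
\[
\E_t\!\Bigl[\indi\{\rho_t=0\}\indi\{\xi_t=0\}(\ell_t^\top x_t)\,x_t\Bigr]=\frac{1-\beta}{d}\Bigl(1-\sum_{i=1}^t \wh p_{t,i}\|a_t^{(i)}\|_p\Bigr)\ell_t,
\]
and multiplying by the prefactor $\frac{d}{(1-\beta)(1-\sum_i\wh p_{t,i}\|a_t^{(i)}\|_p)}$ in \pref{eq:base-estimator} yields $\ell_t$.

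The only delicate point is keeping track of what is random at round $t$ and what the history conditions on: the $\wt a_t^{(i)}$ are produced within round $t$ using fresh randomness from each base algorithm, so the normalizer $1-\sum_i \wh p_{t,i}\|a_t^{(i)}\|_p$ in $\ellhat_t$ must be the quantity $\Pr(\rho_t=0,\xi_t=0\mid \text{history and }\{a_t^{(i)}\})/(1-\beta)$ rather than something involving $\wt a_t^{(i)}$, which is precisely why the estimator uses $\|a_t^{(i)}\|_p$ (a function of the history) instead of $\|\wt a_t^{(i)}\|_p$. Once this bookkeeping is handled, both unbiasedness claims follow by direct computation; no deeper inequality is required.
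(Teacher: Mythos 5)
Your proposal is correct and follows essentially the same route as the paper: for $\bar{\ell}_t$ one checks $\E_t[x_tx_t^\top]=\wt{M}_t$ (conditioning on the $\wt{a}_t^{(i)}$'s, as the paper does implicitly when pulling $\wt{M}_t^{-1}$ out of the expectation), and for $\ellhat_t$ one integrates over $\rho_t$, $i_t$, and $\xi_t^{(i)}$, using that on $\{\xi_t^{(i)}=0\}$ the action is uniform over $\{\pm e_n\}_{n=1}^d$ so that the factor $1-\sum_i\wh{p}_{t,i}\|a_t^{(i)}\|_p$ cancels the normalizer. Your remark about why the normalizer must involve $\|a_t^{(i)}\|_p$ (a function of the round-$t$ conditioning) is a correct and slightly more explicit rendering of the bookkeeping the paper performs in its chain of equalities.
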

\begin{proof}
We first show the unbiasedness of the meta loss estimator $\ellbar_t$. According to the definition in~\pref{eq:meta-estimator}, we have
\begin{align}\label{eqn:proof-meta-unbiased}
    \mathbb{E}_t[\bar{\ell}_t]&=\mathbb{E}_t[\wt{M}_t^{-1}x_tx_t^\top \ell_t] \nonumber\\
    &= \left(\frac{\beta}{d}\sum_{n=1}^de_ne_n^\top+(1-\beta)\sum_{i=1}^t\sleep_{t,i}\wt{a}_t^{(i)}\wt{a}_t^{(i)^\top}\right)^{-1}\cdot \mathbb{E}_t[x_tx_t^\top]\ell_t \nonumber\\
    &= \left(\frac{\beta}{d}\sum_{n=1}^de_ne_n^\top+(1-\beta)\sum_{i=1}^t\sleep_{t,i}\wt{a}_t^{(i)}\wt{a}_t^{(i)^\top}\right)^{-1}\cdot \left(\frac{\beta}{d}\sum_{n=1}^de_ne_n^\top+(1-\beta)\sum_{i=1}^t\sleep_{t,i}\wt{a}_t^{(i)}\wt{a}_t^{(i)^\top}\right)\ell_t \nonumber\\
    &=\ell_t.
\end{align}
Next, we show the unbiasedness of the base loss estimator $\ellhat_t$. According to the definition in~\pref{eq:base-estimator}, we have
\begin{align*}
    \mathbb{E}_t[\ellhat_t] &= \mathbb{E}_t\left[\frac{1-\xi_t}{1-\beta}\cdot \frac{d}{1-\sum_{i=1}^t\sleep_{t,i}\|a_{t}^{(i)}\|_p}\cdot (\ell_t^\T x_t)\cdot x_t \cdot\indi\{\rho_t=0\}\right] \\
    &= \mathbb{E}_t\left[\sum_{j=1}^t\sleep_{t,j}\cdot \frac{d(1-\xi_t^{(j)})}{1-\sum_{i=1}^t\sleep_{t,i}\|a_{t}^{(i)}\|_p}\cdot (\ell_t^\top \wt{a}_t^{(j)})\cdot\wt{a}_t^{(j)} \right]\\
    &= \sum_{j=1}^t\sleep_{t,j}(1-\|a_t^{(j)}\|_p)\cdot \frac{d}{1-\sum_{i=1}^t\sleep_{t,i}\|a_{t}^{(i)}\|_p}\cdot \frac{1}{d}\sum_{n=1}^d e_ne_n^{\top} \ell_t  \\
    &= \ell_t.
\end{align*}
In above derivations, the first step simply substitutes the definition of loss estimator, the second step holds due to the sampling scheme of~\pref{alg:ell-p-ball} (see~\pref{line:make prediction}), and the third step is because of the sampling mechanism in base algorithm (see~\pref{alg:ell-p-ball-base}). This finishes the proof.
\end{proof}

\subsection{Regret Decomposition}
\label{appendix:regret-decompose}
We introduce shifted comparators $u_t'=(1-\gamma)u_t$ and $\ucirc_k'=(1-\gamma)\ucirc_k$ to ensure that $u_t'\in\calX'$ for $t\in [T]$ and $\ucirc_k'\in\calX'$ for $k\in[S]$, where $\calX'=\{x \mid  \|x\|_p\leq 1-\gamma\}$. Based on the unbiasedness of $\ellhat_t$ and $\ellbar_t$, the expected regret can be decomposed as
\begin{align*}
    &\E[\RegS] \\
    & = \mathbb{E}\left[\sum_{t=1}^T\inner{x_t, \ell_t}-\sum_{t=1}^T\inner{u_t, \ell_t}\right] \\
    & =\mathbb{E}\left[\sum_{t=1}^T\inner{x_t, \ell_t}\right]-\mathbb{E}\left[\sum_{t=1}^T\inner{u_t', \ellhat_t}\right] + \mathbb{E}\left[\sum_{t=1}^T\inner{u_t'-u_t,\ell_t}\right] \\
    & =(1-\beta)\mathbb{E}\left[\sum_{t=1}^T\sum_{i=1}^t\sleep_{t,i}\inner{\wt{a}_t^{(i)}, \ell_t}\right]-\mathbb{E}\left[\sum_{t=1}^T\inner{u_t', \ellhat_t}\right] + \mathbb{E}\left[\sum_{t=1}^T\inner{u_t'-u_t,\ell_t}\right]\\
    & =\mathbb{E}\left[\sum_{t=1}^T\sum_{i=1}^t\sleep_{t,i}\inner{\wt{a}_t^{(i)}, \ellbar_t}\right]-\mathbb{E}\left[\sum_{t=1}^T\inner{u_t', \ellhat_t}\right] + \mathbb{E}\left[\sum_{t=1}^T\inner{u_t'-u_t,\ell_t} - \beta\sum_{t=1}^T\sum_{i=1}^t\sleep_{t,i}\inner{\wt{a}_t^{(i)}, \ell_t}\right] \\
    & =\mathbb{E}\left[\sum_{t=1}^T\sum_{i=1}^t\sleep_{t,i}\cost_{t,i}\right]-\mathbb{E}\left[\sum_{t=1}^T\inner{u_t', \ellhat_t}\right] + \mathbb{E}\left[\sum_{t=1}^T\inner{u_t'-u_t,\ell_t} - \beta\sum_{t=1}^T\sum_{i=1}^t\sleep_{t,i}\inner{\wt{a}_t^{(i)}, \ell_t}\right],
\end{align*}
where the third equation holds because of the sampling scheme of $x_t$: with probability $\beta$, the action $x_t$ is uniformly sampled from $\{ \pm e_n \}$, $n\in[d]$; with probability $1-\beta$, the action is sampled from $\{( \tilde{a}_t^{(i)}, \xi_t^{(i)})\}_{i=1}^t$ according to $\sleep_{t}$. In the last step, we recall that the notation $\cost_t\in \mathbb{R}^t$ is defined by $\cost_{t,i}=\innersmall{\wt{a}_t^{(i)}, \ellbar_t}$ for all $i\in [t]$. 

We further decompose the above regret into several intervals. To this end, we split the horizon to a partition $\I_1,\ldots, \I_S$. Let $j_k$ be the start time stamp of $\calI_k$. Note again that we use $\ucirc_{k}\in\calX$ to denote the comparator in $\calI_k$ for $k\in [S]$, which means that $u_t = \ucirc_{k}$ for all $t \in \I_k$. Then we have
\begin{align}
    & \E[\RegS] \nonumber \\
    & \leq \mathbb{E}\left[\sum_{k=1}^S \sum_{t \in \I_k} \left(\sum_{i=1}^t\sleep_{t,i}\cost_{t,i} - \inner{\ucirc_{k}',\hat{\ell}_t}\right)\right] + \mathbb{E}\left[\sum_{t=1}^T\inner{u_t'-u_t,\ell_t} - \beta\sum_{t=1}^T\sum_{i=1}^t\sleep_{t,i}\inner{\wt{a}_t^{(i)}, \ell_t}\right] \nonumber\\
    & = \mathbb{E}\left[\sum_{k=1}^S \sum_{t \in \I_k} \inner{\sleep_{t}-e_{j_k}, \cost_{t}} + \sum_{k=1}^S \sum_{t \in \I_k} \left(\inner{e_{j_k},\cost_t} - \inner{\ucirc_k',\hat{\ell}_t}\right)\right]\nonumber\\
    &\qquad +\mathbb{E}\left[\sum_{t=1}^T\inner{u_t'-u_t,\ell_t} - \beta\sum_{t=1}^T\sum_{i=1}^t\sleep_{t,i}\inner{\wt{a}_t^{(i)}, \ell_t}\right] \nonumber\\
    & = \mathbb{E}\left[\sum_{k=1}^S \sum_{t \in \I_k}\inner{\sleep_{t}-e_{j_k}, \cost_{t}-\injbias_t} + \sum_{k=1}^S \sum_{t \in \I_k} \left(\inner{e_{j_k},\cost_t} - \inner{\ucirc_k',\hat{\ell}_t}\right)\right]\nonumber\\
    &\qquad +\E\left[\sum_{k=1}^S\sum_{t\in\calI_k}\inner{\sleep_{t}-e_{j_k}, \injbias_t}\right] +\mathbb{E}\left[\sum_{t=1}^T\inner{u_t'-u_t,\ell_t} - \beta\sum_{t=1}^T\sum_{i=1}^t\sleep_{t,i}\inner{\wt{a}_t^{(i)}, \ell_t}\right]\nonumber\\
    & = \mathbb{E}\left[\sum_{k=1}^S \sum_{t \in \I_k} \left(\sum_{i\in[t]}\sleep_{t,i}\costhat_{t,i}-\costhat_{t,j_k}\right) + \sum_{k=1}^S \sum_{t \in \I_k} \left(\inner{e_{j_k},\cost_t} - \inner{\ucirc_k',\hat{\ell}_t}\right)\right]\tag{$\costhat_{t,i} = \cost_{t,i} - \injbias_{t,i}$ for $i \in [t]$}\\
    &\qquad +\mathbb{E}\left[\sum_{k=1}^S\sum_{t\in\calI_k}\sum_{i=1}^t(\sleep_{t,i}-e_{j_k})\injbias_{t,i}\right] +\mathbb{E}\left[\sum_{t=1}^T\inner{u_t'-u_t,\ell_t} - \beta\sum_{t=1}^T\sum_{i=1}^t\sleep_{t,i}\inner{\wt{a}_t^{(i)}, \ell_t}\right]\nonumber\\
    & = \mathbb{E}\left[\sum_{k=1}^S \sum_{t \in \I_k} \sum_{i\in[t]}\inner{\combine_t-e_{j_k}, \costhat_{t}} + \sum_{k=1}^S \sum_{t \in \I_k} \left(\inner{\wt{a}_t^{(j_k)},\ellbar_t} - \inner{\ucirc_k',\hat{\ell}_t}\right)\right]\nonumber\\
    &\qquad +\mathbb{E}\left[\sum_{k=1}^S\sum_{t\in\calI_k}\sum_{i=1}^t(\sleep_{t,i}-e_{j_k})\injbias_{t,i}\right] +\mathbb{E}\left[\sum_{t=1}^T\inner{u_t'-u_t,\ell_t} - \beta\sum_{t=1}^T\sum_{i=1}^t\sleep_{t,i}\inner{\wt{a}_t^{(i)}, \ell_t}\right] \nonumber\\
    & = \mathbb{E}\Bigg[\underbrace{\sum_{k=1}^S \sum_{t \in \I_k} \inner{\combine_{t} - e_{j_k},\costhat_t}}_{\metareg} + \underbrace{\sum_{k=1}^S \sum_{t \in \I_k} \inner{{a}_t^{(j_k)}-\ucirc_k',\ellhat_t}}_{\basereg} + \underbrace{\sum_{t=1}^T \sum_{i=1}^t \sleep_{t,i}\injbias_{t,i}}_{\posterm}  \nonumber\\
    & \qquad \qquad \qquad -\underbrace{\sum_{k=1}^S \sum_{t \in \I_k} \injbias_{t,j_k}}_{\negterm} + \underbrace{\sum_{t=1}^T\inner{u_t'-u_t,\ell_t} - \beta\sum_{t=1}^T\sum_{i=1}^t\sleep_{t,i}\inner{\wt{a}_t^{(i)}, \ell_t}}_{\bias}\Bigg],\label{eq:decompose-ell-p}
\end{align}
where the second-last equality is due to the constructions of $\sleep_{t}$ and $\costhat_t$ (see~\pref{line:estimator meta} in~\pref{alg:ell-p-ball}),
\begin{align*}
    \inner{\combine_t,\costhat_t} & = \sum_{i\in[t]}\combine_{t,i}\costhat_{t,i}+\sum_{i>t}\combine_{t,i}\costhat_{t,i} =\sum_{i\in[t]}\sleep_{t,i}\left(\sum_{j\in[t]}\combine_{t,j}\right)\costhat_{t,i}+\sum_{i>t}\combine_{t,i}\sum_{j\in[t]}\sleep_{t,j}\costhat_{t,j}\\
    &=\sum_{i\in[t]}\sleep_{t,i}\left(\sum_{j\in[t]}\combine_{t,j}+\sum_{i>t}\combine_{t,i}\right)\costhat_{t,i} = \sum_{i\in[t]}\sleep_{t,i}\costhat_{t,i},
\end{align*}
and the last equality is based on the definition of $\wt{a}_{t}^{(i)}$ and $a_t^{(i)}$ and the following equation:
\begin{align*}
    & \E\left[\inner{\tilde{a}_t^{(i)},\bar{\ell}_t} - \inner{u,\hat{\ell}_t}\right] = \E\left[\inner{\tilde{a}_t^{(i)},\E_{\sleep_{t}}[\bar{\ell}_t]} - \inner{u,\hat{\ell}_t}\right] = \E\left[\inner{\tilde{a}_t^{(i)},\ell_t} - \inner{u,\hat{\ell}_t}\right]\\
    & = \E\left[\inner{\E_{\xi_t^{(i)}}[\tilde{a}_t^{(i)}],\ell_t} - \inner{u,\hat{\ell}_t}\right] = \E\left[\inner{a_t^{(i)},\E[\hat{\ell}_t]} - \inner{u,\hat{\ell}_t}\right] = \E\left[\inner{a_t^{(i)} -u,\hat{\ell}_t}\right].
\end{align*}

As a consequence, we upper bound the expected switching regret by five terms as shown in~\pref{eq:decompose-ell-p}, including: \metareg, \basereg, \posterm, \negterm, and \bias. In the following, we will bound each term respectively.

\subsection{Bounding \bias~and \posterm}
\label{appendix:bound-part1}
\paragraph{\bias.} \bias~can be simply bounded by $(\beta+\gamma) T$ as
\begin{align}\label{eqn:bias-term-bound}
    &\sum_{t=1}^T\inner{u_t'-u_t, \ell_t} - \beta\sum_{t=1}^T\sum_{i=1}^t\sleep_{t,i}\inner{\wt{a}_t^{(i)}, \ell_t}\nonumber\\
    &\leq \sum_{t=1}^T\left((1-\gamma)-1\right)\inner{u_t, \ell_t}  + \beta T\nonumber\\
    &\leq \sum_{t=1}^T\left(1-(1-\gamma)\right)+\beta T\nonumber \\
    &\leq (\beta+\gamma) T.
\end{align}
where the first and second inequalities hold  because we have $\abs{\ell_t^\T x} \leq 1$ for any $x \in \X$ and $t\in[T]$.

\paragraph{\posterm.} According to the definition of $\injbias_{t,i}$, we show that \posterm~is at most 
\begin{align}\label{eqn:positive-term-bound}
    \frac{1}{\Correct T(1-\beta)} \sum_{t=1}^T \sum_{i=1}^t \frac{\sleep_{t,i} (1 - \norm{a_t^{(i)}}_p)}{1 - \sum_{j=1}^t \sleep_{t,j} \norm{a_t^{(j)}}_p} = \frac{1}{\Correct(1-\beta)}\leq \frac{2}{\Correct}.
\end{align}

Hence, it remains to evaluate \basereg~and~\metareg, and in the following two subsections we present their upper bounds, respectively.

\subsection{Bounding \basereg}
\label{appendix:base-reg}
In order to bound \basereg, we need to introduce the following  lemma proven in~\citep{ALT'18:hybrid-barrier}, which shows that the dual local norm with respect to the regularizer $R(x)=-\log(1-\|x\|_p^p)$ is well bounded. This will later be shown to be crucial in controlling the stability of $a_t$ updated by the online mirror descent shown in~\pref{alg:ell-p-ball-base}.
\begin{lemma}[Lemma 2 in \citep{ALT'18:hybrid-barrier}]
\label{lem:stab lemma}
Let $x$, $\ell\in \mathbb{R}^d$ such that $\|x\|_p < 1$, $\|\ell\|_0=1$ and $\|\ell\|_2\leq 1$. Let $y\in \mathbb{R}^d$ such that $\nabla R(y)\in [\nabla R(x), \nabla R(x)+\ell]$, $R(x)=-\log(1-\|x\|_p^p)$. Then, we have for $p\in (1,2]$,
\begin{align*}
    \|\ell\|_{y,*}^2\leq \frac{2^{\frac{3}{p-1}}(1-\|x\|_p^p)}{p(p-1)}\sum_{n=1}^d\left(|x_n|^{2-p}+|\ell_n|^{\frac{2-p}{p-1}}\right)\ell_n^2.
\end{align*}
In above, for a vector $h \in \R^d$, $\norm{h}_0 \triangleq \#\{n \mid h_n \neq 0\}$ denotes the number of non-zero entries, $\norm{h}_{x} \triangleq \sqrt{h^\T \nabla^2 R(x) h}$ denotes the local norm induced by $R$ at $x$, and $\norm{h}_{x,*} \triangleq \sqrt{h^\T (\nabla^2 R(x))^{-1} h}$ denotes the dual local norm. 
\end{lemma}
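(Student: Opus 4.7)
The statement is a local-norm estimate for the regularizer $R(x)=-\log(1-\|x\|_p^p)$ evaluated at an intermediate point $y$ in the dual segment $[\nabla R(x),\nabla R(x)+\ell]$, under the assumption that $\ell$ has a single non-zero coordinate, say index $n_0$ with $|\ell_{n_0}|\le 1$. My plan is to (i) extract a clean diagonal lower bound on $\nabla^2 R(y)$, which already reduces the problem to controlling the single quantity $f(y)\cdot|y_{n_0}|^{2-p}$ where $f(x):=1-\|x\|_p^p$; and then (ii) use the relation $\nabla R(y)=\nabla R(x)+t\ell$ for some $t\in[0,1]$ to compare $y$ to $x$ coordinate-by-coordinate.

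\textbf{Step 1: diagonal domination of the Hessian.} Writing $R=-\log f$, a direct differentiation gives
\begin{equation*}
\nabla^2 R(x)=\frac{p(p-1)}{f(x)}\,\mathrm{diag}\!\left(|x_n|^{p-2}\right)+\frac{1}{f(x)^2}\nabla f(x)\nabla f(x)^\T.
\end{equation*}
The rank-one correction is positive semidefinite, so $\nabla^2 R(x)\succeq \frac{p(p-1)}{f(x)}\,\mathrm{diag}(|x_n|^{p-2})$, and consequently $(\nabla^2 R(y))^{-1}\preceq \frac{f(y)}{p(p-1)}\,\mathrm{diag}(|y_n|^{2-p})$. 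Since $\ell$ is supported on the single coordinate $n_0$, this reduces the target to
\begin{equation*}
\|\ell\|_{y,*}^2\le \frac{f(y)}{p(p-1)}\,|y_{n_0}|^{2-p}\,\ell_{n_0}^2,
\end{equation*}
and it remains to bound $f(y)\cdot|y_{n_0}|^{2-p}$ in terms of $f(x)$ and the quantities on the right-hand side of the claim.

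\textbf{Step 2: comparing $y$ to $x$ via the dual equation.} From $\nabla R(y)=\nabla R(x)+t\ell$ and the explicit formula $\nabla R(z)_n=p\,\mathrm{sgn}(z_n)|z_n|^{p-1}/f(z)$, the coordinates $n\ne n_0$ yield $\mathrm{sgn}(y_n)|y_n|^{p-1}/f(y)=\mathrm{sgn}(x_n)|x_n|^{p-1}/f(x)$, so
\begin{equation*}
y_n=\kappa^{1/(p-1)}x_n\quad\text{for all }n\ne n_0,\qquad\text{where }\kappa:=f(y)/f(x).
\end{equation*}
Substituting into the definition of $f(y)$ and rearranging gives the identity $f(y)=f(x)-|y_{n_0}|^p+\kappa^{p/(p-1)}|x_{n_0}|^p$. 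Combined with the $n_0$-coordinate equation $p\,\mathrm{sgn}(y_{n_0})|y_{n_0}|^{p-1}/f(y)=p\,\mathrm{sgn}(x_{n_0})|x_{n_0}|^{p-1}/f(x)+t\ell_{n_0}$, this furnishes two scalar relations that pin down $y_{n_0}$ and $\kappa$ in terms of $x$ and $\ell_{n_0}$.

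\textbf{Step 3: case split and the constant $2^{3/(p-1)}$.} I plan to split on whether $\kappa\le 2$ or $\kappa>2$, and whether $|y_{n_0}|\le 2|x_{n_0}|$ or $|y_{n_0}|>2|x_{n_0}|$. When $\kappa$ and $|y_{n_0}|$ are comparable to $f(x)$ and $|x_{n_0}|$ respectively, the first coordinate-wise term $|x_{n_0}|^{2-p}\ell_{n_0}^2$ already dominates (up to a factor absorbed into $2^{3/(p-1)}$). In the opposite regime, the $n_0$-coordinate equation forces $|y_{n_0}|^{p-1}/f(y)\lesssim|\ell_{n_0}|$, so that $|y_{n_0}|^{2-p}\cdot f(y)$ can be bounded by a multiple of $|\ell_{n_0}|^{(2-p)/(p-1)}\cdot f(y)^2/|y_{n_0}|^{p}$, which after using $|y_{n_0}|^p\le f(x)/(\text{const})\cdot\kappa^{p/(p-1)}$ collapses to the second coordinate-wise term $|\ell_{n_0}|^{(2-p)/(p-1)}\ell_{n_0}^2$.

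\textbf{Main obstacle.} The computation in Step~1 is mechanical, but the bookkeeping in Step~3 is delicate: because $p\in(1,2]$, the exponent $1/(p-1)$ can be arbitrarily large, so the doubling thresholds have to be chosen so that each case split loses only a bounded geometric factor when raised to the $1/(p-1)$ power. Tracking these factors carefully is what produces the sharp constant $2^{3/(p-1)}$ in the final bound, and this is where I expect most of the work to lie.
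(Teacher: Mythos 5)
This lemma is cited from \citep{ALT'18:hybrid-barrier}, and the paper does not reprove it, so there is no internal proof to compare against; I can only assess whether your argument would stand on its own. Step~1 is correct: writing $R=-\log f$ with $f(x)=1-\|x\|_p^p$, the Hessian decomposes as stated, the rank-one term is PSD, and hence $(\nabla^2 R(y))^{-1}\preceq \frac{f(y)}{p(p-1)}\diag(|y_n|^{2-p})$; since $\ell$ is $1$-sparse this reduces the claim to showing $f(y)\,|y_{n_0}|^{2-p}\le 2^{3/(p-1)}f(x)\big(|x_{n_0}|^{2-p}+|\ell_{n_0}|^{(2-p)/(p-1)}\big)$. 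The observation in Step~2 that $y_n=\kappa^{1/(p-1)}x_n$ for $n\ne n_0$ (with $\kappa=f(y)/f(x)$) is also correct.

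However, there are two genuine problems. First, the "identity" you state for $f(y)$ is wrong: substituting $y_n=\kappa^{1/(p-1)}x_n$ into $f(y)=1-\sum_n|y_n|^p$ gives $f(y)=1-|y_{n_0}|^p-\kappa^{p/(p-1)}\big(1-f(x)-|x_{n_0}|^p\big)$, whereas your expression $f(y)=f(x)-|y_{n_0}|^p+\kappa^{p/(p-1)}|x_{n_0}|^p$ holds only when $\kappa=1$. Since the entire point of Step~3 is to control $\kappa$ and $|y_{n_0}|$ via these two scalar relations, the error propagates into the part of the argument you have not written. Second and more fundamentally, Step~3 is not a proof: it is a plan. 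You announce a case split on $\kappa\lessgtr 2$ and $|y_{n_0}|\lessgtr 2|x_{n_0}|$ and assert that the "opposite regime" collapses to the $|\ell_{n_0}|^{(2-p)/(p-1)}$ term, but you never show this, and you explicitly flag the bookkeeping (getting the constant to come out to exactly $2^{3/(p-1)}$ after raising doubling thresholds to the $1/(p-1)$ power) as the part where "most of the work" lies. That bookkeeping \emph{is} the lemma. A quick sanity check shows the mechanism is plausible --- e.g.\ when $x_{n_0}=0$ the $n_0$-coordinate equation forces $\kappa\le 1$ and $|y_{n_0}|^{p-1}\le |\ell_{n_0}| f(y)/p$, which indeed yields the $|\ell_{n_0}|^{(2-p)/(p-1)}$ term --- but the general case ($x_{n_0}\ne 0$, both signs of $\ell_{n_0}$, $\kappa$ possibly exceeding $1$) is exactly where the corrected coupled relations must be used carefully, and none of that is done. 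As submitted, the proposal reduces the lemma to a nontrivial scalar inequality and then stops.
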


Then we are ready to bound~\basereg~for each $k\in[S]$. Note that for each $k\in[S]$, as $j_k$ is the start time stamp of interval $\calI_k$, and base algorithm $\calB_{t}$ starts at round $t$, we know that $\sum_{t\in\calI_k}\innersmall{a_t^{(j_k)}-\ucirc_k, \ellhat_t}$ is in fact the (estimated) static regret against comparator $\ucirc_k$ for $\calB_{j_k}$.

\begin{lemma}
\label{lem:base-reg}
For an arbitrary interval $\calI$ started at round $j$, if $\gamma = 4d\baseLR_{j'}$ for all $j'\in[T]$, \pref{alg:ell-p-ball} ensures that the base regret of $\calB_j$ with learning rate $\baseLR $ for any comparator $u\in\calX'$ is at most
\begin{align}\label{eq:base-reg-bound}
    \E\left[\sum_{t \in \I} \inner{a_t^{(j)} - u,\hat{\ell_t}}\right] \leq \frac{\log(1/\gamma)}{\baseLR } + \frac{2^{\frac{4}{p-1}}d\baseLR }{(p-1)(1-\beta)}  \sum_{t \in \I} \frac{1 - \norm{a_t^{(j)}}_p}{1-\sum_{i=1}^{t} \sleep_{t,i}\norm{a_t^{(i)}}_p}.
\end{align}
\end{lemma}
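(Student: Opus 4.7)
The plan is to follow the standard Online Mirror Descent (OMD) template applied to the base algorithm $\calB_j$, which uses the regularizer $R(x)=-\log(1-\|x\|_p^p)$ on the clipped domain $\calX'$, and to control the stability term via the local-norm bound of~\pref{lem:stab lemma}. Since $a_j^{(j)}=\argmin_{x\in\calX'}R(x)$ equals the origin, standard OMD analysis yields
\begin{equation*}
\sum_{t\in\calI}\inner{a_t^{(j)}-u,\hat{\ell}_t} \;\leq\; \frac{R(u)}{\eta} \;+\; \frac{1}{\eta}\sum_{t\in\calI}\Big(\inner{a_t^{(j)}-a_{t+1}^{(j)},\,\eta\hat{\ell}_t}-D_R(a_{t+1}^{(j)},a_t^{(j)})\Big).
\end{equation*}
For $u\in\calX'$ we have $\|u\|_p^p\leq(1-\gamma)^p\leq 1-\gamma$ (since $p\geq 1$), so $R(u)\leq\log(1/\gamma)$. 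Each stability summand is bounded by $\tfrac{1}{2}\|\eta\hat{\ell}_t\|_{z_t,*}^2$ for some intermediate $z_t$, by a standard second-order Taylor expansion of the conjugate $R^*$.

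Next, to invoke~\pref{lem:stab lemma} on $\ell=\eta\hat{\ell}_t$, I check its two hypotheses: $\hat{\ell}_t$ is nonzero only on the coordinate $n_t$ of $x_t$ (and only when $\rho_t=0$ and $\xi_t=0$), so $\|\eta\hat{\ell}_t\|_0\leq 1$; and the clipping enforces $1-\sum_i\sleep_{t,i}\|a_t^{(i)}\|_p\geq\gamma$, so under $\gamma=4d\eta$ we get $\|\eta\hat{\ell}_t\|_2=\eta|\hat{\ell}_{t,n_t}|\leq\frac{\eta d}{(1-\beta)\gamma}=\frac{1}{4(1-\beta)}\leq 1$. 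Applying the lemma, and using the identity $|\ell_n|^{(2-p)/(p-1)}\ell_n^2=|\ell_n|^q$ with $q=p/(p-1)$, gives
\begin{equation*}
\|\eta\hat{\ell}_t\|_{z_t,*}^2 \;\leq\; \frac{2^{3/(p-1)}(1-\|a_t^{(j)}\|_p^p)}{p(p-1)}\bigg(\sum_n|a_{t,n}^{(j)}|^{2-p}(\eta\hat{\ell}_{t,n})^2+\sum_n|\eta\hat{\ell}_{t,n}|^q\bigg).
\end{equation*}

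A direct computation from the sampling scheme (in the spirit of~\pref{lem:unbiasedness}) gives $\E_t[\hat{\ell}_{t,n}^2]=\frac{d\,\ell_{t,n}^2}{(1-\beta)(1-\sum_i\sleep_{t,i}\|a_t^{(i)}\|_p)}$ for every $n$. Using the Young-type inequality $\sum_n|a_n|^{2-p}\ell_n^2\leq\frac{(2-p)\|a\|_p^p}{p}+\frac{2(p-1)\|\ell\|_q^q}{p}\leq 1$ (valid since $\|a_t^{(j)}\|_p\leq 1$ and $\|\ell_t\|_q\leq 1$), the expectation of the first sum is bounded by $\frac{\eta^2 d}{(1-\beta)(1-\sum_i\sleep_{t,i}\|a_t^{(i)}\|_p)}$. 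For the second sum, a parallel direct calculation gives $\E_t\big[\sum_n|\eta\hat{\ell}_{t,n}|^q\big]=\frac{\eta^q d^{q-1}\|\ell_t\|_q^q}{(1-\beta)^{q-1}(1-\sum_i\sleep_{t,i}\|a_t^{(i)}\|_p)^{q-1}}$, which factors as $\frac{\eta^2 d}{(1-\beta)(1-\sum_i\sleep_{t,i}\|a_t^{(i)}\|_p)}\cdot\big(\frac{\eta d}{(1-\beta)(1-\sum_i\sleep_{t,i}\|a_t^{(i)}\|_p)}\big)^{q-2}\|\ell_t\|_q^q$, and the extra $(q-2)$-th power is at most $1$ under $\gamma=4d\eta$, $\beta\leq\tfrac{1}{2}$, $q\geq 2$. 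Combining, using $1-\|a\|_p^p\leq p(1-\|a\|_p)$ and the elementary inequality $2\cdot 2^{3/(p-1)}\leq 2^{4/(p-1)}$ for $p\in(1,2]$, then dividing by $\eta$ and summing over $t\in\calI$, finishes the proof.

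The main obstacle will be controlling the second sum $\sum_n|\eta\hat{\ell}_{t,n}|^q$: a naive bound on $|\hat{\ell}_{t,n_t}|$ alone introduces undesirable $d$-powers, and the carefully matched scaling $\gamma=4d\eta$ is precisely what ensures $\frac{\eta d}{(1-\beta)(1-\sum_i\sleep_{t,i}\|a_t^{(i)}\|_p)}\leq 1$, so that the $(q-2)$-th power can be absorbed. An analogous subtlety affects the first sum: Young's inequality is essential to bound $\sum_n|a_n|^{2-p}\ell_n^2$ by $1$, avoiding an $\|\ell_t\|_2^2$ dependence, which is not bounded by $1$ when $q>2$ (i.e.\ $p<2$).
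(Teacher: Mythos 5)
Your proof is correct and follows essentially the same approach as the paper: standard OMD regret decomposition with regularizer $R$, penalty bounded by $\log(1/\gamma)$, verification of the hypotheses of Lemma~\ref{lem:stab lemma}, and then bounding the conditional expectation of each of the two resulting sums by $\frac{\eta^2 d}{(1-\beta)(1-\sum_i\hat p_{t,i}\|a_t^{(i)}\|_p)}$ before concluding with $1-\|a\|_p^p \le p(1-\|a\|_p)$. The only minor differences are cosmetic: for the first sum you invoke Young's inequality where the paper applies \Holder's inequality directly (both yield $\sum_n |a_n|^{2-p}\ell_n^2 \le 1$), and for the second sum you compute $\E_t[\sum_n|\eta\hat\ell_{t,n}|^q]$ exactly and then factor out the $(q-2)$-th power of the bounded quantity $\frac{\eta d}{(1-\beta)(1-\sum_i\hat p_{t,i}\|a_t^{(i)}\|_p)}$, whereas the paper first bounds $|\eta\hat\ell_{t,n}|^{\frac{2-p}{p-1}} \le |\tfrac{(x_t x_t^\top\ell_t)_n}{2}|^{\frac{2-p}{p-1}}$ and then takes expectations --- these are algebraically equivalent.
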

\begin{proof}
Since the base algorithm $\B_j$ performs the online mirror descent over loss $\hat{\ell}_t$ with learning rate $\baseLR $, see update in~\pref{eq:base-OMD-update}, according to the standard analysis of OMD (see~\pref{lemma:OMD-bubeck}) we have
\begin{align*}
    \E\left[\sum_{t\in\calI} \inner{a_t^{(j)} - u,\hat{\ell_t}}\right] \leq \frac{R(u)-R(a_j^{(j)})}{\baseLR } + \frac{1}{\baseLR } \sum_{t\in \calI} \E\left[ D_{R^*}\left(\nabla R(a_t^{(j)}) - \baseLR  \hat{\ell}_t, \nabla R(a_t^{(j)})\right)\right].
\end{align*}
Consider the first term. As $a_j^{(j)}=\argmin_{x\in\calX}R(x)$ and $u\in\calX'=\{x \mid \|x\|_p\leq 1-\gamma\}$, we have
\begin{align}
    \label{eq:first-desire}
    R(u)-R(a_j^{(j)}) \leq -\log(1-(1-\gamma)) \leq -\log\gamma.
\end{align}
For the second term, in the following we will employ~\pref{lem:stab lemma} to show that 
\begin{align}
    \label{eq:second-desire}
    &\mathbb{E}_t\left[D_{R^*}\left(\nabla R(a_t^{(j)})-\baseLR \ellhat_t, \nabla R(a_t^{(j)})\right)\right] \leq \baseLR ^2\cdot \frac{d\cdot2^{\frac{4}{p-1}}}{(p-1)(1-\beta)}\cdot\frac{1-\|a_t^{(j)}\|_p}{1-\sum_{i=1}^t\sleep_{t,i}\|a_t^{(i)}\|_p}.
\end{align}
To this end, we need to verify the condition of~\pref{lem:stab lemma}. In fact, according to the definition of the base loss estimator in~\pref{eq:base-estimator}, $\ellhat_t$ is a non-zero vector only when~\pref{alg:ell-p-ball} samples from one of the base algorithm instances and $\xi_t=0$, meaning that $x_t=\pm e_n$ for some $n\in [d]$ according to~\pref{alg:ell-p-ball-base}.  Using the fact that $a_{t}^{(i)}\in\calX'$ and $\beta \leq \frac{1}{2}$, we have $\norm{a_{t}^{(i)}}_p \leq 1 - \gamma$ and 
\begin{align*}
    \|\baseLR \ellhat_t\|_2 \leq\frac{\baseLR d}{(1-\beta)(1-\sum_{i=1}^t\sleep_{t,i}(1-\gamma))}\leq \frac{\baseLR d}{\gamma(1-\beta)}\leq \frac{2\baseLR d}{\gamma}\leq \frac{1}{2},
\end{align*}
where the last inequality is because of the choice of $\gamma=4d\baseLR $. In addition, based on the definition of $\ellhat_t$, we have $\|\baseLR \ellhat_t\|_0=1$. Therefore, we can apply \pref{lem:stab lemma} and obtain that
\begin{align*}
    &\mathbb{E}_t\left[D_{R^*}\left(\nabla R(a_t^{(j)})-\baseLR \ellhat_t, \nabla R(a_t^{(j)})\right)\right] \\
    &= \mathbb{E}_t\left[\|\baseLR \ellhat_t\|_{y_t,*}^2\right] \\
    &\leq \baseLR ^2\cdot \frac{2^{\frac{3}{p-1}}(1-\|a_t^{(j)}\|_p^p)}{p(p-1)}\sum_{n=1}^d\mathbb{E}_t\left[\left(|a_{t,n}^{(j)}|^{2-p}+|\baseLR \ellhat_{t,n}|^{\frac{2-p}{p-1}}\right)\ellhat_{t,n}^2\right]\\
    & = \baseLR ^2\cdot \frac{2^{\frac{3}{p-1}}(1-\|a_t^{(j)}\|_p^p) }{p(p-1)} \Bigg( \underbrace{ \sum_{n=1}^d\mathbb{E}_t\left[|a_{t,n}^{(j)}|^{2-p} \cdot \ellhat_{t,n}^2\right]}_{\term{a}} + \underbrace{ \sum_{n=1}^d\mathbb{E}_t\left[|\baseLR \ellhat_{t,n}|^{\frac{2-p}{p-1}} \cdot \ellhat_{t,n}^2\right]}_{\term{b}}\Bigg),
\end{align*}
where the first equality holds for some $y_t\in [\nabla R(a_t^{(j)})-\baseLR \ellhat_t, \nabla R(a_t^{(j)})]$ by the definition of Bregman divergence and the mean value theorem, the second inequality is by~\pref{lem:stab lemma}. The last equality splits the desired quantity into two terms, and we upper bound \term{a} and \term{b} respectively.

For \term{a}, substituting the definition of loss estimator $\ellhat_t$ (see definition in~\pref{eq:base-estimator}) yields
\begin{align*}
    &\sum_{n=1}^d\mathbb{E}_t\left[|a_{t,n}^{(j)}|^{2-p}\cdot\ellhat_{t,n}^2\right] \\
    &=\frac{d^2}{(1-\beta)(1-\sum_{i=1}^t\sleep_{t,i}\|a_t^{(i)}\|_p)^2} \sum_{n=1}^d|a_{t,n}^{(j)}|^{2-p}\cdot\sum_{\tau=1}^t\sleep_{t,\tau}\mathbb{E}_t\left[(1-\xi_t^{(\tau)})^2\wt{a}_{t,n}^{(\tau)^2}\innersmall{\wt{a}_{t}^{(\tau)}, \ell_t}^2\right]\\
    &=\frac{d^2)}{(1-\beta)(1-\sum_{i=1}^t\sleep_{t,i}\|a_t^{(i)}\|_p)^2} \sum_{n=1}^d|a_{t,n}^{(j)}|^{2-p}\cdot\sum_{\tau=1}^t\sleep_{t,\tau}(1-\|a_t^{(\tau)}\|_p)\cdot\frac{1}{d}\sum_{n'=1}^d\left[\indi\{n'=n\}\ell_{t,n'}^2\right]\\
    &=\frac{d}{(1-\beta)(1-\sum_{i=1}^t\sleep_{t,i}\|a_t^{(i)}\|_p)} \sum_{n=1}^d|a_{t,n}^{(j)}|^{2-p}\ell_{t,n}^2\\
    &\leq \frac{d}{(1-\beta)(1-\sum_{i=1}^t\sleep_{t,i}\|a_t^{(i)}\|_p)},
\end{align*}
where the last inequality is because of \holder's inequality, $\|\ell_t\|_q\leq1$ and $\|a_{t}^{(j)}\|_p\leq1$.

For \term{b}, again by definition of the loss estimator, we have
\begin{align*}
    & \sum_{n=1}^d\mathbb{E}_t\left[|\baseLR \ellhat_{t,n}|^{\frac{2-p}{p-1}}\cdot\ellhat_{t,n}^2\right] \\
    &\leq \sum_{n=1}^d\mathbb{E}_t\left[\left|\frac{\baseLR (1-\xi_t)d(x_tx_t^\top\ell_t)_n}{(1-\beta)\gamma}\right|^{\frac{2-p}{p-1}}\cdot\ellhat_{t,n}^2\right]\\
    &= \sum_{n=1}^d\mathbb{E}_t\left[\left|\frac{\baseLR (1-\xi_t)d(x_tx_t^\top\ell_t)_n}{(1-\beta)\gamma}\right|^{\frac{2-p}{p-1}}\cdot \frac{(1-\xi_t)^2d^2(x_tx_t^\top\ell_t)_n^2\cdot\mathbbm{1}\{\rho_t=0\}}{(1-\beta)^2(1-\sum_{i=1}^t\sleep_{t,i}\|a_{t}^{(i)}\|_p)^2} \right] \\\
    &\leq \sum_{n=1}^d\mathbb{E}_t\left[\left|\frac{(x_tx_t^\top\ell_t)_n}{2}\right|^{\frac{2-p}{p-1}}\cdot \frac{(1-\xi_t)^2d^2(x_tx_t^\top\ell_t)_n^2\cdot\mathbbm{1}\{\rho_t=0\}}{(1-\beta)^2(1-\sum_{i=1}^t\sleep_{t,i}\|a_{t}^{(i)}\|_p)^2} \right] \tag{$\gamma=4d\baseLR $, $1-\beta\geq \frac{1}{2}$}\\
    &\leq  \frac{d^2}{(1-\beta)^2(1-\sum_{i=1}^t\sleep_{t,i}\|a_t^{(i)}\|_p)^2}\sum_{n=1}^d\mathbb{E}_t\left[(1-\xi_t)^2(x_tx_t^\top\ell_t)_n^q\cdot\mathbbm{1}\{\rho_t=0\} \right] \tag{note that $\frac{2-p}{p-1} + 2 = q$} \\
    &\leq \frac{1}{(1-\beta)^2}\cdot \frac{d^2}{(1-\sum_{i=1}^t\sleep_{t,i}\|a_t^{(i)}\|_p)^2}\cdot\sum_{n=1}^d (1-\beta)\sum_{\tau=1}^t\sleep_{t,\tau}(1-\|a_t^{(\tau)}\|_p)\cdot\frac{1}{d}\cdot\ell_{t,n}^q \\
    &\leq\frac{d}{(1-\beta)(1-\sum_{i=1}^t\sleep_{t,i}\|a_t^{(i)}\|_p)}.
\end{align*}

Combining the above upper bounds for \term{a} and \term{b}, we obtain
\begin{align*}
    \mathbb{E}_t\left[D_{R^*}\left(\nabla R(a_t^{(j)})-\baseLR \ellhat_t, \nabla R(a_t^{(j)})\right)\right] &\leq \frac{\baseLR ^2}{1-\beta}\cdot \frac{2d\cdot2^{\frac{3}{p-1}}}{p(p-1)}\cdot\frac{1-\|a_t^{(j)}\|_p^p}{1-\sum_{i=1}^t\sleep_{t,i}\|a_t^{(i)}\|_p}\\
    &\leq \frac{\baseLR ^2}{1-\beta}\cdot \frac{d\cdot2^{\frac{4}{p-1}}}{p(p-1)}\cdot\frac{1-\|a_t^{(j)}\|_p^p}{1-\sum_{i=1}^t\sleep_{t,i}\|a_t^{(i)}\|_p}\\
    &\leq \frac{\baseLR ^2}{1-\beta}\cdot \frac{d\cdot2^{\frac{4}{p-1}}}{p-1}\cdot\frac{1-\|a_t^{(j)}\|_p}{1-\sum_{i=1}^t\sleep_{t,i}\|a_t^{(i)}\|_p}.
\end{align*}
Note that the last step is true because we have $1-\|x\|_p^p \leq p(1-\|x\|_p)$ by the following inequality
\[
    1 + p \cdot \frac{\norm{x}_p^p -1}{p} \leq \left(1 + \frac{\norm{x}_p^p - 1}{p}\right)^p,
\]
which holds due to $p\in(1,2]$ and $0\leq \|x\|_p\leq 1$ as well as the Bernoulli's inequality that $1 + r \theta \leq (1+\theta)^r$ for any $r \geq 1$ and $\theta \geq -1$.

Therefore, we finish proving the desired upper bound in~\pref{eq:second-desire}. Further combining it with the upper bound in~\pref{eq:first-desire} finishes the proof of~\pref{lem:base-reg}.
\end{proof}
We will show later that the second term in the bound shown in~\pref{eq:base-reg-bound} can in fact be cancelled by the~\negterm. Finally, we bound the term~\metareg.

\subsection{Bounding \metareg}
\label{appendix:metareg}
We prove the following lemma to bound the \metareg.
\begin{lemma}
\label{lemma:switching-regret-meta}
For an arbitrary interval $\calI\subseteq[T]$ started at round $j$, setting $\frac{\metaLR}{\Correct\gamma T}\leq \frac{1}{8}$, $\beta=8d\metaLR$ and $\mu = \frac{1}{T}$, \pref{alg:ell-p-ball} guarantees that
\begin{equation}
    \label{eq:switching-regret-meta}
    \sum_{t\in \calI} \inner{\combine_{t}-e_{j}, \costhat_t} \leq \frac{2\ln T}{\metaLR} + \metaLR \sum_{t\in\calI} \sum_{i=1}^T \combine_{t,i} \costhat_{t,i}^2+ \O\left(\frac{\abs{\I}}{\metaLR T}\right).
\end{equation}
\end{lemma}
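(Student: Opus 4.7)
The plan is to adapt the classical fixed-share multiplicative-weights analysis, using the potential $\Phi_t = -\log \combine_{t,j}$. The key preliminary step is to establish a uniform magnitude bound $\metaLR |\costhat_{t,i}| \leq 1$ so that the Taylor inequality $e^{-y} \leq 1 - y + y^2$ (valid for $|y|\leq 1$) applies at $y = \metaLR \costhat_{t,i}$. For this I would first bound $|\langle \wt{a}_t^{(i)}, \ellbar_t\rangle|$: since $\wt{M}_t \succeq (\beta/d) I$ one has $\|\wt{M}_t^{-1}\|_{\mathrm{op}} \leq d/\beta$, and combined with $\|\wt{a}_t^{(i)}\|_2 \leq 1$, $\|x_t\|_2 \leq 1$ (using $p\in(1,2]$ so that $\|\cdot\|_2 \leq \|\cdot\|_p$), and $|x_t^\top \ell_t|\leq 1$, this yields $|\langle \wt{a}_t^{(i)}, \ellbar_t\rangle| \leq d/\beta$. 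With $\beta = 8d\metaLR$, we get $\metaLR |\langle \wt{a}_t^{(i)}, \ellbar_t\rangle| \leq 1/8$. For the bias, $\|a_t^{(j)}\|_p \leq 1-\gamma$ and $1-\beta \geq 1/2$ give $b_{t,i} \leq \frac{2}{\Correct T \gamma}$, and the hypothesis $\metaLR/(\Correct \gamma T)\leq 1/8$ then forces $\metaLR b_{t,i} \leq 1/4$. Hence $\metaLR|\costhat_{t,i}| \leq 3/8 < 1$ for $i\leq t$, and the same bound extends to $i>t$ since $\costhat_{t,i}$ is then a convex combination of the earlier coordinates.

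For the per-round analysis, dropping the non-negative uniform-mixing term in the update gives $\combine_{t+1,j} \geq (1-\mu)\combine_{t,j}e^{-\metaLR \costhat_{t,j}}/Z_t$, where $Z_t = \sum_i \combine_{t,i}e^{-\metaLR \costhat_{t,i}}$. Taking logarithms yields
\[
-\log\combine_{t+1,j} + \log\combine_{t,j} \leq -\log(1-\mu) + \metaLR\costhat_{t,j} + \log Z_t.
\]
Applying the Taylor inequality above entrywise to $Z_t$, followed by $\log(1+x)\leq x$, bounds $\log Z_t \leq -\metaLR\langle\combine_t,\costhat_t\rangle + \metaLR^2 \sum_{i=1}^T \combine_{t,i}\costhat_{t,i}^2$. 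Rearranging gives the one-step inequality
\[
\metaLR\langle\combine_t - e_j,\costhat_t\rangle \leq \Phi_t - \Phi_{t+1} - \log(1-\mu) + \metaLR^2 \sum_{i=1}^T \combine_{t,i}\costhat_{t,i}^2.
\]

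Summing over $t\in\I$, the potential telescopes; dropping the non-positive final term $-\Phi_{|\I|+j}$ leaves only $\Phi_j = -\log\combine_{j,j}$. The uniform mixing guarantees $\combine_{t,i} \geq \mu/T = 1/T^2$ for all $t\geq 2$ and all $i$ (and the initial $\combine_1$ is uniform on $[T]$), so $-\log\combine_{j,j} \leq 2\log T$. The per-step mixing cost $-\log(1-1/T)\leq 2/T$ accumulates to $2|\I|/T$, which after dividing by $\metaLR$ becomes the $\O(|\I|/(\metaLR T))$ term. Dividing the one-step inequality by $\metaLR$ and summing then delivers the claimed bound. The main obstacle is the uniform magnitude bound at the start: it is precisely here that both the exploration parameter $\beta$ (controlling $\|\ellbar_t\|$) and the clipping parameter $\gamma$ (controlling $b_{t,i}$) must be simultaneously invoked, and without either one the Taylor expansion underlying the standard MWU analysis would fail.
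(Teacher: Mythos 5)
Your proposal is correct and follows essentially the same route as the paper's proof: the standard fixed-share exponential-weights analysis with the second-order inequality $e^{-y}\le 1-y+y^2$, justified by exactly the same magnitude check ($\metaLR\abs{\innersmall{\wt a_t^{(i)},\ellbar_t}}\le \metaLR d/\beta = 1/8$ from the $\beta/d$ eigenvalue floor of $\wt M_t$, plus $\metaLR b_{t,i}\le 2\metaLR/(\Correct T\gamma)\le 1/4$ from the clipping), and the same accounting of the uniform mixing via the $\mu/T$ weight floor (giving $2\ln T$) and the per-step $-\log(1-\mu)=\O(1/T)$ cost (giving $\O(\abs{\I}/(\metaLR T))$). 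The only differences are cosmetic: you track the potential $-\log \combine_{t,j}$ and lower-bound $\combine_{t+1,j}\ge(1-\mu)v_{t+1,j}$ directly, while the paper splits $\ln(v_{t+1,j}/\combine_{t,j})$ into the two analogous pieces, and you use Cauchy--Schwarz where the paper uses H\"older.
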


\begin{proof} 
Note that the meta algorithm essentially performs the exponential weights with a fixed-share update and sleeping expert. 
Define $v_{t+1,i} \triangleq \frac{\combine_{t,i} \exp(-\metaLR \costhat_{t,i})}{\sum_{t=1}^T \combine_{t,i} \exp(-\metaLR \costhat_{t,i})}$ for all $i\in [T]$. Then $\combine_{t+1,i} = \frac{\mu}{T} + (1-\mu) v_{t+1,i}$. Note that
\begin{align*}
    & \inner{\combine_t,\costhat_t} + \frac{1}{\metaLR}\ln\left( \sum_{i=1}^T \combine_{t,i} \exp(-\metaLR \costhat_{t,i})\right) \\
    &\leq \inner{\combine_t,\costhat_t} + \frac{1}{\metaLR}\ln\left( \sum_{i=1}^T \combine_{t,i} (1 -\metaLR \costhat_{t,i} + \metaLR^2 \costhat_{t,i}^2)\right) \\
     &=\inner{\combine_t,\costhat_t} + \frac{1}{\metaLR}\ln\left( 1 -\metaLR \inner{\combine_t,\costhat_t} + \metaLR^2 \sum_{i=1}^T \combine_{t,i} \costhat_{t,i}^2\right) \\
     &\leq \metaLR \sum_{i=1}^T \combine_{t,i} \costhat_{t,i}^2.
\end{align*}
The first inequality is because $\exp(-x)\leq 1-x+x^2$ holds for $x\geq -\frac{1}{2}$. To show that $\epsilon\max_{i\in[t]}|\costhat_{t,i}|\leq\frac{1}{2}$, we have
\begin{align*}
    \metaLR\max_{i\in [t]}\left|\costhat_{t,i}\right| = \metaLR  \max_{i\in [t]}\left|\inner{\tilde{a}_t^{(i)}, \wt{M}_t^{-1}x_t x_t^\T \ell_t} - \injbias_i\right| \leq \metaLR \max_{i\in [t]}\left| \wt{a}_t^{(i)^\top} \wt{M}_t^{-1}x_t\right|+\metaLR \max_{i\in [t]}\left|\injbias_{t,i}\right|.
\end{align*}
We can bound the first term by \holder's inequality
\begin{align*}
    \metaLR \max_{i\in [t]}\left| \wt{a}_t^{(i)^\top} \wt{M}_t^{-1}x_t\right|\leq \metaLR\max_{i\in [t]}\|\wt{a}_t^{(i)}\|_p\|\wt{M}_t^{-1}x_t\|_q\leq \metaLR\|\wt{M}_t^{-1}x_t\|_2\leq \frac{\metaLR d}{\beta}\|x_t\|_2\leq \frac{\metaLR d}{\beta}\|x_t\|_p\leq \frac{\epsilon d}{\beta},
\end{align*}
where $\frac{1}{p}+\frac{1}{q}=1$. The second inequality is because $\|\wt{a}_{t}^{(i)}\|_p\leq 1$ and $p\leq 2 \leq q$. The third inequality is because $\wt{M}_t$ has the smallest eigenvalue $\frac{\beta}{d}$. By the definition of $\injbias_{t,i}$, we can bound the second term as
\begin{align*}
    \metaLR\max_{i\in[t]}|\injbias_{t,i}|\leq \frac{\metaLR}{\Correct T(1-\beta)}\cdot \frac{1}{\gamma}\leq \frac{2\metaLR}{\Correct T\gamma}.
\end{align*}
Therefore, according to the choice of $\metaLR$, $\gamma$ and $\Correct$, we have $\metaLR\max_{i\in[t]}|\costhat_{t,i}|\leq \frac{1}{2}$.
Furthermore, by the definition of $v_{t+1,i}$, we have $\sum_{j=1}^T \combine_{t,j} \exp(-\metaLR \costhat_{t,j}) = \combine_{t,i} \exp(-\metaLR \costhat_{t,i})/v_{t+1,i}$. Therefore, we have
\begin{align*}
    \frac{1}{\metaLR}\ln \left(\sum_{j=1}^T \combine_{t,j} \exp(-\metaLR \costhat_{t,j}) \right) = -\frac{1}{\metaLR} \ln\left(\frac{v_{t+1,i}}{\combine_{t,i}}\right) - \costhat_{t,i}.
\end{align*}
Combining the two equations and taking summation over $t\in\calI$, we have for any $e_j\in \Delta_T$, $j\in [T]$, 
\begin{align*}
    \sum_{t\in\calI}\inner{\combine_t,\costhat_t} - \sum_{t\in\calI}\inner{e_j,\costhat_t} \leq \metaLR \sum_{t\in\calI} \sum_{i=1}^T \combine_{t,i} \costhat_{t,i}^2 + \frac{1}{\metaLR} \sum_{t \in \I}\ln\left(\frac{v_{t+1,j}}{\combine_{t,j}}\right).
\end{align*}
Further note that 
\begin{align}
\sum_{t \in \I}\ln\left(\frac{v_{t+1,j}}{\combine_{t,j}}\right) & = \sum_{t \in \I}\ln\left(\frac{\combine_{t+1,j}}{\combine_{t,j}}\right) + \sum_{t \in \I}\ln\left(\frac{v_{t+1,j}}{\frac{\mu}{T} + (1-\mu) v_{t+1,j}}\right)\nonumber\\
& \leq \ln\left(\frac{\combine_{q+1,j}}{\combine_{s,j}}\right) + \abs{\I} \log \left(\frac{1}{1 -\mu}\right) \tag{let $\I = [s,q]$}\\
& \leq \log(T^2) + \O\left(\frac{\abs{\I}}{T}\right) \label{eq:meta-fixed-share}
\end{align}
where the last step is due to $\combine_{t,j} \geq \frac{\mu}{T} = \frac{1}{T^2}$ for $j \in [T]$ and $t \in [T]$, and moreover, we have $\log(\frac{1}{1-\mu}) = \log(1 + \frac{\mu}{1-\mu}) = \O(1/T)$ as $\mu = \frac{1}{T} \leq \frac{1}{2}$. 

Combining the above two inequalities achieves
\begin{align*}
    \sum_{t\in\calI}\inner{\combine_t - e_j,\costhat_t} \leq \metaLR \sum_{t\in\calI} \sum_{i=1}^T \combine_{t,i} \costhat_{t,i}^2 + \frac{2\log T}{\metaLR} + \O\left(\frac{\abs{\I}}{\metaLR T}\right).
\end{align*}
which finishes the proof.
\end{proof}

Next, we prove the following lemma, which bounds the second term shown in~\pref{eq:switching-regret-meta}
\begin{lemma}
\label{lemma:variance-issue}
For any $t\in[T]$, setting $\Correct^2\gamma=\Theta\left(\sqrt{\frac{1}{dST^3}}\right)$ and $\beta\leq \frac{1}{2}$, \pref{alg:ell-p-ball} guarantees that
\begin{equation}
    \label{eq:variance-issue}
    \sum_{i=1}^T \combine_{t,i} \costhat_{t,i}^2 \leq \sum_{i \in [t]} \sleep_{t,i} \costhat_{t,i}^2 \leq 2\sum_{i \in [t]} \sleep_{t,i} \cost_{t,i}^2 + \order\left(\sqrt{\frac{dS}{T}}\right),
\end{equation}
where $\cost_{t,i}=\innersmall{\wt{a}_t^{(i)}, \ellbar_t}$ for $i \in [t]$.
\end{lemma}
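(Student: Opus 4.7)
The plan is to prove the two inequalities of the lemma separately, using quite different tools for each.

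For the first inequality, the key observation is that $\costhat_{t,i}$ takes one common value, call it $M_t := \sum_{j \in [t]} \sleep_{t,j}\costhat_{t,j}$, for every $i > t$, while for $i \in [t]$ we have the proportionality $\sleep_{t,i} = \combine_{t,i}/P_t$ with $P_t := \sum_{j \in [t]} \combine_{t,j}$. I would split the left-hand side as
$$\sum_{i=1}^T \combine_{t,i}\costhat_{t,i}^2 \;=\; P_t \sum_{i \in [t]} \sleep_{t,i}\costhat_{t,i}^2 \;+\; (1 - P_t)\, M_t^2,$$
and then use Jensen's inequality (equivalently Cauchy--Schwarz) applied to the distribution $\sleep_t$ on $[t]$ to get $M_t^2 \leq \sum_{j \in [t]} \sleep_{t,j}\costhat_{t,j}^2$. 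Plugging this in, the coefficients $P_t$ and $1-P_t$ sum to one and collapse the bound to $\sum_{i \in [t]} \sleep_{t,i}\costhat_{t,i}^2$ as required.

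For the second inequality, I would invoke $(x-y)^2 \leq 2x^2 + 2y^2$ on $\costhat_{t,i} = \cost_{t,i} - b_{t,i}$ (valid for $i \in [t]$) to get
$$\sum_{i \in [t]} \sleep_{t,i}\costhat_{t,i}^2 \;\leq\; 2\sum_{i \in [t]} \sleep_{t,i}\cost_{t,i}^2 + 2\sum_{i \in [t]} \sleep_{t,i} b_{t,i}^2,$$
so that everything reduces to showing $\sum_{i \in [t]} \sleep_{t,i} b_{t,i}^2 = \order(\sqrt{dS/T})$. The pivotal algebraic identity is the telescoping computation
$$\sum_{i \in [t]} \sleep_{t,i} b_{t,i} \;=\; \frac{1}{\Correct T(1-\beta)}\cdot\frac{\sum_{i \in [t]} \sleep_{t,i}(1-\|a_t^{(i)}\|_p)}{1-\sum_{j \in [t]} \sleep_{t,j}\|a_t^{(j)}\|_p} \;=\; \frac{1}{\Correct T(1-\beta)},$$
paired with the uniform bound $\max_i b_{t,i} \leq 1/(\Correct T(1-\beta)\gamma) \leq 2/(\Correct T\gamma)$, which uses that $a_t^{(i)} \in \calX'$ forces $\|a_t^{(i)}\|_p \leq 1-\gamma$ (so that the denominator in $b_{t,i}$ is at least $\gamma$) and that $\beta \leq 1/2$. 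Then $\sum_i \sleep_{t,i} b_{t,i}^2 \leq (\max_i b_{t,i})\cdot \sum_i \sleep_{t,i} b_{t,i} \leq 4/(\Correct^2 T^2 \gamma)$, which under the stated tuning $\Correct^2 \gamma = \Theta(1/\sqrt{dST^3})$ is exactly $\order(\sqrt{dS/T})$.

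Neither step is technically deep; the real care is in not leaking powers of $T$. For the first inequality one must resist the temptation to discard the $i > t$ tail: those sleeping coordinates carry the weight $1 - P_t$, and it is only because Jensen bounds them by the \emph{same} expression $\sum_{j \in [t]}\sleep_{t,j}\costhat_{t,j}^2$ that appears for the active coordinates that the bound stays tight. For the second inequality, the subtle point is to use the $\max \cdot \mathrm{avg}$ decomposition of $\sum_i \sleep_{t,i} b_{t,i}^2$ instead of Cauchy--Schwarz; the latter would sacrifice a factor of $t$ and ruin the target, whereas the former lets the telescoping identity $\sum_i \sleep_{t,i} b_{t,i} = 1/(\Correct T(1-\beta))$ do exactly the right amount of work to match the bias-coefficient scaling $\Correct^2\gamma$ introduced in Step~2 of the recipe.
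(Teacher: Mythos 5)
Your proposal is correct and follows essentially the same route as the paper: the first inequality is obtained by splitting off the $i>t$ coordinates (which all carry the common value $\sum_{j\in[t]}\sleep_{t,j}\costhat_{t,j}$), applying Jensen/Cauchy--Schwarz to that squared average, and using $\combine_{t,i}=\sleep_{t,i}\sum_{j\in[t]}\combine_{t,j}$ to recombine; the second uses $(\cost_{t,i}-b_{t,i})^2\le 2\cost_{t,i}^2+2b_{t,i}^2$ together with $b_{t,i}\le \frac{2}{\Correct T\gamma}$ and $\sum_{i\in[t]}\sleep_{t,i}b_{t,i}=\frac{1}{\Correct T(1-\beta)}$, yielding the same $O\bigl(\frac{1}{\Correct^2T^2\gamma}\bigr)=O\bigl(\sqrt{dS/T}\bigr)$ bound under the stated tuning. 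No gaps.
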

\begin{proof} According to the definition of $\sleep_{t}$, we have
\begin{equation*}
\begin{split}
\sum_{i=1}^T \combine_{t,i} \costhat_{t,i}^2  & = \sum_{i \in [t]} \combine_{t,i} \costhat_{t,i}^2 + \sum_{i>t} \combine_{t,i} \left(\sum_{i \in [t]} \sleep_{t,i}\costhat_{t,i} \right)^2 \leq \sum_{i \in [t]} \combine_{t,i} \costhat_{t,i}^2 + \sum_{i >t} \combine_{t,i} \left(\sum_{i \in [t]} \sleep_{t,i} \costhat_{t,i}^2 \right)\\
    & =  \left(\sum_{i \in [t]} \combine_{t,i}\right)\left(\sum_{i \in [t]} \sleep_{t,i} \costhat_{t,i}^2\right) + \left(\sum_{i>t} \combine_{t,i}\right) \left(\sum_{i \in [t]} \sleep_{t,i} \costhat_{t,i}^2 \right) = \sum_{i \in [t]} \sleep_{t,i} \costhat_{t,i}^2,
\end{split}
\end{equation*}
where the inequality is because of Cauchy-Schwarz inequality. Besides, recall that $\cost_{t,i}=\innersmall{\wt{a}_t^{(i)}, \ellbar_t}$ and $\costhat_{t,i}^2 = \left(\cost_{t,i}-\injbias_{t,i}\right)^2\leq 2\cost_{t,i}^2 + 2\injbias_{t,i}^2$. According to the definition of $\injbias_{t,i}$, we know that 
\begin{align*}
    \sum_{i\in [t]} \sleep_{t,i} \injbias_{t,i}^2 \leq \frac{4}{(\Correct T)^2} \frac{1}{\gamma} \sum_{i\in [t]} \sleep_{t,i} \frac{1 - \|a_t^{(i)}\|_p}{1 - \sum_{j \in [t]} \sleep_{t,j} \|a_t^{(j)}\|_p} = \frac{4}{(\Correct T)^2} \frac{1}{\gamma} = \order\left(\sqrt{\frac{dS}{T}}\right),
\end{align*}
where the first inequality uses the fact that $\injbias_{t,i}\leq \frac{1}{\Correct T\gamma(1-\beta)}\leq \frac{2}{\Correct T\gamma}$ and the last step holds because we choose $\Correct^2\gamma=\Theta\left(\sqrt{\frac{1}{dST^3}}\right)$.
\end{proof}

Combining~\pref{lemma:switching-regret-meta} and~\pref{lemma:variance-issue}, we obtain the following lemma to bound the meta-regret.

\begin{lemma}\label{lemma:meta-final}
Define $C=\sqrt{p-1} \cdot 2^{-\frac{2}{p-1}}$. Set parameters $\metaLR=\min\left\{\sqrt{\frac{S}{dT}},\frac{1}{16d},\frac{C^2}{2}\right\}$, $\beta=8d\metaLR$, $\Correct = \frac{C}{\sqrt{dST}}$, $\gamma = 4C\sqrt{\frac{dS}{T}}$ and $\mu = \frac{1}{T}$. Then, \pref{alg:ell-p-ball} guarantees that
\begin{align*}
    \mathbb{E}\left[\metareg\right] \leq \otil\left(\sqrt{dST}\right).
\end{align*}
\end{lemma}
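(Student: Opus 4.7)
The target is to bound $\mathbb{E}[\metareg] = \mathbb{E}\bigl[\sum_{k=1}^{S}\sum_{t\in\calI_k}\inner{\combine_t - e_{j_k},\costhat_t}\bigr]$ by $\otil(\sqrt{dST})$. The plan is to apply \pref{lemma:switching-regret-meta} separately to each interval $\calI_k$ (with comparator $e_{j_k}$), then aggregate and use \pref{lemma:variance-issue} together with a standard Exp2-style variance bound to control the per-round stability term.

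First, summing \pref{lemma:switching-regret-meta} over $k = 1,\ldots,S$ yields
\begin{align*}
\metareg \;\leq\; \frac{2 S \ln T}{\metaLR} \;+\; \metaLR \sum_{t=1}^{T}\sum_{i=1}^{T}\combine_{t,i}\costhat_{t,i}^{2} \;+\; \O\!\left(\frac{1}{\metaLR}\right),
\end{align*}
where the last term comes from $\sum_{k}|\calI_k|=T$ divided by $\metaLR T$. Next, I apply \pref{lemma:variance-issue} to each summand of the middle term, which replaces $\sum_i \combine_{t,i}\costhat_{t,i}^2$ by $2\sum_{i\in[t]}\sleep_{t,i}\cost_{t,i}^{2} + \O(\sqrt{dS/T})$, where $\cost_{t,i}=\inner{\wt{a}_t^{(i)},\ellbar_t}$. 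Summed over $t$, the additive slack contributes $\O(\metaLR T\sqrt{dS/T}) = \O(\metaLR\sqrt{dST})$, which is absorbed in the target.

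The key step will be to show $\mathbb{E}_t\bigl[\sum_{i\in[t]}\sleep_{t,i}\cost_{t,i}^{2}\bigr] \leq 2d$. Here is the plan: by the cyclic property of the trace,
\begin{align*}
\sum_{i\in[t]}\sleep_{t,i}\cost_{t,i}^{2} \;=\; \trace\!\left(\ellbar_t\ellbar_t^{\T}\sum_{i\in[t]}\sleep_{t,i}\wt{a}_t^{(i)}\wt{a}_t^{(i)\T}\right) \;\preceq\; \frac{1}{1-\beta}\,\trace\!\left(\ellbar_t\ellbar_t^{\T}\wt{M}_t\right),
\end{align*}
since by construction $\wt{M}_t \succeq (1-\beta)\sum_{i\in[t]}\sleep_{t,i}\wt{a}_t^{(i)}\wt{a}_t^{(i)\T}$. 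Substituting $\ellbar_t = \wt{M}_t^{-1}x_tx_t^{\T}\ell_t$ and cycling trace gives $\trace(\ellbar_t\ellbar_t^{\T}\wt{M}_t) = (\ell_t^{\T}x_t)^{2}\,x_t^{\T}\wt{M}_t^{-1}x_t$. Using $|\ell_t^{\T}x_t|\leq 1$ and then $\mathbb{E}_t[x_t x_t^{\T}]=\wt{M}_t$, we obtain $\mathbb{E}_t[x_t^{\T}\wt{M}_t^{-1}x_t] = \trace(\wt{M}_t^{-1}\wt{M}_t) = d$. For $\beta \leq 1/2$ this yields $\mathbb{E}_t[\sum_{i\in[t]}\sleep_{t,i}\cost_{t,i}^{2}] \leq 2d$, so that $\metaLR \mathbb{E}\bigl[\sum_t \sum_i \combine_{t,i}\costhat_{t,i}^2\bigr] \leq 4\metaLR d T + \O(\metaLR\sqrt{dST})$.

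Putting everything together, $\mathbb{E}[\metareg] \leq \frac{2S\ln T}{\metaLR} + 4\metaLR d T + \O(\metaLR\sqrt{dST}) + \O(1/\metaLR)$. Plugging in $\metaLR = \min\{\sqrt{S/(dT)},\,\tfrac{1}{16d},\,C^2/2\}$ balances the dominant two terms at $\otil(\sqrt{dST})$, while the residual terms ($\O(\metaLR\sqrt{dST}) = \O(S)$ and $\O(1/\metaLR)=\O(\sqrt{dT/S})$) are also at most of that order. The main obstacle I anticipate is verifying that the parameter choices are consistent with the preconditions $\metaLR/(\Correct\gamma T)\leq 1/8$ and $\Correct^{2}\gamma = \Theta(\sqrt{1/(dST^{3})})$ required by \pref{lemma:switching-regret-meta} and \pref{lemma:variance-issue}: this is a bookkeeping check given $\gamma = 4C\sqrt{dS/T}$ and $\Correct = C/\sqrt{dST}$, both of which should indeed satisfy those constraints.
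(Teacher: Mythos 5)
Your proposal is correct and follows essentially the same route as the paper: sum \pref{lemma:switching-regret-meta} over the intervals, control the variance term via \pref{lemma:variance-issue}, and bound $\E_t\bigl[\sum_{i\in[t]}\sleep_{t,i}\cost_{t,i}^2\bigr]$ by $\order(d)$ using $\E_t[x_tx_t^\T]=\wt{M}_t$, $\wt{M}_t\succeq(1-\beta)\sum_i\sleep_{t,i}\wt{a}_t^{(i)}\wt{a}_t^{(i)\T}$ and a trace identity, before tuning $\metaLR$. The only difference is cosmetic (you take the expectation over $x_t$ after the trace manipulation, the paper takes it first), and the parameter preconditions you defer are indeed satisfied by the stated choices of $\metaLR$, $\gamma$, $\Correct$, and $\beta$.
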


\begin{proof}
     It is evident to verify that the choice of $\metaLR$, $\Correct$, $\beta$ and $\gamma$ satisfies the condition required in~\pref{lemma:switching-regret-meta} and~\pref{lemma:variance-issue}, then based on the two lemmas, with $\beta=8d\metaLR\leq\frac{1}{2}$, for each interval $\calI_k$, we have
\begin{align}
    & \E\left[\sum_{t\in\calI_k}\inner{\combine_{t}-e_{j_k}, \costhat_t}\right]  \nonumber\\
    & \leq \frac{2\ln T}{\metaLR} + 2\metaLR \E\left[\sum_{t\in\calI_k}\sum_{i\in [t]}\sleep_{t,i}\cost_{t,i}^2\right]+\order\left(\metaLR|\calI_k|\sqrt{\frac{dS}{T}}\right) + \O\left(\frac{\abs{\I_k}}{\metaLR T}\right)\nonumber\\
    & \leq \frac{2\ln T}{\metaLR} + 2\metaLR\E\left[\sum_{t\in \calI_k}\sum_{i=1}^t\sleep_{t,i}\wt{a}_{t}^{(i)^\top}\wt{M}_t^{-1}x_tx_t^\top\wt{M}_t^{-1}\wt{a}_t^{(i)}\right]+\order\left(\metaLR|\calI_k|\sqrt{\frac{dS}{T}}\right) + \O\left(\frac{\abs{\I_k}}{\metaLR T}\right)\nonumber\\
    & \leq \frac{2\ln T}{\metaLR} + 2\metaLR\sum_{t\in \calI_k}\sum_{i=1}^t\sleep_{t,i}\wt{a}_{t}^{(i)^\top}\wt{M}_t^{-1}\wt{a}_t^{(i)}+\order\left(\metaLR|\calI_k|\sqrt{\frac{dS}{T}}\right) + \O\left(\frac{\abs{\I_k}}{\metaLR T}\right)\tag{$\mathbb{E}_t[x_tx_t^\top]=\wt{M}_t$}\\
    & \leq \frac{2\ln T}{\metaLR} + \frac{2\metaLR}{1-\beta}\sum_{t\in \calI_k}\sum_{i=1}^t\sleep_{t,i}\wt{a}_{t}^{(i)^\top}\left(\sum_{i=1}^t\sleep_{t,i}\wt{a}_t^{(i)}\wt{a}_t^{(i)^\top}\right)^{-1}\wt{a}_t^{(i)}+\order\left(\metaLR|\calI_k|\sqrt{\frac{dS}{T}}\right) + \O\left(\frac{\abs{\I_k}}{\metaLR T}\right)\nonumber\\
    & \leq \frac{2\ln T}{\metaLR} + 4\metaLR d|\calI_k|+\order\left(\metaLR|\calI_k|\sqrt{\frac{dS}{T}}\right)+ \O\left(\frac{\abs{\I_k}}{\metaLR T}\right).\label{eq:meta-inter}
\end{align}
Summing the regret over all the intervals achieves the following meta-regret upper bound:
\begin{equation}
\label{eq:meta-regret-bound}
\begin{split}
     \E\left[\metareg\right] &=\E\left[\sum_{k=1}^S\sum_{t\in\calI_k}\inner{\combine_{t}-e_{j_k}, \costhat_t}\right] \\ 
     & \leq \frac{2S\ln T}{\metaLR} + 4\metaLR dT+\order\left(\metaLR\sqrt{dST}\right) + \O\left(1/\epsilon\right) \leq \otil\left(\sqrt{dST}\right),
\end{split}
\end{equation}
where the last inequality is because we choose $\metaLR = \min\left\{\sqrt{\frac{S}{dT}},\frac{C^2}{2}, \frac{1}{16d}\right\}$.
\end{proof}

\subsection{Proof of~\pref{thm:ell-p-switching-regret}}
\label{appendix:proof-main}
Putting everything together, we are now ready to prove our main result (\pref{thm:ell-p-switching-regret}).
~\\

\begin{proof}
Based on the regret decomposition in~\pref{eq:decompose-ell-p}, upper bound of bias term in ~\pref{eqn:bias-term-bound}, upper bound of positive term~\pref{eqn:positive-term-bound}, base regret upper bound in~\pref{lem:base-reg} and meta regret upper bound in~\pref{eq:meta-regret-bound}, we have
\begin{align*}
    &\E[\RegS] = \mathbb{E}\left[\sum_{k=1}^S \sum_{t \in \I_k} \inner{x_t - \ucirc_k, \ell_t}\right] \\
    &\leq \frac{2}{\Correct} + \sum_{k=1}^S\frac{\log(1/\gamma)}{\baseLR_{j_k}} + \left(\frac{2^{\frac{4}{p-1}}d\baseLR_{j_k}}{(p-1)(1-\beta)}-\frac{1}{\Correct T(1-\beta)}\right)\sum_{t\in \calI_k}\frac{1-\|a_t^{(j_k)}\|_p}{1-\sum_{i=1}^t\sleep_{t,i}\|a_t^{(i)}\|_p}\\
    &\qquad + (\beta+\gamma)T + \otil\left(\sqrt{dST}\right).
\end{align*}
Importantly, note that the coefficient of the third term is actually zero. Indeed, due to the parameter configurations that $\gamma=4C\sqrt{\frac{dS}{T}}$, $\baseLR =C\sqrt{\frac{S}{dT}}$, $\Correct=\frac{C}{\sqrt{dST}}$, $\beta=8d\metaLR$, $\metaLR=\min\left\{\frac{1}{16d}, \frac{C^2}{2}, \sqrt{\frac{S}{dT}}\right\}$ and $C=\sqrt{p-1} \cdot 2^{-\frac{2}{p-1}}$, we can verify that
\begin{align*}
    \frac{2^{\frac{4}{p-1}}d\baseLR }{p-1}-\frac{1}{\Correct T} = \frac{2^{\frac{2}{p-1}}\sqrt{dS}}{\sqrt{(p-1)T}}-\frac{\sqrt{dS}}{C\sqrt{T}} = 0.
\end{align*}
Therefore, we obtain the following switching regret:
\begin{align*}
    \E[\RegS] \leq \frac{2}{\Correct}+8\sqrt{dST}+4C\sqrt{dST}+\otil(\sqrt{dST}) \leq \otil\left(\sqrt{dST}\right),
\end{align*}
which finishes the proof.
\end{proof}

In addition, we also provide the following theorem showing the expected interval regret bound, which will be  useful in the later analysis, for example, the unconstrained linear bandits in~\pref{sec: unconstrained}.
\begin{theorem}\label{thm:interval}
Define $C=\sqrt{p-1} \cdot 2^{-\frac{2}{p-1}}$. Set parameters $\metaLR=\min\left\{\sqrt{\frac{S}{dT}},\frac{1}{16d},\frac{C^2}{2}\right\}$, $\beta=8d\metaLR$, $\Correct = \frac{C}{\sqrt{dST}}$, $\gamma = 4C\sqrt{\frac{dS}{T}}$, $\mu = \frac{1}{T}$ and $\baseLR =C\sqrt{\frac{S}{dT}}$. Then,~\pref{alg:ell-p-ball} guarantees that for any interval $\calI$ and comparator $u\in\calX$,
\begin{align}\label{eqn:interval-regret-bound}
    \E\left[ \sum_{t\in\calI} \ell_t^\T x_t - \sum_{t\in\calI} \ell_t^\T u\right]\leq \otil\left(\sqrt{\frac{dT}{S}}+|\calI|\sqrt{\frac{dS}{T}}\right).
\end{align}
\end{theorem}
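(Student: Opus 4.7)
The plan is to repeat the regret decomposition of~\pref{eq:decompose-ell-p} but restricted to a single interval $\calI$ with the fixed comparator $u \in \calX$, and then reuse the per-term bounds established in the proof of~\pref{thm:ell-p-switching-regret}. Let $j$ be the first round of $\calI$. Introducing the shifted comparator $u' = (1-\gamma)u \in \calX'$ and following the same algebra that yielded~\pref{eq:decompose-ell-p}, the interval regret is upper bounded by
\begin{align*}
\E\!\left[\sum_{t\in\calI}\ell_t^\T(x_t-u)\right]
&\leq \E\!\left[\underbrace{\sum_{t\in\calI}\inner{\combine_t-e_j,\costhat_t}}_{\meta_{\calI}} + \underbrace{\sum_{t\in\calI}\inner{a_t^{(j)}-u',\ellhat_t}}_{\basereg_{\calI}}\right] \\
&\qquad + \underbrace{\E\!\left[\sum_{t\in\calI}\sum_{i\leq t}\sleep_{t,i}\injbias_{t,i}\right]}_{\posterm_{\calI}} - \underbrace{\E\!\left[\sum_{t\in\calI}\injbias_{t,j}\right]}_{\negterm_{\calI}} + \underbrace{\Bigl((\beta+\gamma)|\calI|\Bigr)}_{\bias_{\calI}}.
\end{align*}
So it suffices to bound each piece on the interval $\calI$ under the stated parameter choices, and the main effect of restricting attention to $\calI$ is that every quantity that was an $\sum_{t=1}^T$ sum now becomes an $\sum_{t\in\calI}$ sum, while the regularization/log-penalty terms stay of the same size.

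For \basereg${}_{\calI}$, I would apply~\pref{lem:base-reg} directly to the interval $\calI$ started at round $j$ with comparator $u'$, giving $\frac{\log(1/\gamma)}{\eta}$ plus a term that is exactly cancelled by \negterm${}_{\calI}$ since the parameter identity $\frac{2^{4/(p-1)} d\eta}{p-1}=\frac{1}{\Correct T}$ used in~\pref{appendix:proof-main} still holds. With $\eta=C\sqrt{S/(dT)}$ and $\gamma=4C\sqrt{dS/T}$, this contributes $\otil(\sqrt{dT/S})$. For \posterm${}_{\calI}$, the inner sum $\sum_{i\leq t}\sleep_{t,i}(1-\|a_t^{(i)}\|_p)/(1-\sum_j \sleep_{t,j}\|a_t^{(j)}\|_p)=1$ holds pointwise in $t$, so this is exactly $|\calI|/(\Correct T(1-\beta))=\Ot(|\calI|\sqrt{dS/T})$. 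The \bias${}_{\calI}$ term is $(\beta+\gamma)|\calI|=\Ot(|\calI|\sqrt{dS/T})$ by the same parameter choice.

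For \meta${}_{\calI}$, I would invoke~\pref{lemma:switching-regret-meta} for the interval $\calI$ to get
\begin{equation*}
    \sum_{t\in\calI}\inner{\combine_t-e_j,\costhat_t}\leq \frac{2\ln T}{\metaLR}+\metaLR\sum_{t\in\calI}\sum_{i=1}^T \combine_{t,i}\costhat_{t,i}^2 + \O\!\left(\frac{|\calI|}{\metaLR T}\right),
\end{equation*}
then apply~\pref{lemma:variance-issue} followed by the same $\E_t[x_t x_t^\T]=\wt M_t$ calculation used around~\pref{eq:meta-inter}, yielding $\frac{2\ln T}{\metaLR}+4\metaLR d|\calI|+\Ot(|\calI|\sqrt{dS/T})+\O(|\calI|/(\metaLR T))$. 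With $\metaLR=\min\{\sqrt{S/(dT)},C^2/2,1/(16d)\}$, we have $1/\metaLR=\Ot(\sqrt{dT/S})$ and $\metaLR d|\calI|\leq |\calI|\sqrt{dS/T}$, so \meta${}_{\calI}$ contributes $\otil(\sqrt{dT/S}+|\calI|\sqrt{dS/T})$.

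Summing the four contributions gives the claimed $\otil(\sqrt{dT/S}+|\calI|\sqrt{dS/T})$ bound. The only mildly subtle point, and the step I would verify most carefully, is that the cancellation between \basereg${}_{\calI}$ and \negterm${}_{\calI}$ remains exact at the interval level, because it is a pointwise-in-$t$ identity of coefficients rather than a global averaging argument; once that is checked the rest is straightforward rescaling of the bounds proved for the full horizon.
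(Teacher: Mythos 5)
Your proposal is correct and follows essentially the same route as the paper: the paper's proof of \pref{thm:interval} likewise restricts the decomposition \pref{eq:decompose-ell-p} to the single interval $\calI$ starting at $j$, invokes \pref{eqn:bias-term-bound}, \pref{eqn:positive-term-bound}, \pref{lem:base-reg} and the interval-level meta bound \pref{eq:meta-inter}, and uses the exact pointwise parameter identity $\frac{2^{4/(p-1)}d\eta}{p-1}=\frac{1}{\lambda T}$ to cancel the base stability term against \negterm. Your per-term estimates and final combination match the paper's.
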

\begin{proof}
    Based on the regret decomposition~\pref{eq:decompose-ell-p}, ~\pref{eqn:bias-term-bound},~\pref{eqn:positive-term-bound},~\pref{lem:base-reg} and~\pref{eq:meta-inter} within rounds $t\in\calI$ starting at round $j$, we have
\begin{align*}
    &\mathbb{E}\left[\sum_{t\in\calI}\ell_t^\top x_t-\sum_{t\in\calI}\ell_t^\top u\right] \\
    &\leq \frac{2|\calI|}{\Correct T} + \frac{\log(1/\gamma)}{\baseLR_{j}} + \left(\frac{2^{\frac{4}{p-1}}d\baseLR_{j}}{(p-1)(1-\beta)}-\frac{1}{\Correct T(1-\beta)}\right)\sum_{t\in \calI}\frac{1-\|a_t^{(j)}\|_p}{1-\sum_{i=1}^t\sleep_{t,i}\|a_t^{(i)}\|_p}\\
    &\qquad + (\beta+\gamma)|\calI| + \otil\left(\metaLR|\calI|\sqrt{\frac{dS}{T}}\right) + \O\left(\frac{\abs{\I}}{\epsilon T}\right).
\end{align*}
Again, note that according to the choice of $\gamma$, $\baseLR $, $\Correct$, $\beta$ and $\metaLR$, we have
\begin{align*}
    \frac{2^{\frac{4}{p-1}}d\baseLR }{p-1}-\frac{1}{\Correct T} = \frac{2^{\frac{2}{p-1}}\sqrt{dS}}{\sqrt{(p-1)T}}-\frac{\sqrt{dS}}{C\sqrt{T}} = 0.
\end{align*}
Therefore, we have
\begin{align*}
    &\mathbb{E}\left[\sum_{t\in\calI}\ell_t^\top x_t-\sum_{t\in\calI}\ell_t^\top u\right] \\
    &\leq \frac{2}{C}|\calI|\sqrt{\frac{dS}{T}} + \frac{\log \left(\frac{1}{4C} \cdot \sqrt{\frac{T}{dS}}\right) }{C} \cdot \sqrt{\frac{dT}{S}}+8d|\calI|\sqrt{\frac{S}{dT}}+4C|\calI|\sqrt{\frac{dS}{T}}+\otil\left(\metaLR|\calI|\sqrt{\frac{dS}{T}} + \frac{\abs{\I}}{\epsilon T}\right) \\
    &\leq \otil\left(\sqrt{\frac{dT}{S}}+|\calI|\sqrt{\frac{dS}{T}}\right),
\end{align*}
which finishes the proof.
\end{proof}
\section{Extension to Smooth and Strongly Convex Set}
\label{appendix: strongly cvx set}

In this section, we extend our results for linear bandits with $\ell_p$-ball feasible domain in~\pref{sec: lp_ball} to the setting when the feasible domain is a \emph{smooth  and strongly convex set}.~\citet{arxiv'21:uniform-convex} studied the static regret for linear bandits in this setting, and we focus on the $S$-switching regret.

\subsection{Main Results}
\label{appendix: strongly-main}
Formally, we investigate adversarial linear bandits with a smooth and strongly convex feasible domain. In the following, we present the definitions of smooth set~\citep[Definition~1]{arxiv'21:uniform-convex} and strongly convex set~\citep[Definition~3]{arxiv'21:uniform-convex}.

\begin{definition}[smooth set]
\label{def: set smooth}
A compact convex set $\calX$ is smooth if and only if $|N_{\calX}(x)\cap\partial\calX^\circ|=1$ for any $x\in\partial \calX$, where $N_{\calX}(x)\triangleq\{u\in\mathbb{R}^d\mid\langle x-y, u\rangle\geq 0, \forall y\in\calX\}$, $\partial \calX$ is the boundary of $\calX$ and $\calX^\circ=\{u\in\mathbb{R}^d\mid\langle u,x\rangle\leq 1,\forall x\in\calX\}$ is the polar of $\calX$. 
\end{definition}

\begin{definition}[strongly convex set]
\label{def: set strong cvx}
Let $\calX$ be a centrally symmetric set with non-empty interior. Let $\alpha>0$ be the curvature coefficient. The set $\calX$ is $\alpha$-strongly convex with respect to $\|\cdot \|_\calX$ if and only if for any $x,y,z\in \calX$ and $\gamma\in [0,1]$, we have
\begin{align*}
    \left(\gamma x+(1-\gamma)y+\frac{\alpha}{2}\gamma(1-\gamma)\|x-y\|_\calX^2 \cdot z\right)\in \calX,
\end{align*}
where $\|x\|_\calX \triangleq \inf\{\lambda>0\;|\;x\in \lambda\calX\}$ is the gauge function to $\calX$.
\end{definition}

Conventionally, we assume that $|\ell_t^\T x|\leq 1$ holds for all $x\in \calX$ and $t\in [T]$. We also assume that $\ell_p(1)\subseteq\calX\subseteq\ell_q(1)$ with $p\in(1,2]$ and $\frac{1}{p}+\frac{1}{q}=1$, where $\ell_s(r) \triangleq \{x \in \R^d \mid \norm{x}_s \leq r\}$ denotes the $\ell_s$-norm ball ($s \geq 1$) with radius $r > 0$. We here stress the connection and difference between the strongly convex set setting and the $\ell_p$-ball setting considered in~\pref{sec: lp_ball}. Note that $\calX$ is a subset of $\ell_q$ ball and includes $\ell_p$ ball. Besides, $\ell_p$ ball is also smooth when $p\in(1,2]$. Therefore, it includes $\ell_p$-ball feasible set for $p\in(1,2]$ but can be more general. Nevertheless, the switching regret bound we will prove is $\otil(d^{\nicefrac{1}{p}}\sqrt{ST})$, which recovers the $\Ot(\sqrt{dST})$ switching regret  of $\ell_p$-ball feasible domain in~\pref{thm:ell-p-switching-regret} only when $p=2$ but leads to a slightly worse dependence on $d$ when $p\in(1,2)$. Note that as $p>1$, this bound is still better than $\otil(d\sqrt{ST})$. 

Our proposed algorithm for smooth and strongly convex set is basically the same as the one proposed for the $\ell_p$ ball setting, except that we now need to modify the base algorithm based on the algorithm introduced in \citep{arxiv'21:uniform-convex} and also need to revise the construction of injected bias $\injbias_{t,i}$ and the loss estimator $\ellhat_t$ in the meta level. Specifically, in the base algorithm we use online mirror descent with the following regularizer,
\[
    R(x)=-\ln(1-\|x\|_{\calX}) - \|x\|_\calX,
\]
whose detailed update procedures are presented in~\pref{alg:strongly-base}. For the meta algorithm, the update procedures are in~\pref{alg:strongly-meta}, notably, the injected bias $\injbias_t$ is constructed according to~\pref{eq:bias-construct-sc-set} and the base loss estimator $\ellhat_t$ is constructed according to~\pref{eqn:base-alpha}. 

We have the following theorem regarding the switching regret of our proposed algorithm for linear bandits on smooth and strongly convex feasible domain.
\begin{theorem}
\label{thm:main-alpha}
Consider a compact convex set $\calX$ that is centrally symmetric with non-empty interior. Suppose that $\calX$ is smooth and $\alpha$-strongly convex with respect to $\|\cdot\|_{\calX}$ and $\ell_p(1)\subseteq\calX\subseteq\ell_{q}(1)$, $p\in(1,2]$, $\frac{1}{p}+\frac{1}{q}=1$. Define $C=\sqrt{\frac{\alpha}{10\alpha+8}}$. Set parameters $\gamma=4Cd^{\frac{1}{q}}\sqrt{\frac{S}{T}}$,  $\Correct=\frac{Cd^{-\frac{1}{q}}}{\sqrt{ST}}$, $\beta=8d^{\frac{2}{p}}\metaLR$, $\metaLR=\min\left\{\frac{1}{16d^{\frac{2}{p}}}, \frac{C^2}{2}, d^{-\frac{1}{p}}\sqrt{\frac{S}{T}}\right\}$, $\mu = \frac{1}{T}$ and $\baseLR =Cd^{-\frac{1}{p}}\sqrt{\frac{S}{T}}$. Then,~\pref{alg:strongly-meta} guarantees 
\begin{align*}
    \E[\RegS] = \E\left[ \sum_{t=1}^T \ell_t^\T x_t - \sum_{t=1}^T \ell_t^\T u_{t}\right]\leq \otil\left(d^{\nicefrac{1}{p}}\sqrt{ST}\right),
\end{align*}
where $u_1,\ldots,u_T\in \calX$ is the comparator sequence such that $\sum_{t=2}^T\indi\{u_{t-1}\ne u_t\}\leq S-1$.
\end{theorem}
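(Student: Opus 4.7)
The plan is to follow the same three-step recipe and regret decomposition used in the $\ell_p$-ball case (\pref{appendix:lp_ball}), adapting each ingredient to the more general smooth and strongly convex domain. First I would decompose $\E[\RegS]$ exactly as in~\pref{eq:decompose-ell-p}, splitting into \metareg, \basereg, \posterm, \negterm, and the \bias term. The \bias and \posterm bounds carry over almost verbatim from~\pref{eqn:bias-term-bound} and~\pref{eqn:positive-term-bound}: they only use the general form of $\tilde{a}_t^{(i)}$ and the definition of $b_{t,i}$, not any $\ell_p$-specific structure, and so give $\bias \leq (\beta+\gamma)T$ and $\posterm \leq 2/\Correct$ once $\gamma \leq 1$ and $\beta \leq 1/2$.

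Next I would adapt the \basereg analysis. The new regularizer $R(x) = -\ln(1-\|x\|_\calX) - \|x\|_\calX$ is exactly the one whose stability on smooth and $\alpha$-strongly convex sets is analyzed in~\citep{arxiv'21:uniform-convex}. My target is an analog of~\pref{lem:stab lemma} giving a dual local norm bound $\|\ell\|_{y,*}^2 \lesssim \tfrac{1-\|x\|_\calX}{\alpha}\cdot\Phi(x,\ell)$ for some moderate factor $\Phi$, valid whenever $\|\ell\|_0 = 1$ and $\baseLR\|\ellhat_t\|_2$ is small enough; this is precisely where the smoothness and the strong-convexity coefficient $\alpha$ enter. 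Substituting into the standard OMD analysis yields a \basereg bound of the same shape as~\pref{eq:base-reg-bound}, but with stability coefficient $\tilde{\O}(d^{2/p}\baseLR/\alpha)$ rather than $\tilde{\O}(d\baseLR)$. The $d^{2/p}$ factor traces back to the embedding $\calX \subseteq \ell_q(1)$: for any $x \in \calX$ one has $\|x\|_2 \leq d^{1/p-1/2}$, so both $\|\tilde{a}_t^{(i)}\|_2^2$ and the variance of the base estimator acquire a factor of $d^{2/p-1}$, which combined with the $d$ from the eigenvalue lower bound $\beta/d$ of $\tilde{M}_t$ produces $d^{2/p}$.

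Then I would handle \metareg by replaying the proofs of~\pref{lemma:switching-regret-meta} and~\pref{lemma:variance-issue} for the fixed-share multiplicative-weights meta algorithm. The generic exponential-weights inequality is unchanged. The main modification is that the condition $\metaLR\max_i|\costhat_{t,i}|\leq 1/2$ now requires $\metaLR d^{2/p}/\beta$ to be small, forcing the choice $\beta = 8d^{2/p}\metaLR$, and the per-round variance bound becomes $\sum_{i\leq t}\hat{p}_{t,i}\cost_{t,i}^2 \leq \O(d^{2/p})$. Summing gives $\metareg \leq 2S\ln T/\metaLR + 4\metaLR d^{2/p}T + \otil(1)$, and the stated $\metaLR = \min\{1/(16d^{2/p}),\, C^2/2,\, d^{-1/p}\sqrt{S/T}\}$ balances this to $\otil(d^{1/p}\sqrt{ST})$.

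Finally, the parameter choices $\gamma = 4Cd^{1/q}\sqrt{S/T}$, $\baseLR = Cd^{-1/p}\sqrt{S/T}$, $\Correct = Cd^{-1/q}/\sqrt{ST}$ with $C = \sqrt{\alpha/(10\alpha+8)}$ are calibrated so that the coefficient $d^{2/p}\baseLR/\alpha$ multiplying $\sum_{t\in\calI_k}\tfrac{1-\|a_t^{(j_k)}\|_\calX}{1-\sum_i \hat{p}_{t,i}\|a_t^{(i)}\|_\calX}$ in \basereg exactly matches $1/(\Correct T)$ in \negterm, producing the same zero-coefficient cancellation as in the computation right after~\pref{eq:meta-regret-bound}. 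Summing all four pieces then yields $\E[\RegS] \leq \otil(d^{1/p}\sqrt{ST})$. The main obstacle will be rigorously establishing the $\alpha$-dependent analog of~\pref{lem:stab lemma} for the new regularizer on a general smooth and strongly convex set: while~\citep{arxiv'21:uniform-convex} supplies the backbone, the constants in $\alpha$ and $d$ must be tracked precisely enough that the cancellation coefficient is exactly zero with the stated $C = \sqrt{\alpha/(10\alpha+8)}$.
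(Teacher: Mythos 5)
Your proposal follows essentially the same route as the paper: the same five-term decomposition, the same reuse of the \bias and \posterm bounds, an $\alpha$-dependent stability lemma for the new regularizer $R(x)=-\ln(1-\|x\|_{\calX})-\|x\|_{\calX}$, the $d^{2/p}$ factors from the $\ell_p(1)\subseteq\calX\subseteq\ell_q(1)$ sandwich in both the base-estimator variance and the $\metaLR\max_i|\costhat_{t,i}|$ condition (hence $\beta=8d^{2/p}\metaLR$), and the exact-zero cancellation via $C=\sqrt{\alpha/(10\alpha+8)}$. One small simplification you can take from the paper: the stability lemma actually used (\pref{lem:stab lemma uniform convex}) does not need the $\|\ell\|_0=1$ / dual-local-norm structure of \pref{lem:stab lemma}; instead it bounds $D_{R^*}(\nabla R(x)-\eta\ellhat_t,\nabla R(x))\leq(1-\|x\|_\calX)(1+\tfrac{4(\alpha+1)}{\alpha})\eta^2\|\ellhat_t\|_{\calX^\circ}^2$ directly via \pref{lem: bound-bregman} and \pref{lem: identity}, and then $\E_t[\|\ellhat_t\|_{\calX^\circ}^2]\leq\tfrac{2d^{2/p}}{(1-\beta)(1-\sum_i\sleep_{t,i}\|a_t^{(i)}\|_\calX)}$ is bounded separately in \pref{lem:base-reg-alpha} using $\ell_p(1)\subseteq\calX^\circ\subseteq\ell_q(1)$, which is cleaner than building a $\Phi(x,\ell)$-style analog of \pref{lem:stab lemma}.
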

\begin{algorithm}[!t]
   \caption{Algorithm for adversarial linear bandits over smooth and strongly convex set with switching regret}
   \label{alg:strongly-meta}
    \textbf{Input:} 
    clipping parameter $\gamma$, base learning rate $\eta$, meta learning rate $\metaLR$, mixing rate $\mu$, exploration parameter $\beta$, bias coefficient $\lambda$, initial uniform distribution ${p}_1\in \Delta_T$.

   \For{$t=1$ {\bfseries to} $T$}{
      Start a new base algorithm $\B_t$, which is an instance of~\pref{alg:strongly-base} with learning rate $\eta$, clipping parameter $\gamma$, and initial round $t$.

      Receive local decision $(\wt{a}_t^{(i)},a_t^{(i)},\xi_t^{(i)})$ from base algorithm $\B_i$ for each $i \leq t$.
      
      Compute the renormalized distribution $\sleep_t \in \Delta_t$ such that $\sleep_{t,i}\propto \combine_{t,i}$ for $i\in[t]$.
            
      Sample a Bernoulli random variable $\rho_t$ with mean $\beta$. 
      If $\rho_t=1$, uniformly sample $x_t$ from $\{\pm e_n\}_{n=1}^d$;
      otherwise, sample $i_t \in [t]$ according to $\sleep_{t}$, and set $x_t = \tilde{a}_t^{(i_t)}$ and $\xi_t = \xi_t^{(i_t)}$. 
      
      Make the final decision $x_t$ and receive feedback $\ell_t^\top x_t$.

      Construct the base loss estimator $\wh{\ell}_t \in \R^d$ as follows and send it to all base algorithms $\{\calB_i\}_{i=1}^t$: \label{line:estimator base}
      \begin{align}\label{eqn:base-alpha}
          \wh{\ell}_t=\frac{\indi\{\rho_t=0\}\indi\{\xi_t=0\}}{1-\beta}\cdot \frac{d(\ell_t^\T x_t)}{1-\sum_{i=1}^t \wh{p}_{t,i}\|a_{t}^{(i)}\|_\calX} \cdot x_t.
      \end{align}
      
      Construct another loss estimator $\bar{\ell}_t \in \R^d$ as
      \begin{align}\label{eqn:meta-alpha}
          \bar{\ell}_t=\wt{M}_t^{-1}x_tx_t^\top\ell_t,
      \end{align}
      where $\wt{M}_t=\frac{\beta}{d}\sum_{i=1}^de_ie_i^\top+(1-\beta)\sum_{i=1}^t\sleep_{t,i}\wt{a}_t^{(i)}\wt{a}_t^{(i)^\top}$.

      Construct the meta loss estimator $\costhat_t\in \mathbb{R}^T$ as: 
      \begin{align}\label{eq:bias-construct-sc-set}
          \costhat_{t,i} = \begin{cases}
                   \innersmall{\wt{a}_t^{(i)}, \bar{\ell}_t}-b_{t,i}, &\text{$i \leq t$,}\\
                      \sum_{j=1}^t \wh{p}_{t,j}\costhat_{t,j}, &\text{$i > t$,}
          \end{cases}
          \quad\text{ where }\;
          b_{t,i}=\frac{1}{\Correct T(1-\beta)}\frac{1-\|a_t^{(i)}\|_\calX}{1-\sum_{j=1}^t \sleep_{t,j}\|a_t^{(j)}\|_\calX}.
      \end{align}

      Meta algorithm updates the weight ${p}_{t+1}\in\Delta_T$ according to
        \begin{align}\label{eqn:meta-strategy-update-alpha}
        \combine_{t+1,i} = (1-\mu)\frac{\combine_{t,i} \exp(-\metaLR \costhat_{t,i})}{\sum_{j=1}^T \combine_{t,j} \exp(-\metaLR \costhat_{t,j})} + \frac{\mu}{T}, \quad\forall i \in [T].
        \end{align}
      
      }
\end{algorithm}

\begin{algorithm}[!h]
   \caption{Base algorithm for linear bandits on strongly convex set}
   \label{alg:strongly-base}
    \textbf{Input:} learning rate $\eta$, clipping parameter $\gamma$, initial round $t_0$.
        
    \textbf{Define:} clipped feasible domain $\calX'=\{x \mid \|x\|_\calX\leq 1-\gamma, x\in\calX\}$.

    \textbf{Initialize:} $a_{t_0}^{(t_0)}=\argmin_{x\in\calX'}R(x)$ and $\xi_{t_0}^{(t_0)}=0$.

    Draw $\wt{a}_{t_0}^{(t_0)}$ uniformly randomly from $\{\pm e_n\}_{n=1}^d$.

   \For{$t=t_0$ {\bfseries to} $T$}{
   
      Send $(\wt{a}_t^{(t_0)},a_t^{(t_0)},\xi_t^{(t_0)})$ to the meta algorithm.
      
      Receive a loss estimator $\ellhat_t$.
      
      Update the strategy based on OMD with regularizer $R(x)=-\log (1-\|x\|_\calX)-\|x\|_\calX$:
      \begin{align}
          a_{t+1}^{(t_0)} = \argmin_{a\in \calX'}\left\{\inner{a, \ellhat_t}+\frac{1}{\baseLR}D_{R}(a,a_t^{(t_0)})\right\}.
      \end{align}

      Generate a random variable $\xi_{t+1}^{(t_0)}\sim \mathbf{Ber}(\|a_{t+1}^{(t_0)}\|_\calX)$ and set
      \begin{align*}
          \wt{a}_{t+1}^{(t_0)} =
          \begin{cases}
          \nicefrac{a_{t+1}^{(t_0)}}{\|a_{t+1}^{(t_0)}\|_\calX} & \mbox{if $\xi_{t+1}^{(t_0)}=1$}, \\
                \delta e_n & \mbox{if $\xi_{t+1}^{(t_0)}=0$},
          \end{cases}
      \end{align*}
      where $n$ is uniformly chosen from $\{1,\ldots,d\}$ and $\delta$ is a uniform random variable over $\{-1,+1\}$.
      }
\end{algorithm}
In the following, we first introduce some definitions and lemmas useful for the analysis in strongly convex set in~\pref{appendix:pre} and then prove~\pref{thm:main-alpha} in~\pref{appendix:unbias-loss-estimator-alpha}--\ref{appendix:proof-main-alpha}. To prove~\pref{thm:main-alpha}, similar to the analysis structure in~\pref{appendix:lp_ball}, we first prove the unbiasedness of loss estimators in~\pref{appendix:unbias-loss-estimator-alpha}, and then in~\pref{appendix:regret-decomposition-alpha}, we decompose the regret into several terms, and subsequently upper bound each term in~\pref{appendix:bound-part1-alpha},~\pref{appendix:base-reg-alpha}, and~\pref{appendix:metareg-alpha}. We finally put everything together and present the proof in~\pref{appendix:proof-main-alpha}.

\subsection{Preliminary}\label{appendix:pre}
This subsection collects some useful definitions and lemmas for the analysis. We refer the reader to~\citep{arxiv'21:uniform-convex} for detailed introductions. Define $\|\cdot\|_\calX$ is the gauge function to $\calX$ as 
\begin{align}\label{eqn:gauge}
    \|x\|_\calX=\inf\{\lambda>0\;|\;x\in \lambda\calX\}.    
\end{align}
The polar of $\calX$ is defined as $\calX^\circ=\{\ell\in \mathbb{R}^d\;|\;\inner{x,\ell}\leq 1, \forall x\in \calX\}$. If $\calX$ is symmetric, then based on the assumption $|\innersmall{x,\ell_t}|\leq 1$, we have $\ell_t\in \calX^\circ$. Based on the definition of gauge function, we have $\|x\|_\calX\leq 1$ for all $x\in \calX$. In addition, we have the \holder's inequality $\inner{x,\ell}\leq \|x\|_\calX\cdot \|\ell\|_{\calX^\circ}$. In this problem, we also assume that $\ell_p(1)\subseteq \calX\subseteq\ell_q(1)$, $p\in(1,2]$, $\frac{1}{p}+\frac{1}{q}=1$ which implies that $\{\pm e_n\}_{n\in [d]}\subseteq \calX$.  
By the definition of $\calX^\circ$, we also have $\ell_p(1)\subseteq\calX^\circ\subseteq\ell_q(1)$.
The following lemmas show some useful identities for the regularizer $R(x)$.
\begin{lemma}[Lemma 5 of~\citet{arxiv'21:uniform-convex}]
\label{lem: gauge diff}
A gauge function $\|\cdot\|_\calX$ is differentiable at $x\in \mathbb{R}^d\backslash{\{\mathbf{0}\}}$ if and only if its support set $S(\calX^\circ,x)=\{h\in \calX^\circ \mid \inner{h,x}=\sup_{h'\in\calX^\circ}\inner{h',x}\}$ contains a single point $h$. If this is the case, we have $\nabla\|\cdot\|_\calX(x)=d$. Besides, the following assertions are true: (1) $\| (\nabla\|\cdot\|_\calX(x)) \|_{\calX^{\circ}}=1$; (2) $\nabla\|\cdot\|_{\calX}(\lambda x)=\nabla\|\cdot\|_{\calX}(x)$, for any $\lambda>0$; (3) if $\calX^\circ$ is strictly convex, then $\|\cdot\|_{\calX}$ is differentiable in $\mathbb{R}^d\backslash{\{\mathbf{0}\}}$.
\end{lemma}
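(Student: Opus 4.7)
The plan is to reduce the entire lemma to standard facts about support functions by first establishing the conjugate duality
\[
\|x\|_{\calX} = \sup_{h \in \calX^\circ} \inner{h, x},
\]
which holds by the bipolar theorem because $\calX$ is a closed convex centrally symmetric set containing the origin in its interior. With this identity in hand, $\|\cdot\|_{\calX}$ is exactly the support function of $\calX^\circ$, and the classical result on subdifferentials of support functions gives
\[
\partial \|\cdot\|_{\calX}(x) = S(\calX^\circ, x) \quad \text{for all } x \neq 0.
\]

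The ``if and only if'' part then follows from the textbook fact that a finite convex function is differentiable at $x$ exactly when its subdifferential there is a singleton, in which case $\partial f(x) = \{\nabla f(x)\}$; this also gives $\nabla \|\cdot\|_{\calX}(x) = h$ where $\{h\} = S(\calX^\circ, x)$ (the displayed ``$\nabla \|\cdot\|_{\calX}(x) = d$'' in the statement appears to be a typo for ``$=h$''). For assertion (1), I would observe that $h \in \calX^\circ$ gives $\|h\|_{\calX^\circ} \leq 1$, while applying the dual gauge formula to $\calX^\circ$ (using $\calX^{\circ\circ} = \calX$) yields $\|h\|_{\calX^\circ} \geq \inner{x/\|x\|_{\calX}, h} = 1$, since $x/\|x\|_{\calX} \in \calX$ and $\inner{h, x} = \|x\|_{\calX}$. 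For assertion (2), the argmax defining $S(\calX^\circ, x)$ is invariant under positive rescaling of $x$ because $\inner{h, \lambda x} = \lambda \inner{h, x}$ with $\lambda > 0$, so $S(\calX^\circ, \lambda x) = S(\calX^\circ, x)$ and the gradient is unchanged.

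For assertion (3), strict convexity of $\calX^\circ$ precludes any nondegenerate line segment on $\partial \calX^\circ$; hence the exposed face $S(\calX^\circ, x) = \{h \in \calX^\circ : \inner{h, x} = \|x\|_{\calX}\}$, the intersection of $\calX^\circ$ with its supporting hyperplane at direction $x$, consists of exactly one point for every $x \neq 0$, yielding differentiability throughout $\mathbb{R}^d \setminus \{\mathbf{0}\}$. The only delicate ingredient is justifying the bipolar duality and the subdifferential identity for support functions, but the hypotheses (centrally symmetric $\calX$ with non-empty interior) place us squarely in the setting where $\calX^{\circ\circ} = \calX$ and these standard results apply directly, so I expect no substantive obstacle.
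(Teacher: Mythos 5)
The paper does not prove this lemma; it is quoted directly as Lemma~5 of the cited work and used as a black box. So there is no in-paper proof to compare against, and the relevant question is simply whether your argument is sound. It is. Your route via the bipolar-theorem identity $\|x\|_{\calX} = \sup_{h\in\calX^\circ}\inner{h,x}$ (so that the gauge of $\calX$ is the support function of $\calX^\circ$), followed by the standard description of the subdifferential of a support function as the exposed face $S(\calX^\circ, x)$ and the singleton-subdifferential characterization of differentiability, is exactly the canonical convex-analysis path and handles the iff cleanly. Your verifications of (1)--(3) are correct: for (1) the two inequalities $\|h\|_{\calX^\circ}\le 1$ (membership in $\calX^\circ$) and $\|h\|_{\calX^\circ}\ge\inner{x/\|x\|_{\calX}, h}=1$ (using $\calX^{\circ\circ}=\calX$ and $x/\|x\|_{\calX}\in\calX$) pin the norm to $1$; for (2) positive homogeneity of $h\mapsto\inner{h,\lambda x}$ leaves the argmax set unchanged; for (3) strict convexity rules out nondegenerate faces so every supporting hyperplane touches $\calX^\circ$ in a single point. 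You also correctly flag ``$\nabla\|\cdot\|_{\calX}(x)=d$'' as a typo for ``$=h$.'' The only thing I would add is an explicit note that compactness of $\calX$ with nonempty interior guarantees $0<\|x\|_{\calX}<\infty$ for $x\neq 0$, which you implicitly use when forming $x/\|x\|_{\calX}$, but this is a minor presentation point, not a gap.
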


\begin{lemma}[Corollary 8 of~\citet{arxiv'21:uniform-convex}]\label{lem: bound-bregman}
Let $\calX$ be a centrally symmetric set with non empty interior. Assume that $\calX$ is $\alpha$-strongly convex with respect to $\|\cdot\|_{\calX}$. Then for any $(u,v)\in \mathbb{R}^n$,
\begin{align*}
    D_{\frac{1}{2}\|\cdot\|_{\calX^\circ}^2}(u,v)\leq \frac{4(\alpha+1)}{\alpha}\|u-v\|_{\calX^\circ}^2.
\end{align*}
\end{lemma}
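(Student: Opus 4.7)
The plan is to reduce the Bregman divergence bound to a smoothness statement and then exploit the well-known duality between strong convexity of a set $\calX$ and smoothness of the squared gauge of its polar $\calX^\circ$. Concretely, for any convex $\psi$ and any norm $\|\cdot\|$, the inequality $D_\psi(u,v) \leq \frac{\beta}{2}\|u-v\|^2$ is equivalent to $\psi$ being $\beta$-smooth with respect to $\|\cdot\|$. Taking $\psi = \frac{1}{2}\|\cdot\|_{\calX^\circ}^2$ and noting that the dual norm of $\|\cdot\|_{\calX^\circ}$ is $\|\cdot\|_{\calX}$, it therefore suffices to show $\|\nabla\psi(u)-\nabla\psi(v)\|_{\calX} \leq \frac{8(\alpha+1)}{\alpha}\|u-v\|_{\calX^\circ}$.

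First, I would establish a preliminary smoothness fact for the gauge itself: if $\calX$ is $\alpha$-strongly convex w.r.t.\ $\|\cdot\|_\calX$, then $\|\cdot\|_{\calX^\circ}$ is differentiable on $\mathbb{R}^d\setminus\{\mathbf{0}\}$ (this is the classical Nesterov-type duality and can be extracted from the lemma immediately preceding Corollary~8 in~\citet{arxiv'21:uniform-convex}) and satisfies a bound of the form
\begin{align*}
\bigl\|\nabla\|\cdot\|_{\calX^\circ}(u) - \nabla\|\cdot\|_{\calX^\circ}(v)\bigr\|_{\calX} \;\leq\; \frac{2}{\alpha}\cdot\frac{\|u-v\|_{\calX^\circ}}{\max(\|u\|_{\calX^\circ},\|v\|_{\calX^\circ})}.
\end{align*}
Second, I would apply the chain rule to get $\nabla\psi(x) = \|x\|_{\calX^\circ}\cdot \nabla\|\cdot\|_{\calX^\circ}(x)$ (valid wherever $x\neq 0$) and use the split
\begin{align*}
\nabla\psi(u) - \nabla\psi(v) = (\|u\|_{\calX^\circ}-\|v\|_{\calX^\circ})\nabla\|\cdot\|_{\calX^\circ}(u) + \|v\|_{\calX^\circ}\bigl(\nabla\|\cdot\|_{\calX^\circ}(u)-\nabla\|\cdot\|_{\calX^\circ}(v)\bigr).
\end{align*}
The first summand is bounded by the reverse triangle inequality $|\|u\|_{\calX^\circ}-\|v\|_{\calX^\circ}|\leq \|u-v\|_{\calX^\circ}$ together with $\|\nabla\|\cdot\|_{\calX^\circ}(u)\|_{\calX}=1$ from \pref{lem: gauge diff}, contributing a term $\|u-v\|_{\calX^\circ}$. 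The second summand is bounded using the preliminary smoothness estimate, with the $\|v\|_{\calX^\circ}$ factor cancelling against the denominator to give a contribution of $\frac{2}{\alpha}\|u-v\|_{\calX^\circ}$. Summing the two contributions and converting smoothness to the Bregman divergence bound via a factor $\frac{1}{2}$ then produces the stated constant $\frac{4(\alpha+1)}{\alpha}$.

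The main obstacle is tracking the constants tightly and handling the points where either $u$ or $v$ lies on (or near) the origin, at which $\|\cdot\|_{\calX^\circ}$ fails to be differentiable and the chain-rule identity above breaks down. The standard remedy is to argue that $\psi=\frac{1}{2}\|\cdot\|_{\calX^\circ}^2$ is nonetheless globally differentiable (its gradient vanishes at the origin and is continuous there), establish the inequality for $u,v\neq 0$, and then extend by continuity. A secondary bookkeeping difficulty is keeping track of which norm lives in which dual space: by \pref{lem: gauge diff} the gradient $\nabla\|\cdot\|_{\calX^\circ}(x)$ sits in $\calX$, so pairings $\inner{\nabla\psi(v),u-v}$ must be controlled by \Holder's inequality $\inner{a,b}\leq \|a\|_\calX \|b\|_{\calX^\circ}$ applied in the correct direction — a step where sign or norm confusion is easy to slip in.
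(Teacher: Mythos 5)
The paper never proves this lemma: it is imported verbatim as Corollary~8 of \citet{arxiv'21:uniform-convex}, so there is no internal proof to compare yours against; I can only judge your sketch on its own terms, and in substance it is correct. The one ingredient you assert rather than prove --- Lipschitzness of the maximizer map, $\|\nabla\|\cdot\|_{\calX^\circ}(u)-\nabla\|\cdot\|_{\calX^\circ}(v)\|_{\calX}\leq \frac{2}{\alpha}\,\|u-v\|_{\calX^\circ}/\max(\|u\|_{\calX^\circ},\|v\|_{\calX^\circ})$ --- does hold with exactly that constant under \pref{def: set strong cvx}: writing $x_u=\nabla\|\cdot\|_{\calX^\circ}(u)$ for the maximizer of $\langle \cdot,u\rangle$ over $\calX$, plug $x=x_u$, $y=x_v$ and a maximizer $z$ for $u$ into the strong-convexity inclusion, evaluate the support function $\|\cdot\|_{\calX^\circ}$ at $u$, divide by $1-\gamma$ and let $\gamma\to 1$ to get $\|u\|_{\calX^\circ}-\langle x_v,u\rangle \geq \frac{\alpha}{2}\|x_u-x_v\|_{\calX}^2\|u\|_{\calX^\circ}$; adding the symmetric inequality and applying \holder's inequality to $\langle x_u-x_v,u-v\rangle$ gives the claim. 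With that, your decomposition of $\nabla\psi(u)-\nabla\psi(v)$, together with $\|\nabla\|\cdot\|_{\calX^\circ}(u)\|_{\calX}=1$ (\pref{lem: gauge diff} applied to $\calX^\circ$, using $(\calX^\circ)^\circ=\calX$), yields a gradient Lipschitz constant $1+\frac{2}{\alpha}$, hence $D_{\frac{1}{2}\|\cdot\|_{\calX^\circ}^2}(u,v)\leq \frac{\alpha+2}{2\alpha}\|u-v\|_{\calX^\circ}^2$. Note this is \emph{stronger} than the stated $\frac{4(\alpha+1)}{\alpha}$ bound, so your final bookkeeping claim that the two contributions ``produce the stated constant'' is arithmetically off, but harmlessly so: the sharper constant implies the lemma. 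Your treatment of the origin is also fine, since $\nabla\psi(0)=0$, the Lipschitz estimate extends there by continuity, and the descent-lemma integration along the segment then goes through.
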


\begin{lemma}[Lemma 15 of~\citet{arxiv'21:uniform-convex}]\label{lem: identity}
    Assume $\calX\subseteq\mathbb{R}^d$ is strictly convex compact and smooth set. Let $x\in \calX$ such that $\|x\|_\calX<1$ and $h\in \mathbb{R}^d\backslash{\{\mathbf{0}\}}$. We have $R(x)$ is differentiable on $\text{int}(\calX)$ and
    \begin{align*}
        \nabla R(x) = \frac{\|x\|_\calX}{1-\|x\|_\calX}\cdot \nabla\|\cdot\|_\calX(x), \\
        R^*(h) = \|h\|_{\calX^\circ}-\ln (1+\|h\|_{\calX^\circ}), \\
        \nabla R^*(h) = \frac{\|h\|_{\calX^\circ}}{1+\|h\|_{\calX^\circ}}\nabla\|\cdot\|_{\calX^\circ}(h).
    \end{align*}
\end{lemma}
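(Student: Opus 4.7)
\textbf{Proof plan for \pref{lem: identity}.} The plan is to derive the three identities sequentially by direct calculation: the first by the chain rule, the second from the variational definition of the conjugate together with positive homogeneity of $\|\cdot\|_\calX$, and the third by differentiating the closed-form from the second. All differentiability claims will be discharged by the already-stated \pref{lem: gauge diff}.

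First I would prove differentiability of $R$ and derive $\nabla R$. Since $\calX$ is smooth, its polar $\calX^\circ$ is strictly convex, so \pref{lem: gauge diff} guarantees that the gauge $\|\cdot\|_\calX$ is differentiable on $\mathbb{R}^d\setminus\{\mathbf{0}\}$. The scalar map $\varphi(t) = -\log(1-t) - t$ is smooth on $[0,1)$ with $\varphi'(t) = \frac{1}{1-t} - 1 = \frac{t}{1-t}$. Writing $R(x) = \varphi(\|x\|_\calX)$ and invoking the chain rule gives
\[
\nabla R(x) \;=\; \varphi'(\|x\|_\calX)\, \nabla\|\cdot\|_\calX(x) \;=\; \frac{\|x\|_\calX}{1-\|x\|_\calX}\, \nabla\|\cdot\|_\calX(x),
\]
which is the first identity. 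At $x=\mathbf{0}$ the factor $\|x\|_\calX/(1-\|x\|_\calX)$ vanishes while $\|\nabla\|\cdot\|_\calX(x)\|_{\calX^\circ}=1$ stays bounded by \pref{lem: gauge diff}, so the formula continuously extends $\nabla R$ to the origin, establishing differentiability on $\interior(\calX)$.

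Next I would compute $R^*(h)$ via its variational definition $R^*(h)=\sup_{x\in\interior(\calX)}\{\inner{h,x}-R(x)\}$. The key observation is that $R$ depends on $x$ only through $\|x\|_\calX$, so I would split the optimization via the polar decomposition $x = t\cdot u$ with $t = \|x\|_\calX \in [0,1)$ and $u$ ranging over $\{\|u\|_\calX = 1\}$, using positive homogeneity. Since $\sup_{\|u\|_\calX\le 1}\inner{h,u}=\|h\|_{\calX^\circ}$ by definition of the polar, this reduces $R^*(h)$ to the one-dimensional strictly concave program
\[
R^*(h) \;=\; \sup_{t\in[0,1)}\Big\{ t(1 + \|h\|_{\calX^\circ}) + \log(1-t) \Big\}.
\]
Setting the derivative in $t$ to zero yields $t^\star = \|h\|_{\calX^\circ}/(1+\|h\|_{\calX^\circ}) \in [0,1)$, and substituting back gives $R^*(h) = \|h\|_{\calX^\circ} - \log(1+\|h\|_{\calX^\circ})$, the second identity.

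Finally, for $\nabla R^*$, since $\calX$ is strictly convex the polar $\calX^\circ$ is smooth, so \pref{lem: gauge diff} (with the roles of $\calX$ and $\calX^\circ$ swapped) guarantees differentiability of $\|\cdot\|_{\calX^\circ}$ on $\mathbb{R}^d\setminus\{\mathbf{0}\}$. Differentiating the closed form from the previous step by the chain rule gives
\[
\nabla R^*(h) \;=\; \Big(1 - \tfrac{1}{1+\|h\|_{\calX^\circ}}\Big)\nabla\|\cdot\|_{\calX^\circ}(h) \;=\; \frac{\|h\|_{\calX^\circ}}{1+\|h\|_{\calX^\circ}}\,\nabla\|\cdot\|_{\calX^\circ}(h),
\]
which is the third identity. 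The main obstacle is purely bookkeeping on differentiability: one must carefully invoke the right half of \pref{lem: gauge diff} in each step (smoothness of $\calX$ for $\nabla\|\cdot\|_\calX$, strict convexity of $\calX$ for $\nabla\|\cdot\|_{\calX^\circ}$) and verify that the one-dimensional optimizer $t^\star$ lies in the open interval $[0,1)$ so that differentiating under the $\sup$ is legitimate; the remaining computations are elementary chain-rule manipulations.
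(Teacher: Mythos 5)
This lemma is imported verbatim from the cited reference (Lemma~15 of \citet{arxiv'21:uniform-convex}); the present paper gives no proof of it, so there is nothing internal to compare against — your argument stands on its own, and it is correct. The structure (chain rule for $\nabla R$, reduction of the conjugate to a one-dimensional concave problem via positive homogeneity, then chain rule for $\nabla R^*$) is the natural one, and the computations check out: $\varphi'(t)=t/(1-t)$, the optimizer $t^\star=\|h\|_{\calX^\circ}/(1+\|h\|_{\calX^\circ})$ is interior because $h\neq\mathbf{0}$ makes the derivative at $t=0$ strictly positive, and substitution gives exactly $\|h\|_{\calX^\circ}-\ln(1+\|h\|_{\calX^\circ})$. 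Two points you use implicitly deserve a sentence each if this were written out in full: (i) the identity $\sup_{\|u\|_\calX\le 1}\inner{h,u}=\|h\|_{\calX^\circ}$ is the support-function/polar-gauge duality (valid here since $\calX$ is compact, convex, centrally symmetric with $\mathbf{0}$ in its interior), and (ii) the implications ``$\calX$ smooth $\Rightarrow\calX^\circ$ strictly convex'' and ``$\calX$ strictly convex $\Rightarrow$ the swapped version of \pref{lem: gauge diff}(3) applies to $\|\cdot\|_{\calX^\circ}$'' are standard polarity facts not stated in the paper, so they should be cited or proved when invoking \pref{lem: gauge diff} in each direction. Your handling of $x=\mathbf{0}$ is also fine: $R$ is continuous there, differentiable on a punctured neighborhood, and $\nabla R(x)\to\mathbf{0}$ since $\|\nabla\|\cdot\|_\calX(x)\|_{\calX^\circ}=1$ is bounded, which by the mean value theorem yields differentiability at the origin with zero gradient; note only that the displayed formula for $\nabla R$ is then to be read as this limiting value, since $\nabla\|\cdot\|_\calX$ itself is undefined at $\mathbf{0}$.
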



\subsection{Unbiasedness of Loss Estimator}\label{appendix:unbias-loss-estimator-alpha} We first show that the loss estimator for the meta algorithm $\bar{\ell}_t$ and the one for the base algorithm $\ellhat_t$ constructed in~\pref{alg:strongly-meta} are unbiased.

\begin{lemma}\label{lem:unbiased-alpha}
The meta loss estimator $\ellbar_t$ defined in~\pref{eqn:meta-alpha} and the base loss estimator $\ellhat_t$ defined in~\pref{eqn:base-alpha} satisfy that $\mathbb{E}_t[\ellbar_t]=\ell_t$ and $\mathbb{E}_t[\ellhat_t]=\ell_t$ for all $t\in[T]$.
\end{lemma}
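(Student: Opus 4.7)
The plan is to mirror exactly the proof of Lemma~\ref{lem:unbiasedness} in the $\ell_p$-ball setting, since the two estimators have the same functional form with $\|\cdot\|_p$ replaced by $\|\cdot\|_{\calX}$, and none of the steps in that proof actually relied on properties of the $\ell_p$ norm beyond the gauge-type identity $\mathbb{E}[\xi_t^{(i)}] = \|a_t^{(i)}\|$ used in the base sampling.

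For the meta estimator $\ellbar_t = \wt{M}_t^{-1} x_t x_t^\top \ell_t$, I would compute $\mathbb{E}_t[x_t x_t^\top]$ directly from the sampling rule on \pref{line:make prediction}: with probability $\beta$ the action $x_t$ is a uniformly random signed basis vector, contributing $\frac{\beta}{d}\sum_{n=1}^d e_n e_n^\top$, and with probability $1-\beta$ it is distributed according to $\sleep_t$ over the base actions $\{\wt{a}_t^{(i)}\}_{i\le t}$, contributing $(1-\beta)\sum_{i=1}^t \sleep_{t,i}\wt{a}_t^{(i)}\wt{a}_t^{(i)^\top}$. The sum is exactly $\wt{M}_t$ by definition, so $\mathbb{E}_t[\ellbar_t] = \wt{M}_t^{-1}\wt{M}_t \ell_t = \ell_t$.

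For the base estimator $\ellhat_t$ in \pref{eqn:base-alpha}, I would unpack the two indicators and integrate out the internal randomness in the same order as the sampling. Conditioning on $\rho_t = 0$ (probability $1-\beta$), the action $x_t$ equals $\wt{a}_t^{(i_t)}$ where $i_t \sim \sleep_t$, and the factor $\indi\{\xi_t^{(i_t)} = 0\}$ restricts to the event that the selected base algorithm is in its exploration mode, which, given $i_t = j$, occurs with probability $1 - \|a_t^{(j)}\|_{\calX}$ and in that case $\wt{a}_t^{(j)}$ is a uniformly random element of $\{\pm e_n\}_{n=1}^d$. Substituting and telescoping gives
\[
\mathbb{E}_t[\ellhat_t] = \sum_{j=1}^t \sleep_{t,j}\,(1 - \|a_t^{(j)}\|_{\calX})\cdot \frac{d}{1 - \sum_{i} \sleep_{t,i}\|a_t^{(i)}\|_{\calX}}\cdot \frac{1}{d}\sum_{n=1}^d e_n e_n^\top \ell_t = \ell_t,
\]
after the normalization constants in front cancel against $1 - \sum_i \sleep_{t,i}\|a_t^{(i)}\|_{\calX}$ in the denominator.

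I do not anticipate a genuine obstacle: the derivation is algebraic bookkeeping identical to Lemma~\ref{lem:unbiasedness}. The only points requiring a sentence of justification are (i) that $\|\cdot\|_{\calX}$ is a well-defined gauge on $\calX$ taking values in $[0,1]$ (so $\ber(\|a_{t+1}^{(i)}\|_{\calX})$ is a legitimate Bernoulli), which follows from the central symmetry and $a_{t+1}^{(i)} \in \calX' \subseteq \calX$, and (ii) that the denominator $1 - \sum_i \sleep_{t,i}\|a_t^{(i)}\|_{\calX}$ is strictly positive, which follows from the clipping $\|a_t^{(i)}\|_{\calX} \le 1 - \gamma$ enforced by \pref{alg:strongly-base}. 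With those two observations, the computation above goes through verbatim.
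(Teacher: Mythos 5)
Your proposal is correct and follows essentially the same route as the paper: for $\ellbar_t$ you compute $\mathbb{E}_t[x_t x_t^\top]=\wt{M}_t$ from the sampling rule, and for $\ellhat_t$ you condition on $\rho_t=0$, integrate out $i_t\sim\sleep_t$ and the Bernoulli $\xi_t^{(i_t)}$ with mean $\|a_t^{(i_t)}\|_{\calX}$, and cancel $\sum_j \sleep_{t,j}(1-\|a_t^{(j)}\|_{\calX})$ against the denominator, exactly as in the paper's proof (which itself just transplants the argument of Lemma~\ref{lem:unbiasedness} with $\|\cdot\|_p$ replaced by $\|\cdot\|_{\calX}$). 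Your added remarks on the gauge taking values in $[0,1]$ and the denominator being at least $\gamma$ are correct and harmless.
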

\begin{proof}
The the unbiasedness of $\ellbar_t$ can be proven in the exact same way as in~\pref{eqn:proof-meta-unbiased}. For $\ellhat_t$, according to the sampling scheme of $x_t$, we have
\begin{align*}
    \mathbb{E}_t[\ellhat_t]&=\mathbb{E}_t\left[\frac{1-\xi_t}{1-\beta}\cdot \frac{d}{1-\sum_{i=1}^t\sleep_{t,i}\|a_{t}^{(i)}\|_\calX}x_tx_t^\top \ell_t\cdot\mathbbm{1}\{\rho_t=0\}\right] \\
    &=\mathbb{E}_t\left[(1-\xi_t)\cdot \frac{d}{1-\sum_{i=1}^t\sleep_{t,i}\|a_{t}^{(i)}\|_\calX}x_tx_t^\top \ell_t \;\bigg\vert\;\rho_t=0\right] \\
    &=\mathbb{E}_t\left[\sum_{j=1}^t\sleep_{t,j}\cdot \frac{d(1-\xi_t^{(j)})}{1-\sum_{i=1}^t\sleep_{t,i}\|a_{t}^{(i)}\|_\calX}\wt{a}_t^{(j)}\wt{a}_t^{(j)^\top} \ell_t\right] \\
    &=\sum_{j=1}^t\sleep_{t,j}\cdot \frac{d(1-\|a_t^{(j)}\|_\calX)}{1-\sum_{i=1}^t\sleep_{t,i}\|a_{t}^{(i)}\|_\calX}\frac{1}{d}\sum_{n=1}^de_ne_n^\top \ell_t = \ell_t.
\end{align*}
This ends the proof.
\end{proof}

\subsection{Regret Decomposition}\label{appendix:regret-decomposition-alpha}
Similar to the analysis in~\pref{appendix:lp_ball}, we decompose the expected switching regret into five terms and then bound each term respectively. Again, we split the horizon to $\I_1,\ldots, \I_S$, and let $j_k$ be the start time stamp of $\calI_k$. We introduce $u_t'=(1-\gamma)u_t$ and $\ucirc_k'=(1-\gamma)\ucirc_k$ to ensure that $u_t'\in\calX'$ for $t\in [T]$ and $\ucirc_k'\in\calX'$ for $k\in[S]$, where $\calX'=\{x \mid \|x\|_\calX\leq 1-\gamma, x\in\calX\}$. Similar to the decomposition method of~\pref{eq:decompose-ell-p}, the expected regret can be decomposed as
\begin{align}
    & \E[\RegS]  = \mathbb{E}\left[\sum_{t=1}^T\inner{x_t, \ell_t}-\sum_{t=1}^T\inner{u_t, \ell_t}\right] \nonumber\\
    & = \mathbb{E}\Bigg[\underbrace{\sum_{k=1}^S \sum_{t \in \I_k} \inner{\combine_{t} - e_{j_k},\costhat_t}}_{\metareg} + \underbrace{\sum_{k=1}^S \sum_{t \in \I_k} \inner{{a}_t^{(j_k)}-\ucirc_k',\ellhat_t}}_{\basereg} + \underbrace{\sum_{t=1}^T \sum_{i=1}^t \sleep_{t,i}\injbias_{t,i}}_{\posterm} \nonumber\\
    & \qquad \quad  -\underbrace{ \sum_{k=1}^S \sum_{t \in \I_k} \injbias_{t,j_k}}_{\negterm} + \underbrace{\sum_{t=1}^T\inner{u_t'-u_t,\ell_t}-\beta\sum_{t=1}^T\sum_{i=1}^t\sleep_{t,i}\inner{\wt{a}_t^{(i)}, \ell_t}}_{\bias}\Bigg]. \label{eqn:decomposition-alpha}
\end{align}
In the following, we will bound each term respectively.

\subsection{Bounding \bias~and~\posterm}\label{appendix:bound-part1-alpha}
\paragraph{\bias.} \bias~term can still be bounded by $(\beta+\gamma) T$ as
\begin{align}\label{eqn:bias-term-bound-alpha}
    &\sum_{t=1}^T\inner{u_t'-u_t, \ell_t} - \beta\sum_{t=1}^T\sum_{i=1}^t\sleep_{t,i}\inner{\wt{a}_t^{(i)}, \ell_t} \nonumber\\
    & \leq \sum_{t=1}^T\left((1-\gamma)-1\right)\inner{u_t, \ell_t} +\beta T\nonumber\\
    & \leq \sum_{t=1}^T\left(1-(1-\gamma)\right)+\beta T = (\beta+\gamma) T.
\end{align}

\paragraph{\posterm.} According to the definition of $\injbias_{t,i}$, we have 
\begin{align}\label{eqn:posterm-term-bound-alpha}
    \frac{1}{\Correct T(1-\beta)} \sum_{t=1}^T \sum_{i=1}^t \frac{\sleep_{t,i} (1 - \norm{a_t^{(i)}}_\calX)}{1 - \sum_{j=1}^t \sleep_{t,j} \norm{a_t^{(j)}}_\calX} = \frac{1}{\Correct(1-\beta)}\leq \frac{2}{\Correct},
\end{align}
where the last inequality is because $\beta\leq \frac{1}{2}$.

In the following two subsections, we bound \basereg~and~\metareg~respectively.

\subsection{Bounding \basereg}\label{appendix:base-reg-alpha}
Before bounding the term~\basereg, we show the following two lemmas which will be useful in the analysis. The first lemma bounds the scale of the loss estimator used for the base algorithm.
\begin{lemma}\label{lem:bounded-estimator}
For any $x\in (1-\gamma)\calX$ and $\eta$, define $u=\nabla R(x)-\eta\ellhat_t$ and $v=\nabla R(x)$ with $\ellhat_t$ defined in~\pref{alg:strongly-meta}. We have
\begin{align*}
    \frac{\|u\|_{\calX^\circ}-\|v\|_{\calX^\circ}}{1+\|v\|_{\calX^\circ}}\geq -\frac{2\eta d}{\gamma}.
\end{align*}
\end{lemma}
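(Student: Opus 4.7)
My plan is to reduce the statement to two standard inequalities: the reverse triangle inequality for $\|\cdot\|_{\calX^\circ}$, and an a priori bound on $\|\ellhat_t\|_{\calX^\circ}$. First I would observe, using $u-v = -\eta\ellhat_t$, that
\[
    \|u\|_{\calX^\circ} - \|v\|_{\calX^\circ} \;\geq\; -\|u-v\|_{\calX^\circ} \;=\; -\eta\,\|\ellhat_t\|_{\calX^\circ}.
\]
Since $1+\|v\|_{\calX^\circ} \geq 1$, dividing a (possibly negative) numerator by this denominator can only make it less negative, so it suffices to show $\eta\,\|\ellhat_t\|_{\calX^\circ} \leq 2\eta d/\gamma$.

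Next I would bound $\|\ellhat_t\|_{\calX^\circ}$ by looking at its explicit form in~\pref{eqn:base-alpha}. The estimator is nonzero only when $\rho_t=0$ and $\xi_t=0$, and in that case $x_t = \pm e_n$ for some $n\in[d]$ (this is exactly when the base algorithm samples a canonical basis direction). The assumption $\ell_p(1)\subseteq \calX \subseteq \ell_q(1)$ dualizes to $\ell_p(1)\subseteq \calX^\circ \subseteq \ell_q(1)$, and since $\|e_n\|_p = \|e_n\|_q = 1$, this forces $\|x_t\|_{\calX^\circ}=1$. Similarly $|\ell_t^\T x_t| \leq 1$ because $\ell_t \in \calX^\circ$ and $x_t \in \calX$. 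Finally, the denominator in~\pref{eqn:base-alpha} satisfies $1-\sum_i \sleep_{t,i}\|a_t^{(i)}\|_\calX \geq \gamma$, because each $a_t^{(i)}$ lives in $\calX' = \{x:\|x\|_\calX \leq 1-\gamma\}$. Combining,
\[
    \|\ellhat_t\|_{\calX^\circ} \;\leq\; \frac{d}{(1-\beta)\gamma} \;\leq\; \frac{2d}{\gamma},
\]
using the standing choice $\beta \leq 1/2$.

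Putting the pieces together yields $\frac{\|u\|_{\calX^\circ}-\|v\|_{\calX^\circ}}{1+\|v\|_{\calX^\circ}} \geq -\eta\,\|\ellhat_t\|_{\calX^\circ} \geq -2\eta d/\gamma$, as desired. The only step requiring any subtlety is the identification $\|e_n\|_{\calX^\circ}=1$ via the polar inclusions, which is not an obstacle so much as a lemma to invoke; once that and the denominator lower bound $\gamma$ are in place, the rest is a one-line application of the reverse triangle inequality. I do not expect any genuine difficulty here — this lemma is essentially a boundedness certificate preparing the ground for a subsequent Bregman-divergence stability bound via~\pref{lem: bound-bregman} and~\pref{lem: identity}.
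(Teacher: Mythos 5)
Your proof is correct and follows essentially the same route as the paper's: apply the reverse triangle inequality to get $\|u\|_{\calX^\circ}-\|v\|_{\calX^\circ}\geq -\eta\|\ellhat_t\|_{\calX^\circ}$, bound $\|\ellhat_t\|_{\calX^\circ}\leq 2d/\gamma$ by observing that the estimator is supported on $\{\pm e_n\}$, that $|\ell_t^\T x_t|\leq 1$, that $\|e_n\|_{\calX^\circ}\leq 1$ via the polar inclusion, and that the denominator is at least $\gamma$ since $a_t^{(i)}\in\calX'$, and then observe that dividing by $1+\|v\|_{\calX^\circ}\geq 1$ preserves the bound. The paper additionally records the identity $\frac{1}{1+\|v\|_{\calX^\circ}}=1-\|x\|_{\calX}$ (via \pref{lem: gauge diff} and \pref{lem: identity}) before bounding the numerator, but as you observe this is not needed for the conclusion of this lemma; your skipping it is fine (and your claimed equality $\|e_n\|_{\calX^\circ}=1$ is also correct, though only the $\leq 1$ direction is used).
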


\begin{proof}
First, note that using \pref{lem: gauge diff} and \pref{lem: identity}, the denominator can be written as
\begin{align}\label{eqn:local-norm-alpha}
    \frac{1}{1+\|v\|_{\calX^\circ}}=\frac{1}{1+\|\nabla R(x)\|_{\calX^\circ}}=(1-\|x\|_\calX)(\|(\nabla \|\cdot\|_{\calX}(x))\|_{\calX^\circ})^{-1} = 1-\|x\|_{\calX}.
\end{align}
For the numerator, note that $a_t^{(i)}\in (1-\gamma)\calX$ for all $t\in [T]$ and $i\in [t]$ and $\beta\leq \frac{1}{2}$, we have 
\begin{align*}
    \|\ellhat_t\|_{\calX^\circ}\leq  \frac{d}{(1-\beta)(1-(1-\gamma))}|x_t^\top \ell_t|\cdot\|x_t\|_{\calX^\circ}\cdot\mathbbm{1}\{x_t\in \{\pm e_n\}_{n\in[d]}\}\leq \frac{2d\|x_t\|_{\calX^\circ}}{\gamma} \mathbbm{1}\{x_t\in \{\pm e_n\}_{n\in[d]}\}.
\end{align*}
Therefore, according to triangle inequality, we have 
\begin{align*}
    \|u\|_{\calX^\circ}-\|v\|_{\calX^\circ} \geq -\eta\|\ellhat_t\|_{\calX^\circ}\geq -\frac{2d\eta}{\gamma}\|x_t\|_{\calX^\circ}\cdot\ind\{x_t\in \{\pm e_n\}_{n\in [d]}\}.
\end{align*}
Note that $\calX\subseteq\ell_q(1)$, we have $\ell_{q}(1)^\circ=\ell_p(1)\subseteq\calX^\circ$, which means that $e_n\in \calX^\circ$. This means that $\|e_n\|_{\calX^\circ}\leq 1$ and we have
\begin{align*}
    \|u\|_{\calX^\circ}-\|v\|_{\calX^\circ}\geq -\frac{2\eta d}{\gamma},
\end{align*}
which finishes the proof.
\end{proof}

The second lemma helps to bound the stability of the base algorithm, which is originally introduced in~\citep[Lemma 17]{arxiv'21:uniform-convex}. For completeness, we include the proof here.
\begin{lemma}\label{lem:stab lemma uniform convex}
Suppose $\calX$ to be a $\alpha$-strongly convex and centrally symmetric set with non-empty interior. Let $x\in \calX$ such that $\|x\|_\calX\leq 1-\gamma$ and if $\eta\|\ellhat_t\|_{\calX^\circ}\leq \frac{1}{2}$,
\begin{align*}
    D_{R^*}(\nabla R(x)-\eta \ellhat_t, \nabla R(x)) \leq (1-\|x\|_{\calX})\left(1+\frac{4(\alpha+1)}{\alpha}\right)\eta^2\|\ellhat_t\|_{\calX^\circ}^2.
\end{align*}
\end{lemma}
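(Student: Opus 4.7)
The key structural observation is that $R^*(h)=\phi(\|h\|_{\calX^\circ})$ with $\phi(r)=r-\log(1+r)$, i.e.\ $R^*$ is a ``radial'' function of the dual gauge. My plan is to exploit this by decomposing the Bregman divergence of $R^*$ into a one-dimensional Bregman divergence of $\phi$ plus a term controlled by the Bregman divergence of the norm $\|\cdot\|_{\calX^\circ}$, then bounding each piece with a different tool. Writing $a=\|u\|_{\calX^\circ}$ and $b=\|v\|_{\calX^\circ}$ and using $\nabla R^*(v)=\phi'(b)\,\nabla\|\cdot\|_{\calX^\circ}(v)$ from~\pref{lem: identity}, a direct expansion gives the clean identity
\begin{equation*}
D_{R^*}(u,v) = D_{\phi}(a,b) + \phi'(b)\, D_{\|\cdot\|_{\calX^\circ}}(u,v),
\end{equation*}
where $D_{\|\cdot\|_{\calX^\circ}}(u,v)=a-b-\langle \nabla\|\cdot\|_{\calX^\circ}(v),u-v\rangle\ge 0$ by convexity of the norm.

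For the scalar term, a direct computation shows $D_{\phi}(a,b)=\delta-\log(1+\delta)$ with $\delta=(a-b)/(1+b)$. Since $|a-b|\le \|u-v\|_{\calX^\circ}=\eta\|\ellhat_t\|_{\calX^\circ}\le 1/2$ and $1+b\ge 1$, we have $|\delta|\le 1/2$, and the elementary calculus inequality $\delta-\log(1+\delta)\le \delta^2$ on $[-1/2,1/2]$ (checked by differentiating the auxiliary function $f(\delta)=\delta-\log(1+\delta)-\delta^2$ and locating its critical points at $\delta=0$ and $\delta=-1/2$) yields $D_\phi(a,b)\le \|u-v\|_{\calX^\circ}^2/(1+b)^2 \le \|u-v\|_{\calX^\circ}^2/(1+b)$.

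For the norm-Bregman term, the idea is to bridge $D_{\|\cdot\|_{\calX^\circ}}$ to the squared-norm Bregman divergence, which~\pref{lem: bound-bregman} controls. Expanding $D_{\frac12\|\cdot\|_{\calX^\circ}^2}(u,v)$ using $\nabla[\tfrac12\|\cdot\|_{\calX^\circ}^2](v)=b\,\nabla\|\cdot\|_{\calX^\circ}(v)$ and substituting $\langle\nabla\|\cdot\|_{\calX^\circ}(v),u-v\rangle=(a-b)-D_{\|\cdot\|_{\calX^\circ}}(u,v)$ gives the algebraic identity
\begin{equation*}
D_{\tfrac12\|\cdot\|_{\calX^\circ}^2}(u,v) \;=\; \tfrac12 (a-b)^2 + b\, D_{\|\cdot\|_{\calX^\circ}}(u,v).
\end{equation*}
Dropping the nonnegative $\tfrac12(a-b)^2$ and applying~\pref{lem: bound-bregman} yields $b\,D_{\|\cdot\|_{\calX^\circ}}(u,v)\le \tfrac{4(\alpha+1)}{\alpha}\|u-v\|_{\calX^\circ}^2$, so $\phi'(b)\,D_{\|\cdot\|_{\calX^\circ}}(u,v) = \tfrac{b}{1+b}\,D_{\|\cdot\|_{\calX^\circ}}(u,v)\le \tfrac{1}{1+b}\cdot \tfrac{4(\alpha+1)}{\alpha}\|u-v\|_{\calX^\circ}^2$.

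Combining the two pieces, using $\|u-v\|_{\calX^\circ}=\eta\|\ellhat_t\|_{\calX^\circ}$, and noting that $1/(1+b)=1-\|x\|_{\calX}$ (from $\nabla R(x)=\tfrac{\|x\|_\calX}{1-\|x\|_\calX}\nabla\|\cdot\|_{\calX}(x)$ together with $\|\nabla\|\cdot\|_{\calX}(x)\|_{\calX^\circ}=1$ given by~\pref{lem: gauge diff}) produces exactly the claimed bound $(1-\|x\|_{\calX})\bigl(1+\tfrac{4(\alpha+1)}{\alpha}\bigr)\eta^2\|\ellhat_t\|_{\calX^\circ}^2$. The main obstacle I anticipate is recognizing the identity in Step~3 that turns the unwieldy norm-Bregman divergence into the squared-norm Bregman divergence minus a favorable $\tfrac12(a-b)^2$ correction; once that is in hand, both pieces yield to standard tools and all remaining work is one-dimensional calculus or direct substitution.
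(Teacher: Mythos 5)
Your proposal is correct and follows essentially the same route as the paper's proof (which itself adapts Lemma 17 of \citet{arxiv'21:uniform-convex}): both use the scalar bound $\delta-\ln(1+\delta)\leq\delta^2$ for $\delta\geq-1/2$, the algebraic identity $D_{\frac12\|\cdot\|_{\calX^\circ}^2}(u,v)=\frac12(\|u\|_{\calX^\circ}-\|v\|_{\calX^\circ})^2+\|v\|_{\calX^\circ}D_{\|\cdot\|_{\calX^\circ}}(u,v)$ together with Lemma~\ref{lem: bound-bregman}, and the relation $1/(1+\|v\|_{\calX^\circ})=1-\|x\|_\calX$. The only difference is presentational: you package the first step as the clean radial identity $D_{R^*}=D_\phi+\phi'(b)D_{\|\cdot\|_{\calX^\circ}}$ before splitting, whereas the paper carries out the equivalent algebra inline.
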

\begin{proof}
Define $u=\nabla R(x)-\eta\ellhat_t$, $v=\nabla R(x)$ and $z=\frac{\|u\|_{\calX^\circ}-\|v\|_{\calX^\circ}}{1+\|v\|_{\calX^\circ}}$. By the definition of Bregman divergence and using~\pref{lem: identity}, we have
\begin{align*}
    D_{R^*}(u,v) &= R^*(u)-R^*(v)-\inner{\nabla R^*(v), u-v} \\
    &= \|u\|_{\calX^\circ}-\|v\|_{\calX^\circ} - \ln\left(\frac{1+\|u\|_{\calX^\circ}}{1+\|v\|_{\calX^\circ}}\right) - \frac{\|v\|_{\calX^\circ}}{1+\|v\|_{\calX^\circ}}\inner{\nabla\|\cdot\|_{\calX^\circ}(v), u-v}\\
    &=z-\ln(1+z)+\frac{1}{1+\|\nu\|_{\calX^\circ}}\left[\|v\|_{\calX^\circ}(\|u\|_{\calX^\circ}-\|v\|_{\calX^\circ})-\|v\|_{\calX^\circ}\inner{\nabla\|\cdot\|_{\calX^\circ}, u-v}\right] \\
    &=z-\ln(1+z)-\frac{1}{2}\frac{(\|u\|_{\calX^\circ}-\|v\|_{\calX^\circ})^2}{1+\|v\|_{\calX^\circ}} + \frac{D_{\frac{1}{2}\|\cdot\|_{\calX^\circ}^2}(u,v)}{1+\|v\|_{\calX^\circ}}\\
    &\leq z-\ln(1+z)+ \frac{D_{\frac{1}{2}\|\cdot\|_{\calX^\circ}^2}(u,v)}{1+\|v\|_{\calX^\circ}}
\end{align*}
Note that $z\geq -\frac{1}{2}$ as $\frac{\|u\|_{\calX^\circ}-\|v\|_{\calX^\circ}}{1+\|v\|_{\calX^\circ}} \geq -\eta\|\ellhat_t\|_{\calX^\circ}\geq -\frac{1}{2}$, we have $z-\ln(1+z)\leq z^2$. Therefore, we have
\begin{align*}
    D_{R^*}(u,v)\leq \left(\frac{\|u\|_{\calX^\circ}-\|v\|_{\calX^\circ}}{1+\|v\|_{\calX^\circ}}\right)^2+\frac{1}{1+\|v\|_{\calX^\circ}}D_{\frac{1}{2}\|\cdot\|_{\calX^\circ}^2}(u,v).
\end{align*}
Note that according to~\pref{lem: gauge diff}, we have $\frac{1}{1+\|v\|_{\calX^\circ}}=1-\|x\|_{\calX}$. Therefore, using triangle inequality leads to
\begin{align*}
    D_{R^*}(u,v)\leq (1-\|x\|_{\calX})^2\|u-v\|_{\calX^\circ}^2 + (1-\|x\|_{\calX})D_{\frac{1}{2}\|\cdot\|_{\calX^\circ}^2}(u,v).
\end{align*}
Finally, using~\pref{lem: bound-bregman}, we have
\begin{align*}
    D_{R^*}(u,v)&\leq (1-\|x\|_{\calX})^2\|u-v\|_{\calX^\circ}^2+(1-\|x\|_{\calX})\cdot\frac{4(\alpha+1)}{\alpha}\|u-v\|_{\calX^\circ}^2 \\
    &\leq (1-\|x\|_{\calX})\left(1+\frac{4(\alpha+1)}{\alpha}\right)\eta^2\|\ellhat_t\|_{\calX^\circ}^2.
\end{align*}
\end{proof}

With the help of~\pref{lem:bounded-estimator} and~\pref{lem:stab lemma uniform convex}, we are able to bound \basereg.

\begin{lemma}\label{lem:base-reg-alpha}
For an arbitrary interval $\calI$ started at round $j$, setting $\gamma = 4d\baseLR '$ for all $j'\in[T]$, \pref{alg:strongly-meta} ensures that the base regret of $\calB_j$ with learning rate $\baseLR $ (starting from round $j$) for any comparator $u\in\calX'$ is at most
\begin{align*}
    \E\left[\sum_{t \in \I} \inner{a_t^{(j)} - u,\hat{\ell_t}}\right] \leq \frac{\log(1/\gamma)}{\baseLR } + \frac{2d^{\frac{2}{p}}\baseLR }{1-\beta}\cdot\left(1+\frac{4(\alpha+1)}{\alpha}\right) \sum_{t \in \I} \frac{1 - \norm{a_t^{(j)}}_\calX}{1-\sum_{i=1}^{t} \sleep_{t,i}\norm{a_t^{(i)}}_\calX}.
\end{align*}
\end{lemma}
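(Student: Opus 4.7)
The plan is to mirror the argument of \pref{lem:base-reg} with three substitutions: replace the $\ell_p$-gauge by the gauge $\|\cdot\|_\calX$ of the strongly convex set, replace the stability bound \pref{lem:stab lemma} by its uniformly-convex analogue \pref{lem:stab lemma uniform convex}, and adjust the norm-conversion step to exploit the sandwich $\ell_p(1)\subseteq\calX\subseteq\ell_q(1)$. First I would apply the standard OMD regret bound to the update in \pref{alg:strongly-base}, yielding
\begin{align*}
\E\Bigl[\sum_{t\in\I}\inner{a_t^{(j)}-u,\ellhat_t}\Bigr]\leq \frac{R(u)-R(a_j^{(j)})}{\baseLR}+\frac{1}{\baseLR}\sum_{t\in\I}\E\bigl[D_{R^*}\bigl(\nabla R(a_t^{(j)})-\baseLR\ellhat_t,\nabla R(a_t^{(j)})\bigr)\bigr].
\end{align*}
By \pref{lem: identity} the initial point $a_j^{(j)}=\argmin_{x\in\calX'}R(x)$ is the origin and $R(0)=0$, while any $u\in\calX'$ has $\|u\|_\calX\leq 1-\gamma$ and hence $R(u)=-\log(1-\|u\|_\calX)-\|u\|_\calX\leq \log(1/\gamma)$, which produces the first summand of the target bound.

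Next I would invoke \pref{lem:stab lemma uniform convex} with $x=a_t^{(j)}$. Its precondition $\baseLR\|\ellhat_t\|_{\calX^\circ}\leq 1/2$ is easy to verify: whenever $\ellhat_t\neq 0$ the sampling in \pref{alg:strongly-meta} forces $x_t=\pm e_n$, so $\|x_t\|_{\calX^\circ}\leq \|x_t\|_p=1$ by the inclusion $\ell_p(1)\subseteq\calX^\circ$; combining $|\ell_t^\T x_t|\leq 1$, $\|a_t^{(i)}\|_\calX\leq 1-\gamma$, $\beta\leq 1/2$, and $\gamma=4d\baseLR$ then gives $\baseLR\|\ellhat_t\|_{\calX^\circ}\leq 2d\baseLR/\gamma=1/2$. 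The stability lemma thus yields
\begin{align*}
D_{R^*}\bigl(\nabla R(a_t^{(j)})-\baseLR\ellhat_t,\nabla R(a_t^{(j)})\bigr)\leq \bigl(1-\|a_t^{(j)}\|_\calX\bigr)\Bigl(1+\frac{4(\alpha+1)}{\alpha}\Bigr)\baseLR^2\|\ellhat_t\|_{\calX^\circ}^2.
\end{align*}

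It then remains to evaluate $\E_t[\|\ellhat_t\|_{\calX^\circ}^2]$. Unrolling the nested sampling exactly as in the proof of \pref{lem:base-reg} (probability $1-\beta$ of drawing a base index $i\sim\sleep_t$; conditional probability $1-\|a_t^{(i)}\|_\calX$ that $\xi_t^{(i)}=0$; then $x_t$ uniform over $\{\pm e_n\}_{n=1}^d$) gives
\begin{align*}
\E_t\bigl[\|\ellhat_t\|_{\calX^\circ}^2\bigr]=\frac{d}{(1-\beta)\bigl(1-\sum_i\sleep_{t,i}\|a_t^{(i)}\|_\calX\bigr)}\sum_{n=1}^d \ell_{t,n}^2\|e_n\|_{\calX^\circ}^2.
\end{align*}
I would bound $\|e_n\|_{\calX^\circ}\leq\|e_n\|_p=1$ (using $\ell_p(1)\subseteq\calX^\circ$) and then $\sum_n\ell_{t,n}^2\leq\|\ell_t\|_2^2\leq d^{1-2/q}\|\ell_t\|_q^2\leq d^{2/p-1}$ via monotonicity of $\ell_r$-norms on $\R^d$ for $q\geq 2$, the inclusion $\calX^\circ\subseteq\ell_q(1)$, and the identity $1-2/q=2/p-1$. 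Plugging back into the Bregman bound and cancelling the factor $1/\baseLR$ from the decomposition produces the claimed coefficient $\frac{2d^{2/p}\baseLR}{1-\beta}(1+4(\alpha+1)/\alpha)$ on the second summand. The main obstacle will be precisely this dimensional bookkeeping: naively using $\|\ell_t\|_{\calX^\circ}\leq 1$ together with $\|\cdot\|_{\calX^\circ}\geq\|\cdot\|_q$ would cost an extra factor of $d$, so both sides of the sandwich $\ell_p(1)\subseteq\calX^\circ\subseteq\ell_q(1)$ must be used in concert to land on the sharp exponent $2/p$.
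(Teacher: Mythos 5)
Your proposal is correct and follows essentially the same route as the paper's proof: the standard OMD decomposition with $R(u)-R(a_j^{(j)})\leq\log(1/\gamma)$, verification of $\baseLR\|\ellhat_t\|_{\calX^\circ}\leq 1/2$ via $\|e_n\|_{\calX^\circ}\leq 1$ and $\gamma=4d\baseLR$, the stability bound from \pref{lem:stab lemma uniform convex}, and the evaluation of $\E_t[\|\ellhat_t\|_{\calX^\circ}^2]$ using both inclusions $\ell_p(1)\subseteq\calX^\circ\subseteq\ell_q(1)$ to get $\|e_n\|_{\calX^\circ}\leq 1$ and $\|\ell_t\|_2^2\leq d^{2/p-1}$, exactly as in the paper (your bound even comes out slightly tighter than the stated $2d^{2/p}$ coefficient, which contains slack).
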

\begin{proof}
Again, according to the standard analysis of OMD (see~\pref{lemma:OMD-bubeck}) we have
    \begin{align*}
    \E\left[\sum_{t\in\calI} \inner{a_t^{(j)} - u,\hat{\ell_t}}\right] \leq \frac{R(u)-R(a_j^{(j)})}{\baseLR } + \frac{1}{\baseLR } \sum_{t\in \calI} \E\left[ D_{R^*}\left(\nabla R(a_t^{(j)}) - \baseLR  \hat{\ell}_t, \nabla R(a_t^{(j)})\right)\right].
\end{align*}
The first term can still be upper bounded by $\frac{\log(1/\gamma)}{\baseLR }$ as $a_j^{(j)}=\argmin_{x\in\calX'}R(x)$ and $u\in\calX'=\{x \mid \|x\|_\calX\leq 1-\gamma\}$, we have
\begin{align*}
    R(u)-R(a_j^{(j)}) \leq -\log(1-(1-\gamma)) - 0 = -\log\gamma.
\end{align*}
For the second term, we will show that 
\begin{align*}
    &\mathbb{E}_t\left[D_{R^*}\left(\nabla R(a_t^{(j)})-\baseLR \ellhat_t, \nabla R(a_t^{(j)})\right)\right] \leq \frac{2d^{\frac{2}{p}}\baseLR ^2}{1-\beta}\cdot\left(1+\frac{4(\alpha+1)}{\alpha}\right) \sum_{t \in \I} \frac{1 - \norm{a_t^{(j)}}_\calX}{1-\sum_{i=1}^{t} \sleep_{t,i}\norm{a_t^{(i)}}_\calX}.
\end{align*}
According to~\pref{eqn:local-norm-alpha} and the choice of $\baseLR $ and $\gamma$, we have $\baseLR \|\ellhat_t\|_{\calX^\circ}\leq \frac{2d\baseLR }{\gamma}=\frac{1}{2}$. Based on~\pref{lem:stab lemma uniform convex}, we only need to show that
\begin{align*}
\mathbb{E}_t\left[\|\ellhat_t\|_{\calX^\circ}^2\right]\leq \frac{2d^{\frac{2}{p}}}{(1-\beta)(1-\sum_{i=1}^t\sleep_{t,i}\|a_t^{(i)}\|_{\calX})}.    
\end{align*}
In fact, according to the definition of $\ellhat_t$, we have
\begin{align*}
& \mathbb{E}_t\left[\|\ellhat_t\|_{\calX^{\circ}}^2\right] \\
& \leq \frac{d^2}{(1-\beta)^2(1-\sum_{i=1}^t\sleep_{t,i}\|a_t^{(i)}\|_{\calX})^2}\mathbb{E}_t\left[(1-\xi_t)^2\|x_t\|_{\calX^\circ}^2\cdot|x_t^\top\ell_t|^2\cdot\mathbbm{1}\{\rho_t=0\}\right]\\
& \leq \frac{d^2}{(1-\beta)^2(1-\sum_{i=1}^t\sleep_{t,i}\|a_t^{(i)}\|_{\calX})^2}\mathbb{E}_t\left[(1-\beta)\sum_{j=1}^t\sleep_{t,j}(1-\xi_t^{(j)})^2\|\wt{a}_t^{(j)}\|_{\calX^\circ}^2\cdot|\wt{a}_{t}^{(j)^\top}\ell_t|^2\right]\\
& \leq \frac{d^2}{(1-\beta)(1-\sum_{i=1}^t\sleep_{t,i}\|a_t^{(i)}\|_{\calX})^2}\sum_{j=1}^t\sleep_{t,j}\mathbb{E}_t\left[(1-\|a_t^{(j)}\|_\calX)\cdot\frac{1}{d}\sum_{n=1}^d\|e_n\|_{\calX^\circ}^2\cdot|\ell_{t,n}|^2\right].
\end{align*}
Note that $\calX \subseteq \ell_q(1)$, we have $\ell_p(1)\subseteq \calX^\circ$, which means that $e_n\in \calX^\circ$ and $\|e_n\|_{\calX^\circ}\leq 1$. Also using the fact that $\ell_p(1)\subseteq \calX$, we have $\ell_t\in \calX^\circ\subseteq \ell_q(1)$ and $\|\ell_t\|_2^2\leq d^{1-\frac{2}{q}}\|\ell_t\|_q^2\leq d^{1-\frac{2}{q}}$. Therefore, we have
\begin{align*}
    \mathbb{E}_j\left[\|\ellhat_t\|_{\calX^\circ}^2\right] \leq \frac{2d^{\frac{2}{p}}\sum_{j=1}^t\sleep_{t,j}(1-\|a_t^{(j)}\|_\calX)}{(1-\beta)(1-\sum_{i=1}^t\sleep_{t,i}\|a_t^{(i)}\|_{\calX})^2}=\frac{2d^{\frac{2}{p}}}{(1-\beta)(1-\sum_{i=1}^t\sleep_{t,i}\|a_t^{(i)}\|_\calX)},
\end{align*}
which finishes the proof.
\end{proof}

\subsection{Bounding \metareg}\label{appendix:metareg-alpha}

In this section, we first prove several useful lemmas and then bound the term \metareg. We prove the following lemma, which is a counterpart of~\pref{lemma:switching-regret-meta}.
\begin{lemma}
\label{lemma:switching-regret-meta-alpha}
For an arbitrary interval $\calI\subseteq[T]$ started at round $j$, setting $\frac{\metaLR}{\Correct\gamma T}\leq \frac{1}{8}$, $\beta=8d^{\frac{2}{p}}\metaLR\leq \frac{1}{2}$ and $\mu = \frac{1}{T}$, \pref{alg:strongly-meta} guarantees that
\begin{equation}
    \label{eq:switching-regret-meta-alpha}
    \sum_{t\in \calI} \inner{\combine_{t}-e_{j}, \costhat_t} \leq \frac{2\ln T}{\metaLR} + \metaLR \sum_{t\in\calI} \sum_{i=1}^T \combine_{t,i} \costhat_{t,i}^2 + \O\left(\frac{\abs{\I}}{\metaLR T}\right).
\end{equation}
\end{lemma}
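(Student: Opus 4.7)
The plan is to mirror the proof of~\pref{lemma:switching-regret-meta} (the $\ell_p$-ball analogue) almost verbatim, since the meta update~\pref{eqn:meta-strategy-update-alpha} is identical in form to~\pref{eqn:meta-strategy-update}: exponential weights with a fixed-share mixing. I would introduce $v_{t+1,i} \propto \combine_{t,i}\exp(-\metaLR\costhat_{t,i})$ so that $\combine_{t+1,i} = \frac{\mu}{T} + (1-\mu)v_{t+1,i}$, and then apply the elementary inequality $\exp(-x) \leq 1-x+x^2$ (valid for $x \geq -\nicefrac{1}{2}$) to derive the one-step bound $\inner{\combine_t,\costhat_t} + \frac{1}{\metaLR}\log\bigl(\sum_i \combine_{t,i}\exp(-\metaLR\costhat_{t,i})\bigr) \leq \metaLR\sum_i \combine_{t,i}\costhat_{t,i}^2$.

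The only step that genuinely needs reworking is verifying the range condition $\metaLR\max_{i\in[t]}|\costhat_{t,i}|\leq\nicefrac{1}{2}$, because the norm geometry of a general smooth and strongly convex $\calX$ differs from that of an $\ell_p$-ball. I would split $|\costhat_{t,i}| \leq |\innersmall{\wt{a}_t^{(i)},\ellbar_t}| + |b_{t,i}|$. For the first summand, Cauchy--Schwarz yields $|\wt{a}_t^{(i)\T}\wt{M}_t^{-1}x_t|\cdot|x_t^\T\ell_t| \leq \|\wt{M}_t^{-1}\|_{\mathrm{op}}\,\|\wt{a}_t^{(i)}\|_2\|x_t\|_2$, and since $\wt{a}_t^{(i)},x_t \in \calX \subseteq \ell_q(1)$ (using $\{\pm e_n\}\subseteq\ell_p(1)\subseteq\calX$), Hölder's inequality upgrades this to $\|\wt{a}_t^{(i)}\|_2,\|x_t\|_2 \leq d^{1/2-1/q}\cdot 1 = d^{1/p-1/2}$ for $q\geq 2$. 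Combined with $\|\wt{M}_t^{-1}\|_{\mathrm{op}}\leq d/\beta$ and $|x_t^\T\ell_t|\leq 1$, this gives $\metaLR|\innersmall{\wt{a}_t^{(i)},\ellbar_t}| \leq \metaLR d^{2/p}/\beta = \nicefrac{1}{8}$ under $\beta = 8d^{2/p}\metaLR$. For the bias, $\|a_t^{(i)}\|_\calX \leq 1-\gamma$ and $\beta\leq \nicefrac{1}{2}$ give $|b_{t,i}|\leq 2/(\Correct T\gamma)$, so the hypothesis $\metaLR/(\Correct\gamma T)\leq \nicefrac{1}{8}$ forces $\metaLR|b_{t,i}|\leq \nicefrac{1}{4}$. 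Summing, $\metaLR|\costhat_{t,i}|\leq \nicefrac{3}{8}$, confirming the range requirement.

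The remaining pieces are routine. Summing the per-round inequality over $t\in\calI$ against the comparator $e_j$ and using the identity $\frac{1}{\metaLR}\log\bigl(\sum_i\combine_{t,i}\exp(-\metaLR\costhat_{t,i})\bigr) = -\frac{1}{\metaLR}\log(v_{t+1,j}/\combine_{t,j}) - \costhat_{t,j}$ converts the sum into a telescoping quantity. Writing $\log(v_{t+1,j}/\combine_{t,j}) = \log(\combine_{t+1,j}/\combine_{t,j}) + \log(v_{t+1,j}/\combine_{t+1,j})$, the first part telescopes along $\calI$ and is bounded by $\log(T/\mu) = 2\log T$ using $\combine_{t,j}\geq \mu/T = 1/T^2$, while the second part is at most $\log(1/(1-\mu)) = \O(1/T)$ per round and contributes $\O(|\calI|/T)$ in total, matching the fixed-share argument in~\pref{eq:meta-fixed-share}. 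Assembling everything yields precisely~\pref{eq:switching-regret-meta-alpha}. The main (and only non-routine) obstacle is the geometric transfer between $\|\cdot\|_\calX$ and $\|\cdot\|_2$ via $\ell_p(1)\subseteq\calX\subseteq\ell_q(1)$, which produces the $d^{2/p}$ factor and thereby dictates the choice $\beta = 8d^{2/p}\metaLR$ (recovering $\beta = 8d\metaLR$ of the $\ell_p$-ball case exactly at $p=2$).
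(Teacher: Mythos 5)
Your proposal is correct and follows essentially the same argument as the paper's proof: the same one-step inequality from $\exp(-x)\leq 1-x+x^2$ after verifying $\metaLR\max_{i}|\costhat_{t,i}|\leq \tfrac{1}{2}$ (split into the loss part, bounded by $\tfrac18$ via $\beta=8d^{2/p}\metaLR$, and the bias part, bounded by $\tfrac14$ via $\metaLR/(\Correct\gamma T)\leq\tfrac18$), followed by the same telescoping identity and fixed-share bound as in \pref{eq:meta-fixed-share}. The only immaterial difference is local: you verify the range condition with Cauchy--Schwarz and the $\ell_2$/operator-norm bounds $\|\wt{a}_t^{(i)}\|_2,\|x_t\|_2\leq d^{1/p-1/2}$, $\|\wt{M}_t^{-1}\|_{\mathrm{op}}\leq d/\beta$, whereas the paper pairs the gauge norm with its polar, $|\innersmall{x,y}|\leq \|x\|_{\calX}\|y\|_{\calX^\circ}$, and then uses $\ell_p(1)\subseteq\calX^\circ$; both routes yield the same $\metaLR d^{2/p}/\beta$ factor.
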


\begin{proof} 
Define $v_{t+1,i} \triangleq \frac{\combine_{t,i} \exp(-\metaLR \costhat_{t,i})}{\sum_{t=1}^T \combine_{t,i} \exp(-\metaLR \costhat_{t,i})}$ for all $i\in [T]$. Then $\combine_{t+1,i} = \frac{\mu}{T} + (1-\mu) v_{t+1,i}$. Note that
\begin{align*}
    & \inner{\combine_t,\costhat_t} + \frac{1}{\metaLR}\ln\left( \sum_{i=1}^T \combine_{t,i} \exp(-\metaLR \costhat_{t,i})\right) \\
    &\leq \inner{\combine_t,\costhat_t} + \frac{1}{\metaLR}\ln\left( \sum_{i=1}^T \combine_{t,i} (1 -\metaLR \costhat_{t,i} + \metaLR^2 \costhat_{t,i}^2)\right) \\
     &=\inner{\combine_t,\costhat_t} + \frac{1}{\metaLR}\ln\left( 1 -\metaLR \inner{\combine_t,\costhat_t} + \metaLR^2 \sum_{i=1}^T \combine_{t,i} \costhat_{t,i}^2\right) \\
     &\leq \metaLR \sum_{i=1}^T \combine_{t,i} \costhat_{t,i}^2.
\end{align*}
The first inequality is because $\exp(-x)\leq 1-x+x^2$ for $x\geq -\frac{1}{2}$ and according to the choice of $\metaLR$, $\gamma$ and $\Correct$, we have
\begin{align*}
    \metaLR\max_{i\in [t]}\left|\costhat_{t,i}\right| \leq \metaLR \max_{i\in [t]}\left| \wt{a}_t^{(i)^\top} \wt{M}_t^{-1}x_t-\injbias_{t,i} \right| \leq \metaLR\max_{i\in[t]}\left|\wt{a}_t^{(i)^\top}\wt{M}_t^{-1}x_t\right|+\metaLR\max_{i\in[t]}\left|\injbias_{t,i}\right|.
\end{align*}
For the first term, by using \holder's inequality, we have
\begin{align*}
    \metaLR\max_{i\in[t]}\left|\wt{a}_t^{(i)^\top}\wt{M}_t^{-1}x_t\right|&\leq \metaLR\max_{i\in[t]}\|\wt{a}_t^{(i)}\|_{\calX}\cdot\|\wt{M}_t^{-1}x_t\|_{\calX^\circ} \\
    &\leq \metaLR\|\wt{M}_t^{-1}x_t\|_p \tag{$\wt{a}_{t}^{(i)}\in\calX$ and $\ell_p(1)\subseteq\calX^\circ$}\\
    &\leq \metaLR d^{\frac{1}{p}-\frac{1}{2}}\|\wt{M}_t^{-1}x_t\|_2\\
    &\leq \frac{\metaLR d^{\frac{1}{2}+\frac{1}{p}}}{\beta}\cdot \|x_t\|_2 \tag{$\wt{M}_t\succeq\frac{\beta}{d}I$}\\
    &\leq \frac{\metaLR d^{\frac{2}{p}}}{\beta}. \tag{$\|x\|_2\leq d^{\frac{1}{2}-\frac{1}{q}}\|x\|_q\leq d^{\frac{1}{p}-\frac{1}{2}}$}
\end{align*}
In above argument, we use the fact that for vector $x \in \R^d$ and $0 < s<r$, we have $\norm{x}_r \leq \norm{x}_s \leq d^{\frac{1}{s} - \frac{1}{r}} \norm{x}_r$. Moreover, note that $p \in (1,2]$ and $\ell_p(1) \subseteq \X \subseteq \ell_q(1)$.

For the second term, according to the definition of $\injbias_{t,i}$, $|\injbias_{t,i}|\leq \frac{1}{\Correct T(1-\beta)\gamma}\leq \frac{2}{\Correct T\gamma}$. Therefore, combining the above two bounds shows that $\metaLR\max_{i\in[t]}|\costhat_{t,i}|\leq \frac{1}{8}+\frac{1}{4}\leq \frac{1}{2}$ according to the choice of $\epsilon$, $\gamma$, and $\lambda$. Furthermore, by the definition of $v_{t+1,i}$, we have $\sum_{j=1}^T \combine_{t,j} \exp(-\metaLR \costhat_{t,j}) = \combine_{t,i} \exp(-\metaLR \costhat_{t,i})/v_{t+1,i}$. Therefore, we have
\begin{align*}
    \frac{1}{\metaLR}\ln \left(\sum_{j=1}^T \combine_{t,j} \exp(-\metaLR \costhat_{t,j}) \right) = -\frac{1}{\metaLR} \ln\left(\frac{v_{t+1,i}}{\combine_{t,i}}\right) - \costhat_{t,i}.
\end{align*}
Combining the two equations and taking summation over $t\in\calI$, we have for any $e_j\in \Delta_T$, $j\in [T]$, 
\begin{align*}
    \sum_{t\in\calI}\inner{\combine_t,\costhat_t} - \sum_{t\in\calI}\inner{e_j,\costhat_t} \leq \metaLR \sum_{t\in\calI} \sum_{i=1}^T \combine_{t,i} \costhat_{t,i}^2 + \frac{1}{\metaLR} \sum_{t \in \I}\ln\left(\frac{v_{t+1,j}}{\combine_{t,j}}\right).
\end{align*}
The second term can be dealt with according to~\pref{eq:meta-fixed-share} and we then have
\begin{align*}
    \sum_{t\in\calI}\inner{\combine_{t}-e_j,\costhat_t}\leq \frac{2\ln T}{\metaLR}+\metaLR\sum_{t\in\calI}\sum_{i=1}^T\combine_{t,i}\costhat_{t,i}^2 + \O\left(\frac{\abs{\I}}{\metaLR T}\right),
\end{align*}
which finishes the proof.
\end{proof}

Next, we prove the following lemma, which bounds the second term shown in~\pref{eq:switching-regret-meta-alpha}
\begin{lemma}
\label{lemma:variance-issue-alpha}
For any $t\in[T]$, setting $\Correct^2\gamma=\Theta\left(d^{-\frac{1}{q}}\sqrt{\frac{1}{ST^3}}\right)$, \pref{alg:strongly-meta} guarantees that
\begin{equation}
    \label{eq:variance-issue}
    \sum_{i=1}^T \combine_{t,i} \costhat_{t,i}^2 \leq \sum_{i \in [t]} \sleep_{t,i} \costhat_{t,i}^2 \leq 2\sum_{i \in [t]} \sleep_{t,i} \cost_{t,i}^2 + \order\left(d^{\frac{1}{q}}\sqrt{\frac{S}{T}}\right),
\end{equation}
where $\cost_{t,i}=\innersmall{\wt{a}_t^{(i)}, \ellbar_t}$.
\end{lemma}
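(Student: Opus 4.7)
The plan is to mirror the proof of \pref{lemma:variance-issue} almost verbatim, since the two key ingredients --- Cauchy--Schwarz and telescoping via the renormalization of $\hat{p}_t$ --- do not depend on the particular form of the norm that appears in the definition of $b_{t,i}$.

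For the first inequality, I would split the left-hand sum as $\sum_{i=1}^T p_{t,i}\hat{c}_{t,i}^2 = \sum_{i\in[t]} p_{t,i}\hat{c}_{t,i}^2 + \sum_{i>t} p_{t,i}\hat{c}_{t,i}^2$. For $i > t$, the construction $\hat{c}_{t,i} = \sum_{j\in[t]}\hat{p}_{t,j}\hat{c}_{t,j}$ combined with Cauchy--Schwarz gives $\hat{c}_{t,i}^2 \leq \sum_{j\in[t]}\hat{p}_{t,j}\hat{c}_{t,j}^2$. Using $p_{t,i} = \big(\sum_{j\in[t]}p_{t,j}\big)\hat{p}_{t,i}$ for $i \leq t$ and $\sum_{i=1}^T p_{t,i}=1$, the two pieces recombine into $\sum_{i\in[t]}\hat{p}_{t,i}\hat{c}_{t,i}^2$, exactly as in the $\ell_p$-ball case.

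For the second inequality, I would use $\hat{c}_{t,i} = c_{t,i} - b_{t,i}$ for $i \in [t]$ together with AM-GM to obtain $\hat{c}_{t,i}^2 \leq 2 c_{t,i}^2 + 2 b_{t,i}^2$, so it suffices to bound $\sum_{i\in[t]}\hat{p}_{t,i}b_{t,i}^2$. Since $\|a_t^{(i)}\|_\calX \leq 1-\gamma$ and $\beta \leq 1/2$, the definition of $b_{t,i}$ in \pref{eq:bias-construct-sc-set} yields the pointwise bound $b_{t,i} \leq \frac{1}{\lambda T\gamma(1-\beta)} \leq \frac{2}{\lambda T\gamma}$. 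Extracting one such factor gives
\begin{align*}
\sum_{i\in[t]}\hat{p}_{t,i}\, b_{t,i}^2 \;\leq\; \frac{2}{\lambda T\gamma}\sum_{i\in[t]}\hat{p}_{t,i}\,b_{t,i} \;=\; \frac{2}{\lambda T\gamma}\cdot\frac{1}{\lambda T(1-\beta)} \;\leq\; \frac{4}{(\lambda T)^2\gamma},
\end{align*}
where the middle equality is the same telescoping computation used in \pref{eqn:posterm-term-bound-alpha}.

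Substituting the prescribed scaling $\lambda^2\gamma = \Theta\big(d^{-1/q}\sqrt{1/(ST^3)}\big)$ converts $\frac{4}{(\lambda T)^2 \gamma}$ into $\order(d^{1/q}\sqrt{S/T})$, matching the stated target. No real obstacle arises: the proof is structurally identical to that of \pref{lemma:variance-issue}, and the only change is that the exponent on $d$ shifts from $1/2$ to $1/q$, reflecting the rescaling of $\lambda,\gamma$ in \pref{thm:main-alpha} that is required to absorb the extra $d^{2/p}$ appearing in the base stability estimate of \pref{lem:base-reg-alpha}.
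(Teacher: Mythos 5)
Your proposal is correct and follows essentially the same route as the paper: the first inequality is the identical Cauchy--Schwarz/renormalization argument, and for the second term you reach the same bound $\frac{4}{(\lambda T)^2\gamma}$ (only via a pointwise bound $b_{t,i}\leq \frac{2}{\lambda T\gamma}$ plus the telescoping identity, whereas the paper pulls the $1/\gamma$ out of $b_{t,i}^2$ directly), and the substitution of $\lambda^2\gamma=\Theta\bigl(d^{-1/q}\sqrt{1/(ST^3)}\bigr)$ gives $\order\bigl(d^{1/q}\sqrt{S/T}\bigr)$ exactly as in the paper.
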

\begin{proof} According to the definition of $\sleep_{t}$ and $\costhat_t$, we have
\begin{align*}
    \sum_{i=1}^T \combine_{t,i} \costhat_{t,i}^2  & = \sum_{i \in [t]} \combine_{t,i} \costhat_{t,i}^2 + \sum_{i>t} \combine_{t,i} \left(\sum_{j=1}^t \sleep_{t,j}\costhat_{t,j}\right)^2 \leq \sum_{i \in [t]} \combine_{t,i} \costhat_{t,i}^2 + \sum_{i \notin [t]} \combine_{t,i} \left(\sum_{j \in [t]} \sleep_{t,j} \costhat_{t,j}^2 \right)\\
    & =  \left(\sum_{i \in [t]} \combine_{t,i}\right)\left(\sum_{i \in [t]} \sleep_{t,i} \costhat_{t,i}^2\right) + \left(\sum_{i \notin [t]} \combine_{t,i}\right) \left(\sum_{i \in [t]} \sleep_{t,i} \costhat_{t,i}^2 \right) = \sum_{i \in [t]} \sleep_{t,i} \costhat_{t,i}^2,
\end{align*}
where the inequality is because of Cauchy-Schwarz inequality. Besides, recall that $\cost_{t,i}=\inner{\wt{a}_t^{(i)}, \ellbar_t}$ and $\costhat_{t,i}^2 = \left(\cost_{t,i}-\injbias_{t,i}\right)^2\leq 2\cost_{t,i}^2 + 2\injbias_{t,i}^2$. According to the definition of $\injbias_{t,i}$, we know that 
\begin{align*}
    \sum_{i\in [t]} \sleep_{t,i} \injbias_{t,i}^2 \leq \frac{4}{(\Correct T)^2} \frac{1}{\gamma} \sum_{i\in [t]} \sleep_{t,i} \frac{1 - \|a_t^{(i)}\|_\calX}{1 - \sum_{i \in [t]} \sleep_{t,i} \|a_t^{(i)}\|_\calX} = \frac{4}{(\Correct T)^2} \frac{1}{\gamma} = \order\left(d^{\frac{1}{q}}\sqrt{\frac{S}{T}}\right),
\end{align*}
where the last step holds because we choose $\Correct^2\gamma=\Theta\left(d^{-\frac{1}{q}}\sqrt{\frac{1}{ST^3}}\right)$.
\end{proof}

Combining~\pref{lemma:switching-regret-meta-alpha} and~\pref{lemma:variance-issue-alpha}, we  obtain the following lemma showing the upper bound for \metareg, which is exactly the same as~\pref{lemma:meta-final} except for the choice of parameters.

\begin{lemma}\label{lemma:meta-final-alpha}
Define $C=\sqrt{\frac{\alpha}{10\alpha+8}}$. Set $\metaLR=\min\left\{d^{-\frac{1}{p}}\sqrt{\frac{S}{T}},\frac{1}{16d^{\frac{2}{p}}},\frac{C^2}{2}\right\}$, $\beta=8d^{\frac{2}{p}}\metaLR$, $\Correct=\frac{Cd^{-\frac{1}{q}}}{\sqrt{ST}}$, $\gamma=4Cd^{\frac{1}{q}}\sqrt{\frac{S}{T}}$ and $\mu = \frac{1}{T}$.~\pref{alg:strongly-meta} guarantees that
\begin{align*}
    \mathbb{E}\left[\metareg\right] \leq \otil\left(d^{\frac{1}{p}}\sqrt{ST}\right).
\end{align*}
\end{lemma}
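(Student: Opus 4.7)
The plan is to imitate the proof of \pref{lemma:meta-final} with the exponents of $d$ adjusted for the smooth and strongly convex geometry. First I would check that the parameter choices satisfy the preconditions of \pref{lemma:switching-regret-meta-alpha} and \pref{lemma:variance-issue-alpha}. A direct computation gives
\[
\Correct\gamma T = \tfrac{Cd^{-1/q}}{\sqrt{ST}}\cdot 4Cd^{1/q}\sqrt{S/T}\cdot T = 4C^2,
\]
so $\metaLR/(\Correct\gamma T)\le 1/8$ follows from $\metaLR\le C^2/2$; $\beta=8d^{2/p}\metaLR\le 1/2$ from $\metaLR\le 1/(16d^{2/p})$; and $\Correct^2\gamma=\Theta(d^{-1/q}/\sqrt{ST^3})$ holds by substitution.

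Next, fix an interval $\I_k$ starting at $j_k$. Combining \pref{lemma:switching-regret-meta-alpha} and \pref{lemma:variance-issue-alpha} yields
\[
\E\Bigl[\textstyle\sum_{t\in\I_k}\inner{\combine_t-e_{j_k},\costhat_t}\Bigr]
\le \tfrac{2\ln T}{\metaLR}
+ 2\metaLR\,\E\Bigl[\sum_{t\in\I_k}\sum_{i\in[t]}\sleep_{t,i}\cost_{t,i}^2\Bigr]
+ \otil\bigl(\metaLR|\I_k|d^{1/q}\sqrt{S/T}\bigr)
+ O\!\left(\tfrac{|\I_k|}{\metaLR T}\right).
\]
Since $\cost_{t,i}=(\wt{a}_t^{(i)\T}\wt{M}_t^{-1}x_t)(x_t^\T\ell_t)$ and every $x_t$ produced by \pref{alg:strongly-meta} lies in $\calX$ (as $\{\pm e_n\}\subseteq\ell_p(1)\subseteq\calX$ and $\wt{a}_t^{(i)}\in\calX$), we have $|x_t^\T\ell_t|\le 1$, so $\cost_{t,i}^2\le (\wt{a}_t^{(i)\T}\wt{M}_t^{-1}x_t)^2$. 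Then using $\mathbb{E}_t[x_tx_t^\T]=\wt{M}_t$,
\[
\mathbb{E}_t\!\Bigl[\textstyle\sum_i\sleep_{t,i}\cost_{t,i}^2\Bigr]
\le \sum_i\sleep_{t,i}\wt{a}_t^{(i)\T}\wt{M}_t^{-1}\wt{a}_t^{(i)}.
\]

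For the key trace-like step, I would use $\wt{M}_t\succeq (1-\beta)\sum_i\sleep_{t,i}\wt{a}_t^{(i)}\wt{a}_t^{(i)\T}$, which implies $\wt{M}_t^{-1}\preceq\tfrac{1}{1-\beta}(\sum_i\sleep_{t,i}\wt{a}_t^{(i)}\wt{a}_t^{(i)\T})^{-1}$ on the range of the projector onto the span of the $\wt a_t^{(i)}$'s, giving $\sum_i\sleep_{t,i}\wt{a}_t^{(i)\T}\wt{M}_t^{-1}\wt{a}_t^{(i)}\le d/(1-\beta)\le 2d$. Hence the per-interval bound becomes $\tfrac{2\ln T}{\metaLR}+4\metaLR d|\I_k|+\otil(\metaLR|\I_k|d^{1/q}\sqrt{S/T})+O(|\I_k|/(\metaLR T))$, and summing over $k\in[S]$ yields
\[
\E[\metareg]\le \tfrac{2S\ln T}{\metaLR}+4\metaLR dT+\otil\bigl(\metaLR d^{1/q}\sqrt{ST}\bigr)+O(1/\metaLR).
\]
Plugging in the choice $\metaLR=\min\{d^{-1/p}\sqrt{S/T},\,1/(16d^{2/p}),\,C^2/2\}$ and using $1/q\le 1/p$ for $p\in(1,2]$, the first two terms are both $\otil(d^{1/p}\sqrt{ST})$ and dominate, yielding the claimed bound.

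The main obstacle I expect is bookkeeping the exponents $1/p$, $1/q$, and $2/p$: one must simultaneously keep $\beta$ large enough so that $\metaLR\max_i|\costhat_{t,i}|\le 1/2$ (the analogue of the $\ell_p$-ball check, which here needs $\metaLR d^{2/p}/\beta\le O(1)$), yet small enough that $\metaLR dT$ and the $\metaLR$-coefficient in the variance term both scale as $\sqrt{ST}$ up to the target $d^{1/p}$ factor. The choice $\beta=8d^{2/p}\metaLR$ is precisely what balances these constraints.
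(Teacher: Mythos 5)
Your proof is correct and follows essentially the same path as the paper: verify the preconditions of \pref{lemma:switching-regret-meta-alpha} and \pref{lemma:variance-issue-alpha}, then mirror the derivation used in the $\ell_p$-ball case (\pref{eq:meta-inter}) with the $d$-exponents adjusted, and sum over intervals. You simply spell out the step the paper summarizes as ``follow the derivation of \pref{eq:meta-inter}''; your trace bound is stated slightly loosely (the ``range of the projector'' caveat is unnecessary — cleanest is $\text{tr}(\wt{M}_t^{-1}\sum_i\sleep_{t,i}\wt{a}_t^{(i)}\wt{a}_t^{(i)\T})\le\tfrac{1}{1-\beta}\text{tr}(\wt{M}_t^{-1}\wt{M}_t)=\tfrac{d}{1-\beta}$), but the conclusion is the same.
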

\begin{proof}
     First, it is direct to check that the choice of $\Correct$, $\gamma$ and $\metaLR$ satisfies the condition required in~\pref{lemma:switching-regret-meta-alpha} and~\pref{lemma:variance-issue-alpha}. Based on the two lemmas, for each interval $\calI_k$, let $j_k$ be the start time stamp for $\calI_k$. As $\beta=8d^{\frac{2}{p}}\metaLR\leq\frac{1}{2}$, we follow the derivation of~\pref{eq:meta-inter} and obtain that
\begin{align*}
    \E\left[\sum_{t\in\calI_k}\inner{\combine_{t}-e_{j_k}, \costhat_t}\right] \leq \frac{2\ln T}{\metaLR} + 4\metaLR d|\calI_k|+\order\left(\metaLR|\calI_k|d^{\frac{1}{q}}\sqrt{\frac{S}{T}}\right) + \O\left(\frac{\abs{\I_k}}{\metaLR T}\right).
\end{align*}
Summing the regret over all the intervals achieves the bound for $\metareg$:
\begin{align*}
     \E\left[\metareg\right] &=\E\left[\sum_{k=1}^S\sum_{t\in\calI_k}\inner{\sleep_{t}-e_{j_k}, \costhat_t}\right] \\
    & \leq \frac{2S\ln T}{\metaLR} + 4\metaLR dT+\order\left(d^{\frac{1}{q}}\sqrt{ST}\right)  + \O(1/\epsilon) \leq \Ot\left( d^{\frac{1}{p}}\sqrt{ST} \right),
\end{align*}
where the last inequality is because we choose $\metaLR = \min\left\{d^{-\frac{1}{p}}\sqrt{\frac{S}{T}},\frac{C^2}{2}, \frac{1}{16d^{\frac{2}{p}}}\right\}$.
\end{proof}

\subsection{Proof of~\pref{thm:main-alpha}}\label{appendix:proof-main-alpha}
Putting everything together, we are now ready to prove our main result (\pref{thm:main-alpha}) in the setting when the feasible domain is $\alpha$-strongly convex.
~\\

\begin{proof}
First, it is evident to check that the parameter choice satisfies the condition required in~\pref{lem:base-reg-alpha} and~\pref{lemma:meta-final-alpha}. Therefore, based on the regret decomposition in~\pref{eqn:decomposition-alpha}, upper bound of bias term in ~\pref{eqn:bias-term-bound-alpha}, upper bound of positive term~\pref{eqn:posterm-term-bound-alpha}, base regret upper bound in~\pref{lem:base-reg-alpha} and meta regret upper bound in~\pref{lemma:meta-final-alpha}, we have
\begin{align*}
    &\E[\RegS] = \mathbb{E}\left[\sum_{k=1}^S \sum_{t \in \I_k} \inner{x_t - \ucirc_k, \ell_t}\right] \\
    &\leq \frac{2}{\Correct} + \sum_{k=1}^S\frac{\log(1/\gamma)}{\baseLR_{j_k}} + \left(\frac{2d^{\frac{2}{p}}\baseLR }{(1-\beta)}\cdot\frac{5\alpha+4}{\alpha}-\frac{1}{\Correct T(1-\beta)}\right)\sum_{t\in \calI_k}\frac{1-\|a_t^{(j_k)}\|_\calX}{1-\sum_{i=1}^t\sleep_{t,i}\|a_t^{(i)}\|_\calX}\\
    &\qquad + (\beta+\gamma)T + \otil\left(d^{\frac{1}{p}}\sqrt{ST}\right).
\end{align*}
Importantly, note that the coefficient of the third term is actually zero. Indeed, due to the parameter configurations that $\gamma=4Cd^{\frac{1}{q}}\sqrt{\frac{S}{T}}$, $\baseLR =Cd^{-\frac{1}{p}}\sqrt{\frac{S}{T}}$, $\Correct=\frac{Cd^{-\frac{1}{q}}}{\sqrt{ST}}$, $\beta=8d^{\frac{2}{p}}\metaLR$, $\metaLR=\min\left\{\frac{1}{16d^{\frac{2}{p}}}, \frac{C^2}{2}, d^{-\frac{1}{p}}\sqrt{\frac{S}{T}}\right\}$ and $C=\sqrt{\frac{\alpha}{10\alpha+8}}$, we can verify that
\begin{align*}
    \frac{2d\baseLR (5\alpha+4)}{\alpha}-\frac{1}{\Correct T} = 0.
\end{align*}
Then we can achieve $\E[\RegS] \leq \otil\left(d^{\frac{1}{p}}\sqrt{ST}\right)$ and complete the proof.
\end{proof}
\section{Omitted Details for~\pref{sec: unconstrained}}
\label{appendix:unsconstrained}

In this section, we consider the switching regret of unconstrained linear bandits.

\subsection{Pseudocode of Black-Box Reduction}
\label{appendix:reduction-code}
We show the pseudocode of our black-box reduction in~\pref{alg:unconstrained-linear-bandits}.
\begin{algorithm}[!h]
   \caption{Comparator-adaptive algorithm for unconstrained linear bandits}
   \label{alg:unconstrained-linear-bandits}
    \textbf{Input:} subroutine $\A_{\V}$ (unconstrained OCO algorithm),  subroutine $\A_{\Z}$ (constrained linear bandits algorithm), $\Z = \{z \mid \norm{z}_2 \leq 1\}$.
    
   \For{$t=1$ {\bfseries to} $T$}{
        
        Receive the direction $z_t \in \Z$ from subroutine $\A_{\Z}$.    

        Receive the magnitude $v_t \in \R$ from subroutine $\A_{\V}$.

        Submit $x_t = z_t \cdot v_t $ and receive and observe the loss $\ell_t^\T x_t$.

        Send $\ell_t^\T z_t = \ell_t^\T x_t/v_t$ as the feedback for subroutine $\A_{\Z}$.

        Construct linear function $f_t(v) \triangleq v \cdot \ell_t^\T z_t$ as the feedback for subroutine $\A_{\V}$.      
      }
\end{algorithm}
\subsection{Proof of~\pref{lemma:unconstrained-decompose}}
\label{appendix:proof-unconstrained-reduction}
\begin{proof} Our switching regret decomposition for linear bandits is inspired by the existing black-box reduction designed for the full information online convex optimization~\citep{COLT'18:black-box-reduction} and static regret of linear bandits~\citep{NIPS'20:Dirk}. Indeed, the switching regret can be decomposed in the following way.
\begin{align*}
\Reg(u_1,\ldots,u_T) &= \sum_{t=1}^T \ell_t^\T x_t - \sum_{t=1}^T \ell_t^\T u_t\\
    &=  \sum_{k=1}^S\sum_{t\in\calI_k}\ell_t^\top x_t - \sum_{k=1}^S \sum_{t \in \I_k} \ell_t^\T \ucirc_k \\
    &=  \sum_{k=1}^S \sum_{t \in \I_k} \ell_t^\T (z_t\cdot v_t - \ucirc_k) \tag{$x_t=z_t\cdot v_t$}\\
    &=  \sum_{k=1}^S \left(\sum_{t \in \I_k} \inner{z_t,\ell_t}(v_t - \norm{\ucirc_k}_2) +  \norm{\ucirc_k}_2 \sum_{t \in \I_k} \inner{z_t - \frac{\ucirc_k}{\norm{\ucirc_k}_2},\ell_t}\right)\\
    &=  \sum_{k=1}^S \Reg_{\I_k}^{\mathcal{V}}(\norm{\ucirc_k}_2) + \sum_{k=1}^S \norm{\ucirc_k}_2 \cdot \Reg_{\I_k}^{\mathcal{Z}}\left( \frac{\ucirc_k}{\norm{\ucirc_k}_2} \right),
\end{align*}
which finishes the proof.
\end{proof}




\subsection{Algorithm for Unconstrained OCO with Switching Regret}
\label{appendix:OCO-algorithm}
In this section, we present the details of our proposed algorithm for unconstrained OCO with switching regret.

Under the unconstrained setup, the diameter of the feasible domain is $D = \infty$. However, as observed in~\citep[Appendix D.5]{COLT'21:impossible-tuning}, we can simply assume $\max_{k \in [S]} \norm{\ucirc_k}_2 \leq 2^T$. Otherwise, we will have $T \leq \log_2 (\max_{k \in [S]} \norm{\ucirc_k}_2)$, and by constraining the learning algorithm such that $\norm{v_t}_2 \leq 2^T$, we can obtain the following trivial upper bound for switching regret: $\Reg \leq \sum_{t=1}^T \norm{\nabla f_t(v_t)}_2 \norm{v_t - u_t}_2 \leq T (2^T + \max_{k \in [S]} \norm{\ucirc_k}_2) = \Ot(\max_{k \in [S]} \norm{\ucirc_k}_2)$, which is already adaptive to the comparators. Therefore, we can simply focus on the \emph{constrained} online learning with a maximum diameter $D= 2^T$. In addition, as mentioned earlier, we do not assume the knowledge of the number of switch $S$ in advance in this part. To this end, we propose a two-layer approach to simultaneously adapt to the unknown scales of the comparators and the unknown number of switch, which consists of a meta algorithm learning over a set of base learners. Below we specify the details.

\paragraph{Base algorithm.} The base algorithm tackles OCO problem with a given scale of feasible domain. The only requirement is as follows: given a constrained domain $\X \subseteq \R^d$ with diameter $D = \sup_{x \in \X} \norm{x}_2$, base algorithm running over $\X$ ensures an $\Ot(D \sqrt{\abs{\I}})$ static regret over any interval $\I \subseteq [T]$. Formally, we assume the base algorithm to satisfy the following requirement.
\begin{myRequirement}
\label{require:data-independent}
Consider the online convex optimization problem consisting a convex feasible domain $\X \subseteq \R^d$ and a sequence of convex loss functions $f_1,\ldots,f_T$, where $f_t: \X \mapsto \R$ and we assume $\mathbf{0} \in \X$ and $\norm{\nabla f_t(v)}_2 \leq 1$ for any $v \in \X$ and $t \in [T]$. An online algorithm $\A$ running over this problem returns the decision sequence $v_1,\ldots,v_T \in \X$. We require the algorithm $\A$ to ensure the following regret guarantee
\begin{equation}
    \label{eq:condition-base}
    \sum_{t \in \I} f_t(v_t) - \min_{u\in\X} \sum_{t \in \I} f_t(u) \leq \Ot\left( D \sqrt{\abs{\I}}\right)
\end{equation}
for any interval $\I \subseteq [T]$, where $D = \sup_{x \in \X} \norm{x}_2$ is the diameter of the feasible domain.
\end{myRequirement}
This requirement can be satisfied by recent OCO algorithms with \emph{interval regret} (or called \emph{strongly adaptive regret}) guarantee, such as Algorithm 1 of~\citet{ICML'15:Daniely-adaptive}, Algorithm 2 of~\citet{AISTATS'17:coin-betting-adaptive}, Theorem 6 of~\citet{ICML'20:Ashok}. We denote by $\mathfrak{B}$ any suitable base algorithm.

Since both the scale of comparators and the number of switch are unknown in advance, we maintain a set of base algorithm instances, defined as 
\begin{equation}
    \label{eq:base-learners-TM}
    \S = \Big\{ \B_{i,r}, \forall (i,r) \in [H] \times [R] ~ \big | ~ \B_{i,r} \leftarrow \mathfrak{B}(\X_i), \text{ with } \X_i = \{x \mid \norm{x}_2 \leq D_i = T^{-1} \cdot 2^{i-1}\} \Big\}.
\end{equation}
In above, $H = \lceil \log_2 T \rceil + T + 1$ and the index $i \in [H]$ maintain a grid to deal with uncertainty of unknown comparators' scale; $R = \lceil \log_2 T \rceil$ and the index $r \in [R]$ maintains a grid to handle uncertainty of unknown number of switch $S$. There are in total $N = H \cdot R$ base learners. For $i \in [H]$ and $r \in [R]$, the base learner $\B_{i,r}$ is an instantiation of the base algorithm whose feasible domain is $\X_i \subseteq \R^d$ with diameter $D_i$, and $v_{t,(i,r)}$ denotes her returned decision at round $t$. We stress that even if $S$ is known, the two-layer structure remains necessary due to the unknown comparators' scale.

\paragraph{Meta algorithm.} Then, a meta algorithm is used to combine all those base learners, and more importantly, the regret of meta algorithm should be adaptive to the individual loss scale of each base learner, such that the overall algorithm can achieve a comparator-adaptive switching regret. We achieve so by building upon the recent progress in the classic expert problem~\citep{COLT'21:impossible-tuning}. Our proposed algorithm is OMD with a multi-scale entropy regularizer and certain important correction terms. Specifically, let the weight vector produced by the meta algorithm be $w_t \in \Delta_{N}$, then the overall decision is $v_t = \sum_{i=1}^H \sum_{r=1}^R w_{t,(i,r)} v_{t,(i,r)}$, and the weight is updated by
\begin{equation}
    \label{eq:MsMwC-TM}
    w_{t+1} = \argmin_{w \in \Omega} \inner{w, \ell_t + a_t} + D_{\psi}(w,w_t),
\end{equation}
where $\Omega = \{w \mid w \in \Delta_{N} \text{ and } w_{t,(i,r)} \geq \frac{1}{{T^2 \cdot 2^{2i}}}, \forall i \in [H], r\in [R]\}$ is the clipped domain. Besides, the meta loss $\ell_t$, the correction term $a_t$,  and a certain regularizer $\psi$ are set as follows:
\begin{itemize}
    \item The regularizer $\psi: \Delta_{N} \mapsto \R$ is set as a \emph{weighted} negative-entropy regularizer defined as
    \begin{equation}
        \label{eq:weighted-regularizer}
        \psi(w) \triangleq \sum_{(i,r) \in [H] \times [R]} \frac{c_i}{\eta_r} w_{(i,r)} \ln w_{(i,r)} \mbox{ with } c_i = T^{-1} \cdot 2^{i-1} \mbox{ and } \eta_r = \frac{1}{32\cdot 2^r}.
    \end{equation}
    \item The feedback loss of meta algorithm $\ell_t \in \R^{N}$ is set as such to measure the quality of each base learner: $\ell_{t,(i,r)} \triangleq \innersmall{\nabla f_t(v_t), v_{t,(i,r)}}$ for any $(i,r) \in [H] \times [R]$.
    \item The correction term $a_t \in \R^{N}$ is set as: $a_{t,(i,r)} \triangleq 32 \frac{\eta_r}{c_i} \ell_{t,(i,r)}^2$ for any $(i,r) \in [H] \times [R]$, which is essential to ensure the meta regret compatible to the final comparator-adaptive bound. 
\end{itemize}

\begin{algorithm}[!t]
   \caption{Comparator-adaptive algorithm for unconstrained OCO}
   \label{alg:unconstrained-OCO}
   
    \textbf{Input:} base algorithm $\mathfrak{B}$. 

    \textbf{Define:} $H = \lceil \log_2 T \rceil + T + 1$ and $R = \lceil \log_2 T \rceil$.

    \textbf{Define:} clipped domain $\Omega = \{w \mid w \in \Delta_{N} \text{ and } w_{t,(i,r)} \geq \frac{1}{{T^2 \cdot 2^{2i}}}, \forall i \in [H], r\in [R]\}$.

    \textbf{Define:} weighted entropy regularizer $\psi(w) \triangleq \sum_{(i,r) \in [H] \times [R]} \frac{c_i}{\eta_r} w_{(i,r)} \ln w_{(i,r)}$ with $c_i = T^{-1} \cdot 2^{i-1}$ for $i \in [H]$ and $\eta_r = \frac{1}{32\cdot 2^r}$ for $r \in [R]$.
    
    \textbf{Initialization:} for $(i,r) \in [H]\times [R]$, initiate base algorithm $\B_{i,r} \leftarrow \mathfrak{B}(\X_i)$ with $\X_i = \{x \mid \norm{x}_2 \leq D_i\}$, which is an instance of $\mathfrak{B}$, and prior distribution $w_{1,(i,r)}\propto \eta_r^2/c_i^2$.
        
   \For{$t=1$ {\textbf{\textit{to}}} $T$}{
      Each base learner $\B_{(i,r)}$ returns the local decision $v_{t,(i,r)}$ for each $i \in [H]$ and $r \in [R]$.

      Make the final decision $v_t = \sum_{(i,r)\in [H] \times [R]} w_{t,(i,r)} v_{t,(i,r)}$ and receive feedback $\nabla f_t(v_t)$.

      Construct feedback loss $\ell_t \in \R^{N}$ and correction term $a_t \in \R^{N}$ for meta algorithm : $\ell_{t,j} \triangleq \inner{\nabla f_t(v_t), v_{t,j}}$, $a_{t,j} \triangleq 32 \frac{\eta_r}{c_i} \ell_{t,j}^2$, $\forall j = (i,r) \in [H] \times [R]$.

      Meta algorithm updates the weight by $w_{t+1} = \argmin_{w \in \Omega} \inner{w, \ell_t + a_t} + D_{\psi}(w,w_t)$.
      }
\end{algorithm}

The entire algorithm consists of meta algorithm specified above and base algorithm satisfying~\pref{require:data-independent}. We show the pseudocode in~\pref{alg:unconstrained-OCO}.

\subsection{Proof of~\pref{thm:unconstrained-OCO}}
\label{appendix:proof-unconstrained-OCO}
\begin{proof}
Consider the $k$-th interval $\I_k$. The regret within this interval can be decomposed as follows. 
\begin{align}
    \sum_{t \in \I_k} \Big(f_t(v_t) - f_t(\ucirc_k)\Big) & = \sum_{t \in \I_k} \Big(f_t(v_t) - f_t(v_{t,j})\Big) + \sum_{t \in \I_k} \Big(f_t(v_{t,j}) - f_t(\ucirc_k)\Big) \nonumber\\
    & \leq \sum_{t \in \I_k} \inner{\nabla f_t(v_t), v_t - v_{t,j}} + \sum_{t \in \I_k} \Big(f_t(v_{t,j}) - f_t(\ucirc_k)\Big)\nonumber\\
    & = \underbrace{\sum_{t \in \I_k} \inner{w_t - e_j, \ell_t}}_{\meta} + \underbrace{\sum_{t \in \I_k} \Big(f_t(v_{t,j}) - f_t(\ucirc_k)\Big)}_{\base},\label{eq:unconstrained-decompose-appendix}
\end{align}
where the final equality is because $\ell_{t,j}=\inner{\nabla f_t(v_t), v_{t,j}}$ and $v_t=\sum_{j'\in [H]\times[R]}w_{t,j'}v_{t,j'}$. Note that the decomposition holds for any index $j = (i,r) \in [H]\times[R]$.

We first consider the case when $\norm{\ucirc_k}_2 \geq \frac{1}{T}$ and will deal with the other case (when $\norm{\ucirc_k}_2 < \frac{1}{T}$) at the end of the proof. Under such a circumstance, we can choose $(i,r)=(i_k^*,r_k^*)$ such that
\begin{equation}\label{eqn:choice-of-ir}
    \begin{split}
    c_{i_k^*} &= T^{-1} \cdot 2^{i_k^* - 1} \leq \|\ucirc_k\|_2 \leq T^{-1} \cdot 2^{i_k^*} = c_{i_k^* + 1} , \mbox{ and } \\
    \eta_{r_k^*} &= \frac{1}{32\cdot 2^{r_k^*}}\leq \frac{1}{32 \sqrt{|\calI_k|}}\leq  \frac{1}{32\cdot 2^{r_k^*-1}} = \eta_{r_k^* - 1},     
    \end{split}    
\end{equation}
which is valid as $i\in [H] = [\lceil\log_2 T \rceil + T + 1]$ and $r\in [R]=[\lceil \log_2 T\rceil]$. We now give the upper bounds for \meta and \base~respectively.

\paragraph{\base.} Based on the assumption of base algorithm, we have base learner $\calB_{j_k^*}$ satisfying
\begin{equation}
    \label{eq:unconstrained-base-bound}
    \sum_{t \in \I_k} \Big(f_t(v_{t,j_k^*}) - f_t(\ucirc_k)\Big) \leq \Ot\left(c_{i_k^*} \sqrt{\abs{\I_k}}\right) \leq \Ot\left(\norm{\ucirc_k}_2 \sqrt{\abs{\I_k}}\right),
\end{equation}
where we use the interval regret guarantee of base algorithm (see~\pref{require:data-independent}) and also use the fact that the diameter of the feasible domain for base learner $\B_{j_k^*}$ is $2^{i_k^*}$ as $\X_{i_k^*} = \{x \mid \norm{x}_2 \leq D_{i_k^*}\}$ and $D_{i_k^*} = c_{i_k^*}$. The last inequality holds by the choice of $i_k^*$ shown in~\pref{eqn:choice-of-ir}.

\paragraph{\meta.} The meta algorithm is essentially online mirror descent with a weighted entropy regularizer. Based on Lemma 1 in~\citep{COLT'21:impossible-tuning}, if for all $i\in [H]$ and $r\in[R]$, $32 \frac{\eta_r}{c_i} \abs{\ell_{t,(i,r)}} \leq 1$, then we have for any $q\in\Omega$, 
\begin{align}\label{eq:meta-regret-decompose-proto}
    \sum_{t \in \I_k} \inner{w_t - q, \ell_t} &\leq \sum_{t \in \I_k} \Big( D_\psi(q, w_t) - D_\psi(q, w_{t+1})\Big) + 32 \sum_{t \in \I_k} \sum_{i\in[H]}\sum_{r\in[R]} \frac{\eta_r}{c_i} q_{(i,r)} \ell_{t,(i,r)}^2.
\end{align}
Note that this is a simplified version of Lemma 1 in~\citep{COLT'21:impossible-tuning} for the interval regret, which employs a fixed learning rate for each action and does not include the optimism in the algorithm. We present the simplified lemma in~\pref{lemma:impossible-tuning-lemma} in~\pref{appendix:OMD} for completeness.

To this end, we first verify the condition of $32 \frac{\eta_r}{c_i} \abs{\ell_{t,(i,r)}} \leq 1$ for all $i\in [H]$, $r\in [R]$. In fact, 
\begin{align*}
    \frac{32\eta_r}{c_i}\left|\ell_{t,(i,r)}\right| \leq \frac{1}{c_i \cdot 2^r} \|\nabla f_t(v_t)\|_2 \cdot \|v_{t,i}\|_2 \leq \frac{1}{2^r} \leq 1,
\end{align*}
where the first inequality is by the definition of $\eta_r = \frac{1}{32 \cdot 2^r}$ and the construction of meta loss $\ell_{t,(i,r)} = \innersmall{\nabla f_t(v_t), v_{t,(i,r)}}$, the second inequality is because $\|v_{t,i}\|_2\leq c_i$ and $\|\nabla f_t(v)\|_2\leq 1$ for all $v \in \R^d$, and the third inequality holds as $r\geq 1$. 

Then we define $\bar{e}_{j_k^*} \triangleq \bar{e}_{(i_k^*,r_k^*)} = \left(1 - \frac{R \cdot a_0}{T^2} \right)e_{(i_k^*,r_k^*)} + \sum_{(i,r) \in [H] \times [R]} \frac{1}{T^2\cdot 2^{2i}} e_{(i,r)}$, where $a_0 =\sum_{i=1}^H \frac{1}{2^{2i}} = \frac{1}{3}(1-\frac{1}{4^H})$ is a constant which guarantees $\bar{e}_{j_k^*}\in \Omega$. Using~\pref{eq:meta-regret-decompose-proto} with $q=\bar{e}_{j_k^*}$, we have
\begin{align*}
    \sum_{t \in \I_k} \inner{w_t - \bar{e}_{j_k^*}, \ell_t} \leq & \sum_{t \in \I_k} \left( D_\psi(\bar{e}_{j_k^*}, w_t) - D_\psi(\bar{e}_{j_k^*}, w_{t+1})\right) + 32 \sum_{t \in \I_k} \sum_{i\in[H]}\sum_{r\in[R]} \frac{\eta_r}{c_i} \bar{e}_{j_k^*, (i,r)} \ell_{t,(i,r)}^2\\
    = & \Big( D_\psi(\bar{e}_{j_k^*}, w_{s_k}) - D_\psi(\bar{e}_{j_k^*}, w_{s_{k+1}})\Big) + 32 \sum_{t \in \I_k} \sum_{i\in[H]}\sum_{r\in[R]} \frac{\eta_r}{c_i} \bar{e}_{j_k^*, (i,r)} \ell_{t,(i,r)}^2,
\end{align*}
where $s_k$ denotes the starting index of the interval $\I_k$ and $s_{k+1}$ is defined as $T+1$ if $\I_k$ is the last interval. The two terms on the right-hand side are called bias term and stability term respectively. In the following, we will give their upper bound individually. 

For the bias term, we have 
\begin{align}
    & D_\psi(\bar{e}_{j_k^*}, w_{s_k}) - D_\psi(\bar{e}_{j_k^*}, w_{s_{k+1}}) \nonumber \\
    & = \sum_{i\in [H]}\sum_{r\in[R]} \frac{c_{i}}{\eta_{r}} \left( \bar{e}_{j_k^*,(i,r)} \ln \frac{w_{s_k,(i,r)}}{w_{s_{k+1},(i,r)}} + w_{s_k,(i,r)} - w_{s_{k+1},(i,r)} \right)  \tag{by definition in~\pref{eq:weighted-regularizer}} \\
    & = \sum_{i\in [H]}\sum_{r\in[R]} \frac{c_{i}}{\eta_{r}} \left( \bar{e}_{j_k^*,(i,r)} \ln \frac{w_{s_k,(i,r)}}{w_{s_{k+1},(i,r)}}\right) + \sum_{i\in [H]}\sum_{r\in[R]} \frac{c_{i}}{\eta_{r}}\Big(w_{s_k,(i,r)} - w_{s_{k+1},(i,r)}\Big) \nonumber \\
    & \leq \frac{c_{i_k^*}}{\eta_{r_k^*}} \ln\left(T^2 \cdot 2^{2i_k^*}\right) + \sum_{(i,r)\neq (i_k^*, r_k^*)} \frac{1}{T^2 \cdot 2^{2i}} \frac{c_{i}}{\eta_{r}} \ln\big(T^2 \cdot 2^{2i}\big) + \sum_{i\in [H]}\sum_{r\in[R]} \frac{c_{i}}{\eta_{r}}\Big(w_{s_k,(i,r)} - w_{s_{k+1},(i,r)}\Big) \tag{$w_{s_k},w_{s_{k+1}}\in\Omega$} \\
    & \leq \frac{c_{i_k^*}}{\eta_{r_k^*}} \ln\left(4T^4 \cdot c_{i_k^*}^2\right) + \sum_{(i,r)\neq (i_k^*, r_k^*)} \frac{2 \ln T + (4\ln 2) \cdot T}{32 \cdot T^3 \cdot 2^{i+r+1}} + \sum_{i\in [H]}\sum_{r\in[R]} \frac{c_{i}}{\eta_{r}}\Big(w_{s_k,(i,r)} - w_{s_{k+1},(i,r)}\Big)\nonumber \\
    & = \Ot\left(\frac{c_{i_k^*}}{\eta_{r_k^*}} \ln c_{i_k^*} \right) + \Ot\left(\frac{1}{T^2}\right) + \sum_{i\in [H]}\sum_{r\in[R]} \frac{c_{i}}{\eta_{r}}\Big(w_{s_k,(i,r)} - w_{s_{k+1},(i,r)}\Big).\label{eq:meta-bias-term}
\end{align}
Moreover, for the stability term, we have
\begin{align}
    & 32\sum_{t \in \I_k} \sum_{i\in[H]}\sum_{r\in[R]} \frac{\eta_r}{c_i} \bar{e}_{j_k^*, (i,r)} \ell_{t,(i,r)}^2 \nonumber\\
    & = 32\sum_{t \in \I_k} \frac{\eta_{r_k^*}}{c_{i_k^*}} \left(1 - \frac{R \cdot a_0}{T^2} + \frac{1}{T^2 \cdot 2^{2 i_k^*}}\right) \ell_{t,(i_k^*,r_k^*)}^2 + 32 \sum_{t \in \I_k} \sum_{(i,r)\neq (i_k^*,r_k^*)} \frac{\eta_r}{c_i} \bar{e}_{j_k^*,(i,r)} \ell_{t,(i,r)}^2 \nonumber\\
    & \leq 32\sum_{t \in \I_k} \frac{\eta_{r_k^*}}{c_{i_k^*}} \ell_{t,(i_k^*,r_k^*)}^2 + 32 \sum_{t \in \I_k} \sum_{i \in [H]} \sum_{r \in [R]} \frac{\eta_r c_i}{T^2 \cdot 2^{2i}} \nonumber\\
    & \leq \O\left( \eta_{r_k^*}c_{i_k^*} \abs{\I_k} \right) + \sum_{t \in \I_k} \sum_{i \in [H]} \sum_{r \in [R]} \frac{1}{T^3 \cdot 2^{i+r+1}} \nonumber\\
    & = \O\left( \eta_{r_k^*}c_{i_k^*} \abs{\I_k} \right) + \O\left( \frac{1}{T^2}\right) \label{eq:meta-stability-term}
\end{align}
where the two inequalities hold as $\ell_{t,(i,r)}^2 = \innersmall{\nabla f_t(v_t), v_{t,(i,r)}}^2 \leq \norm{\nabla f_t(v_t)}_2^2 \norm{v_{t,(i,r)}}_2^2 \leq c_i^2$. Combining the upper bounds of bias term in~\pref{eq:meta-bias-term} and stability term in~\pref{eq:meta-stability-term}, we get
\begin{equation*}
    \sum_{t \in \I_k} \inner{w_t - \bar{e}_{j_k^*}, \ell_t} \leq \Ot\left( \eta_{r_k^*}c_{i_k^*} \abs{\I_k} + \frac{c_{i_k^*}}{\eta_{r_k^*}} \ln c_{i_k^*} \right) + \Ot\left(\frac{1}{T^2}\right) + \sum_{i\in [H]}\sum_{r\in[R]} \frac{c_{i}}{\eta_{r}}\Big(w_{s_k,(i,r)} - w_{s_{k+1},(i,r)}\Big).
\end{equation*}
Further, notice that
\begin{align}
    \label{eq:meta-regret-additional-term}
    \sum_{t\in\calI_k}\inner{\bar{e}_{j_k^*}-e_{j_k^*}, \ell_t} \leq\sum_{t\in\calI_k} \sum_{i\in[H]}\sum_{r\in[R]}\frac{1}{T^2\cdot 2^{2i}}\cdot\ell_{t,(i,r)}\leq \sum_{t\in\calI_k} \sum_{i\in[H]}\sum_{r\in[R]} \frac{1}{T^3\cdot 2^{i+1}} \leq \Ot\left(\frac{1}{T^2}\right),
\end{align}
and we thus obtain the overall meta regret upper bound in the interval $\I_k$:
\begin{align}
    & \sum_{t \in \I_k} \inner{w_t - e_{j_k^*}, \ell_t} = \sum_{t \in \I_k} \inner{w_t - \bar{e}_{j_k^*}, \ell_t} + \sum_{t \in \I_k} \inner{\bar{e}_{j_k^*}-e_{j_k^*}, \ell_t} \nonumber\\
    & \leq \Ot\left( \eta_{r_k^*}c_{i_k^*} \abs{\I_k} + \frac{c_{i_k^*}}{\eta_{r_k^*}} \ln c_{i_k^*} \right) + \Ot\left(\frac{1}{T^2}\right) + \sum_{i\in [H]}\sum_{r\in[R]} \frac{c_{i}}{\eta_{r}}\Big(w_{s_k,(i,r)} - w_{s_{k+1},(i,r)}\Big) \nonumber\\
    & = \Ot\left(\norm{\ucirc_k}_2 \sqrt{\abs{\I_k}}\right) + \Ot\left(\frac{1}{T^2}\right) + \sum_{i\in [H]}\sum_{r\in[R]} \frac{c_{i}}{\eta_{r}}\Big(w_{s_k,(i,r)} - w_{s_{k+1},(i,r)}\Big), \label{eq:meta-regret-last}
\end{align}
where the last inequality is because of the choice of $i_k^*$ and $r_k^*$ defined in~\pref{eqn:choice-of-ir}. The $\Ot(\cdot)$-notation omits logarithmic dependence on $T$ and comparator norm $\norm{\ucirc_k}_2$. 

\paragraph{\textsc{Overall Regret}.} The overall regret is obtained by combining the base regret and meta regret and further summing over all the intervals $\I_1,\ldots,\I_S$. Indeed, we have the following total meta-regret by taking summation over intervals on~\pref{eq:meta-regret-last},
\begin{align}
    & \sum_{k=1}^S \sum_{t \in \I_k} \inner{w_t - e_{j_k^*}, \ell_t} \nonumber\\
    & \leq \Ot\left( \sum_{k=1}^S \norm{\ucirc_k}_2 \sqrt{\abs{\I_k}}\right) + \Ot\left(\frac{S}{T^2}\right) + \sum_{k=1}^S \sum_{i\in [H]}\sum_{r\in[R]} \frac{c_{i}}{\eta_{r}}\Big(w_{s_k,(i,r)} - w_{s_{k+1},(i,r)}\Big) \nonumber\\
    & \leq \Ot\left( \sum_{k=1}^S \norm{\ucirc_k}_2 \sqrt{\abs{\I_k}}\right) + \sum_{i\in [H]}\sum_{r\in[R]} \frac{c_{i}}{\eta_{r}} w_{1,(i,r)}\nonumber\\
    & = \Ot\left( \sum_{k=1}^S \norm{\ucirc_k}_2 \sqrt{\abs{\I_k}}\right), \label{eq:unconstrained-meta-bound}
\end{align}
where the final equality is because we choose $w_{1,(i,r)}\propto \frac{\eta_r^2}{c_i^2}$ for all $(i,r)\in[H]\times[R]$. Indeed, such a setting of prior distribution ensures that
\begin{align*}
    &\sum_{i\in [H]}\sum_{r\in [R]}\frac{c_i}{\eta_r}\cdot w_{1,(i,r)} = \frac{\sum_{i\in [H]}\sum_{r\in [R]}\frac{\eta_r}{c_i}}{\sum_{i\in [H]}\sum_{r\in [R]}\frac{\eta_r^2}{c_i^2}} = \frac{16}{T} \cdot \frac{\sum_{i\in [H]}\sum_{r\in [R]}\frac{1}{2^{i+r}}}{\sum_{i\in [H]}\sum_{r\in [R]}\frac{1}{2^{2i+2r}}} \\
    &= \frac{144}{T} \cdot \frac{\left(1-\big( \frac{1}{2} \big)^R\right)\left(1-\big( \frac{1}{2} \big)^H\right)}{\left(1-\big( \frac{1}{4} \big)^R\right)\left(1-\big( \frac{1}{4} \big)^H\right)} = \frac{144}{T} \cdot \frac{1}{\left(1+\big( \frac{1}{2} \big)^R\right)\left(1+\big( \frac{1}{2} \big)^H\right)} \leq \O\left( \frac{1}{T}\right),
\end{align*}
and also guarantees that $w_{1}\in\Omega$ since for any $(i,r)\in[H]\times[R]$,
\begin{align*}
    w_{1,(i,r)} & = \frac{\frac{\eta_r^2}{c_i^2}}{\sum_{i'\in[H]}\sum_{r'\in[R]}\frac{\eta_{r'}^2}{c_{i'}^2}} = \frac{\frac{1}{2^{2i+2r}}}{\sum_{i'\in[H]}\sum_{r'\in[R]}\frac{1}{2^{2i'+2r'}}}\\
    & \geq \frac{1}{T^2\cdot 2^{2i}}\cdot \frac{1}{\frac{1}{9}\left(1-\big(\frac{1}{4}\big)^R\right)\left(1-\big(\frac{1}{4}\big)^H\right)}\geq \frac{1}{T^2\cdot 2^{2i}},
\end{align*}
where the first inequality holds in that we have $2^{r} \leq T$ for $r \in [R]$.

Substituting the meta regret upper bound~\pref{eq:unconstrained-meta-bound} and the base regret upper bound~\pref{eq:unconstrained-base-bound} into the regret decomposition~\pref{eq:unconstrained-decompose-appendix} obtains that
\begin{align}
\label{eq:result-proof}
    \sum_{k=1}^S \sum_{t \in \I_k} \Big(f_t(v_t) - f_t(\ucirc_k)\Big) \leq \Ot\left( \sum_{k=1}^S \norm{\ucirc_k}_2 \sqrt{\abs{\I_k}}\right) \leq \otil\left( \max_{k \in [S]} \norm{\ucirc_k}_2\cdot \sqrt{ST}\right),
\end{align}
which finishes the proof for the case when $\norm{\ucirc_k}_2 \geq \frac{1}{T}$ holds for every $k \in [S]$. 

We now consider the case when the condition is violated. Suppose for some $k \in [S]$, it holds that $\norm{\ucirc_k}_2 < \frac{1}{T}$. Then, we pick any $\ucirc_k' \in \R^d$ such that $\norm{\ucirc_k'}_2 = \frac{1}{T}$, and obtain that
\begin{align*}
    \sum_{t\in\I_k} \Big(f_t(v_t) - f_t(\ucirc_k)\Big) &= \sum_{t\in\I_k} \Big(f_t(v_t) - f_t(\ucirc_k')\Big) + \sum_{t\in\I_k} \Big(f_t(\ucirc_k') - f_t(\ucirc_k)\Big)\\
     &=  \sum_{t\in\I_k} \Big(f_t(v_t) - f_t(\ucirc_k')\Big) + \sum_{t\in\I_k} \norm{\nabla f_t(\ucirc_k')}_2 \norm{\ucirc_k' - \ucirc_k}_2\\
     &\leq \sum_{t\in\I_k} \Big(f_t(v_t) - f_t(\ucirc_k')\Big) + \O\left(\frac{\abs{\I_k}}{T}\right).
\end{align*}
Clearly, the last additional term will not be the issue even after summation over $S$ intervals. Moreover, notice that now the comparator $\ucirc_k'$ satisfies the condition of $\norm{\ucirc_k'}_2 \geq \frac{1}{T}$, we can still use the earlier results including the base regret bound in~\pref{eq:unconstrained-base-bound} and meta regret bound in~\pref{eq:meta-regret-last}. Thus, we can guarantee the same regret bound as~\pref{eq:result-proof} under this scenario.

Hence, we finish the proof for the overall theorem. We finally remark that our algorithm for unconstrained OCO actually does not require the knowledge of $S$ ahead of time.
\end{proof}

\subsection{Data-dependent Switching Regret of Unconstrained Online Convex Optimization}
\label{appendix:data-dependent-OCO}
In this subsection, we further consider achieving data-dependent switching regret bound for unconstrained online convex optimization. 

In \pref{appendix:OCO-algorithm}, we require the base algorithm to achieve an $\Ot(D\sqrt{\abs{\I}})$ interval regret for any interval $\I \subseteq [T]$, where $D$ is the diameter of the feasible domain. See~\pref{require:data-independent} for more details. To achieve a data-dependent switching regret for unconstrained OCO, we require a stronger regret for the base algorithm. 

\begin{myRequirement}
\label{require:data-dependent}
Consider the online convex optimization problem consisting a convex feasible domain $\X \subseteq \R^d$ and a sequence of convex loss functions $f_1,\ldots,f_T$, where $f_t: \X \mapsto \R$ and we assume $\mathbf{0} \in \X$ and $\norm{\nabla f_t(v)}_2 \leq 1$ for any $v \in \X$ and $t \in [T]$. An online algorithm $\A$ running over this problem returns the decision sequence $v_1,\ldots,v_T \in \X$. We require the algorithm $\A$ to ensure the following regret guarantee
\begin{equation}
    \label{eq:condition-base-data-dependent}
    \sum_{t \in \I} f_t(v_t) - \min_{u\in\X} \sum_{t \in \I} f_t(u) \leq \Ot\left( D \sqrt{\sum_{t\in\I} \norm{\nabla f_t(v_t)}_2^2}\right)
\end{equation}
for any interval $\I \subseteq [T]$, where $D = \sup_{x \in \X} \norm{x}_2$ is the diameter of the feasible domain.
\end{myRequirement}
This requirement can be satisfied by recent OCO algorithm with \emph{data-dependent interval regret} guarantee, such as Algorithm 2 of~\citet{ICML19:Zhang-Adaptive-Smooth} and Theorem 6 of~\citet{ICML'20:Ashok}. 

Using the new base algorithm and the same meta algorithm as~\pref{appendix:OCO-algorithm}, the overall algorithm can ensure a data-dependent comparator-adaptive switching regret.
\begin{theorem}
\label{thm:unconstrained-OCO-data-dependent}
\pref{alg:unconstrained-OCO} with a base algorithm satisfying~\pref{require:data-dependent} guarantees that for any $S$, any partition $\I_1, \ldots, \I_S$ of $[T]$, and any comparator sequence $\ucirc_1,\ldots,\ucirc_S \in \R^d$, we have
\begin{equation}
    \label{eq:unconstrained-OCO-data-dependent}
    \begin{split}
    \sum_{k=1}^S \left(\sum_{t \in\I_k} f_t(v_t) -  \sum_{t \in\I_k} f_t(\ucirc_k) \right) & \leq \otil\left( \sum_{k=1}^S \norm{\ucirc_k}_2\sqrt{\sum_{t \in \I_k} \norm{\nabla f_t(v_t)}_2^2}  \right) \\
    & \leq \otil\left( \max_{k \in [S]} \norm{\ucirc_k}_2\cdot \sqrt{S \sum_{t=1}^T \norm{\nabla f_t(v_t)}_2^2}\right).    
    \end{split}
\end{equation}
Notably, the algorithm does not require the prior knowledge of the number of switch $S$ as the input.
\end{theorem}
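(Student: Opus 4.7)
The proof will track that of~\pref{thm:unconstrained-OCO} with the same meta--base decomposition and the same choice of $(i_k^*, r_k^*)$, except that the $r$-grid will now be used to adapt to the data-dependent quantity $V_k \triangleq \sum_{t\in\I_k} \norm{\nabla f_t(v_t)}_2^2$ in place of $\abs{\I_k}$. First I would linearize on each interval via
\[
\sum_{t\in\I_k}\bigl(f_t(v_t) - f_t(\ucirc_k)\bigr) \leq \sum_{t\in\I_k}\inner{\nabla f_t(v_t), v_t - v_{t,j_k^*}} + \sum_{t\in\I_k}\inner{\nabla f_t(v_t), v_{t,j_k^*} - \ucirc_k},
\]
where $j_k^* = (i_k^*, r_k^*)$. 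Since every base learner is fed the linearized loss $\hat{f}_t(v) \triangleq \inner{\nabla f_t(v_t), v}$ (whose gradient at any point is $\nabla f_t(v_t)$), the second term is exactly the linearized interval regret of $\B_{j_k^*}$ on $\I_k$, so by~\pref{require:data-dependent} it is at most $\Ot\!\bigl(c_{i_k^*}\sqrt{V_k}\bigr)$. Combined with the usual choice $c_{i_k^*}\leq \norm{\ucirc_k}_2\leq c_{i_k^*+1}$, the base regret on $\I_k$ is $\Ot(\norm{\ucirc_k}_2\sqrt{V_k})$.

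For the meta regret I would apply~\pref{lemma:impossible-tuning-lemma} verbatim, keeping $c_i = T^{-1}\cdot 2^{i-1}$ and $\eta_r = 1/(32\cdot 2^r)$, but re-picking $r_k^*\in[R]$ so that $\eta_{r_k^*} \leq 1/\bigl(32\sqrt{\max\{V_k,1\}}\bigr) \leq \eta_{r_k^*-1}$; such an $r_k^*$ exists because $V_k\leq \abs{\I_k}\leq T$ and $R=\lceil\log_2 T\rceil$. The bias analysis is identical to the data-independent case and contributes $\Ot\!\bigl((c_{i_k^*}/\eta_{r_k^*})\ln c_{i_k^*}\bigr)$. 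The only real change is in the stability term: using Cauchy--Schwarz together with $\norm{v_{t,(i,r)}}_2 \leq c_i$ gives the sharper pointwise bound $\ell_{t,j_k^*}^2 \leq c_{i_k^*}^2\,\norm{\nabla f_t(v_t)}_2^2$, which upgrades stability from $\O(\eta_{r_k^*}c_{i_k^*}\abs{\I_k})$ to $\O(\eta_{r_k^*}c_{i_k^*}V_k)$. Balancing bias and stability with the chosen $\eta_{r_k^*}$ yields meta regret $\Ot\!\bigl(\norm{\ucirc_k}_2\sqrt{V_k}\bigr)$ on $\I_k$.

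Summing base plus meta over $k\in[S]$ and reusing the telescoping argument from the proof of~\pref{thm:unconstrained-OCO}---where the sum of initial Bregman divergences is absorbed into an $\Ot(1/T)$ term thanks to the prior $w_{1,(i,r)}\propto \eta_r^2/c_i^2$---yields the first inequality in~\eqref{eq:unconstrained-OCO-data-dependent}, and the second follows from Cauchy--Schwarz on the outer sum over intervals. The edge cases $\norm{\ucirc_k}_2<1/T$ and $V_k<1$ are handled exactly as in the proof of~\pref{thm:unconstrained-OCO} (perturb the comparator to norm $1/T$; the extra $\O(\abs{\I_k}/T)$ term is negligible). The main technical checkpoint is simply that the correction term still satisfies the condition $32\eta_r\abs{\ell_{t,(i,r)}}/c_i\leq 1$ required by~\pref{lemma:impossible-tuning-lemma}; this is automatic since $\eta_r\leq 1/64$ and $\abs{\ell_{t,(i,r)}}/c_i\leq \norm{\nabla f_t(v_t)}_2\leq 1$. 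No other ingredient of the algorithm or the analysis needs to change, so in particular $S$ remains unknown to the learner.
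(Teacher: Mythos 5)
Your proposal is correct and follows essentially the same route as the paper's proof: the same meta--base decomposition, the same exponential grid over $(i,r)$, the same re-indexing so that $\eta_{r_k^*}$ targets $\sqrt{V_k}$ rather than $\sqrt{|\I_k|}$, the same pointwise bound $\ell_{t,(i,r)}^2\leq c_i^2\,\norm{\nabla f_t(v_t)}_2^2$ to upgrade the stability term, the same telescoping of Bregman divergences absorbed by the prior $w_{1,(i,r)}\propto \eta_r^2/c_i^2$, and the same $\norm{\ucirc_k}_2<1/T$ edge case. The only cosmetic difference is that you explicitly write the base regret in linearized form $\sum_{t\in\I_k}\inner{\nabla f_t(v_t), v_{t,j_k^*}-\ucirc_k}$ before invoking \pref{require:data-dependent}, whereas the paper writes $\sum_{t\in\I_k}(f_t(v_{t,j_k^*})-f_t(\ucirc_k))$; since each base learner only sees $\nabla f_t(v_t)$ and hence runs on the linearized surrogate, your phrasing is the more precise reading, though both lead to the same bound.
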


\begin{proof}
The argument follows the proof of~\pref{appendix:proof-unconstrained-OCO}. Similar to~\pref{eq:unconstrained-decompose-appendix}, the regret within the interval can be decomposed into meta-regret and base-regret:
\begin{align}
    \sum_{t \in \I_k} \Big(f_t(v_t) - f_t(\ucirc_k)\Big) \leq \underbrace{\sum_{t \in \I_k} \inner{w_t - e_j, \ell_t}}_{\meta} + \underbrace{\sum_{t \in \I_k} \Big(f_t(v_{t,j}) - f_t(\ucirc_k)\Big)}_{\base},
\end{align}
which holds for any index $j = (i,r) \in [H]\times[R]$. 

We first the case when $\norm{\ucirc_k}_2 \geq \frac{1}{T}$ and will deal with the other case (when $\norm{\ucirc_k}_2 < \frac{1}{T}$) at the end of the proof. Under such a circumstance, we can choose $(i,r)=(i_k^*,r_k^*)$ such that
\begin{equation}
\label{eqn:choice-of-ir-data-dependent}
    \begin{split}
    c_{i_k^*} &= T^{-1} \cdot 2^{i_k^* - 1} \leq \|\ucirc_k\|_2 \leq T^{-1} \cdot 2^{i_k^*} = c_{i_k^* + 1} , \mbox{ and } \\
    \eta_{r_k^*} &= \frac{1}{32\cdot 2^{r_k^*}}\leq \frac{1}{32 \sqrt{\sum_{t \in \I_k} \norm{\nabla f_t(v_t)}_2^2}}\leq  \frac{1}{32\cdot 2^{r_k^*-1}} = \eta_{r_k^* - 1},     
    \end{split}    
\end{equation}
which is valid as $i\in [H] = [\lceil \log_2 T \rceil + T + 1]$ and $r\in [R]=[\lceil \log_2 T\rceil]$. We now give the upper bounds for \meta and \base~respectively.

\paragraph{\base.} Based on the assumption of base algorithm, we have base learner $\calB_{j_k^*}$ satisfies
\begin{equation}
    \label{eq:unconstrained-base-bound-data}
    \sum_{t \in \I_k} \Big(f_t(v_{t,j_k^*}) - f_t(\ucirc_k)\Big) \leq \Ot\left(2^{i_k^*} \sqrt{\sum_{t \in \I_k} \norm{\nabla f_t(v_t)}_2^2}\right) \leq \Ot\left(\norm{\ucirc_k}_2 \sqrt{\sum_{t \in \I_k} \norm{\nabla f_t(v_t)}_2^2}\right),
\end{equation}
where we use the interval regret guarantee of base algorithm (see~\pref{require:data-dependent}) and also use the fact that the diameter of the feasible domain for base learner $\B_{j_k^*}$ is $2^{i_k^*}$ as $\X_{i_k^*} = \{x \mid \norm{x}_2 \leq D_{i_k^*}\}$ and $D_{i_k^*} = c_{i_k^*}$. The last inequality holds by the choice of $i_k^*$ shown in~\pref{eqn:choice-of-ir-data-dependent}.

\paragraph{\meta.} Note that the meta algorithm remains the same, so we will only improve the analysis to show that the meta algorithm can also enjoy a data-dependent guarantee. The bias term will not be affected, which is the same as the data-independent one presented in~\pref{eq:meta-bias-term}, and the main modification will be conducted on the stability term. Indeed, continuing the analysis of the stability term exhibited in~\pref{eq:meta-stability-term}, we have
\begin{align}
    & 32\sum_{t \in \I_k} \sum_{i\in[H]}\sum_{r\in[R]} \frac{\eta_r}{c_i} \bar{e}_{j_k^*, (i,r)} \ell_{t,(i,r)}^2 \nonumber\\
    & \leq 32\sum_{t \in \I_k} \frac{\eta_{r_k^*}}{c_{i_k^*}} \ell_{t,(i_k^*,r_k^*)}^2 + \O\left( \frac{1}{T^2}\right) \nonumber\\
    & \leq \O\left( \eta_{r_k^*}c_{i_k^*} \sum_{t \in \I_k} \norm{\nabla f_t(v_t)}_2^2 \right) + \O\left( \frac{1}{T^2}\right) \label{eq:meta-stability-term-data-dependent}
\end{align}
where the last inequality holds as $\ell_{t,(i,r)}^2 = \innersmall{\nabla f_t(v_t), v_{t,(i,r)}}^2 \leq \norm{\nabla f_t(v_t)}_2^2 \norm{v_{t,(i,r)}}_2^2 \leq c_i^2 \norm{\nabla f_t(v_t)}_2^2$. Then, combining the upper bounds of bias term~\pref{eq:meta-bias-term}, above stability term~\pref{eq:meta-stability-term-data-dependent}, and additional term~\pref{eq:meta-regret-additional-term} leads to the following result:
\begin{align}
    & \sum_{t \in \I_k} \inner{w_t - e_{j_k^*}, \ell_t} = \sum_{t \in \I_k} \inner{w_t - \bar{e}_{j_k^*}, \ell_t} + \sum_{t \in \I_k} \inner{\bar{e}_{j_k^*}-e_{j_k^*}, \ell_t} \nonumber\\
    & \leq \Ot\left( \eta_{r_k^*}c_{i_k^*} \sum_{t \in \I_k} \norm{\nabla f_t(v_t)}_2^2 + \frac{c_{i_k^*}}{\eta_{r_k^*}} \ln c_{i_k^*} \right) + \Ot\left(\frac{1}{T^2}\right) + \sum_{i\in [H]}\sum_{r\in[R]} \frac{c_{i}}{\eta_{r}}\Big(w_{s_k,(i,r)} - w_{s_{k+1},(i,r)}\Big) \nonumber\\
    & = \Ot\left(\norm{\ucirc_k}_2 \sqrt{\sum_{t \in \I_k} \norm{\nabla f_t(v_t)}_2^2}\right) + \Ot\left(\frac{1}{T^2}\right) + \sum_{i\in [H]}\sum_{r\in[R]} \frac{c_{i}}{\eta_{r}}\Big(w_{s_k,(i,r)} - w_{s_{k+1},(i,r)}\Big), \label{eqn:meta-regret-mid-data-dependent}
\end{align}
where the last inequality is because of the choice of $i_k^*$ and $r_k^*$ defined in~\pref{eqn:choice-of-ir-data-dependent}. Summing over all the intervals $\I_1,\ldots,\I_S$ achieves a data-dependent upper bound for the meta-regret:
\begin{align}
    & \sum_{k=1}^S \sum_{t \in \I_k} \inner{w_t - e_{j_k^*}, \ell_t} \nonumber\\
    & \leq \Ot\left( \sum_{k=1}^S \norm{\ucirc_k}_2 \sqrt{\sum_{t \in \I_k} \norm{\nabla f_t(v_t)}_2^2}\right) + \sum_{k=1}^S \sum_{i\in [H]}\sum_{r\in[R]} \frac{c_{i}}{\eta_{r}}\Big(w_{s_k,(i,r)} - w_{s_{k+1},(i,r)}\Big) \nonumber\\
    & \leq \Ot\left( \sum_{k=1}^S \norm{\ucirc_k}_2 \sqrt{\sum_{t \in \I_k} \norm{\nabla f_t(v_t)}_2^2}\right) + \sum_{i\in [H]}\sum_{r\in[R]} \frac{c_{i}}{\eta_{r}} w_{1,(i,r)} \nonumber \\
    & = \Ot\left( \sum_{k=1}^S \norm{\ucirc_k}_2 \sqrt{\sum_{t \in \I_k} \norm{\nabla f_t(v_t)}_2^2}\right)\nonumber\\
    & \leq \otil\left( \max_{k \in [S]} \norm{\ucirc_k}_2\cdot \sqrt{S \sum_{t=1}^T \norm{\nabla f_t(v_t)}_2^2}\right).
\end{align}
The last equality holds by the same argument for~\pref{eq:unconstrained-meta-bound} and the final inequality is by Cauchy-Schwarz inequality. Combining the meta-regret and base-regret upper bounds finishes the proof for the case when $\norm{\ucirc_k}_2 \geq \frac{1}{T}$ holds for every $k \in [S]$. 

In addition, when the above condition of the comparators' norm is violated, we can deal with the scenario by the same argument at the end of~\pref{appendix:proof-unconstrained-OCO} and attain the same regret guarantee. Hence, we finish the proof of the overall theorem.
\end{proof}

\begin{remark}
Note that~\pref{thm:unconstrained-OCO-data-dependent} is for the unconstrained OCO setting, while from the proof we can see that actually the result holds even if the algorithm is required to make decisions from a bounded domain. Indeed, in the unconstrained setting, we only need to focus on a bounded domain with maximum diameter $2^T$ as observed in~\citep[Appendix D.5]{COLT'21:impossible-tuning}). As a result, when working under constrained OCO with a diameter $D_{\max} > 0$, we can still use our algorithm by simply maintaining the set of base algorithm instances as
\[
    \S' = \Big\{ \B_{i,r}, \forall (i,r) \in [H'] \times [R] ~ \big | ~ \B_{i,r} \leftarrow \mathfrak{B}(\X_i), \text{ with } \X_i = \{x \mid \norm{x}_2 \leq D_i = T^{-1} \cdot 2^{i-1}\} \Big\}.
\]
where $H' = \lceil \log_2 T \rceil + \lceil \log_2 D_{\max}\rceil + 1$ and $R = \lceil \log_2 T \rceil$ now. Thus, our result strictly improves the $\Ot\Big(D_{\max} \sqrt{S \sum_{t=1}^T \norm{\nabla f_t(v_t)}_2^2}\Big)$ result of~\citep{ICML'20:Ashok,NIPS'20:sword} for the constrained OCO setting.
\end{remark}

\subsection{Proof of~\pref{thm:unconstrained-linear-bandits}}
\label{appendix:proof-unconstrained}
\begin{proof}
From~\pref{lemma:unconstrained-decompose}, we have 
\begin{equation}
    \label{eq:unconstrained-decompose-new}
    \Reg(u_1,\ldots,u_T) = \sum_{k=1}^S \Reg_{\I_k}^{\mathcal{V}}(\norm{\ucirc_k}_2) + \sum_{k=1}^S \norm{\ucirc_k}_2 \cdot \Reg_{\I_k}^{\mathcal{Z}}\left( \frac{\ucirc_k}{\norm{\ucirc_k}_2} \right).
\end{equation}
In the following, we bound the two terms respectively. 

The first term on the right-hand side of~\pref{eq:unconstrained-decompose-new} is the switching regret of the OCO algorithm $\A_\V$, we have
\begin{align*}
    \sum_{k=1}^S \Reg_{\I_k}^{\mathcal{V}}(\norm{\ucirc_k}_2) = \sum_{k=1}^S \sum_{ t \in \I_k} \Big( f_t(v_t) -  f_t(\norm{\ucirc_k}_2)\Big) \leq \Ot\left( \sum_{k=1}^S \norm{\ucirc_k}_2 \sqrt{\abs{\I_k}}  \right),
\end{align*}
where the first equality is due to the definition of online function $f_t(v) = v \cdot \inner{\ell_t, z_t}$ and the second inequality holds by the regret guarantee of $\A_\V$ proven in~\pref{thm:unconstrained-OCO}.

The second term on the right-hand side of~\pref{eq:unconstrained-decompose-new} requires the switching regret analysis of the online algorithm for constrained linear bandits $\A_{\Z}$. Indeed, since the comparator satisfies that $\norm{\frac{\ucirc_k}{\norm{\ucirc_k}_2}}_2 = 1$, the subroutine $\A_{\Z}$ can be chosen as the proposed algorithm for linear bandits with $\ell_p$-ball feasible domain (with $p=2$), see~\pref{alg:ell-p-ball}. We thus get the following regret bound according to \pref{thm:interval}:
\begin{align*}
    \E\left[\Reg_{\I_k}^{\mathcal{Z}}\left( \frac{\ucirc_k}{\norm{\ucirc_k}_2} \right)\right] \leq \Ot\left( \sqrt{\frac{dT}{S}} + \sqrt{\frac{Sd}{T}} \abs{\I_k} \right).
\end{align*}

Substituting the above two upper bounds in~\pref{eq:unconstrained-decompose-new} gives that
\begin{align*}
    \E\left[\Reg(u_1,\ldots,u_T)\right] & = \sum_{k=1}^S  \E\left[\Reg_{\I_k}^{\mathcal{V}}(\norm{\ucirc_k}_2) \right] + \sum_{k=1}^S \E\left[\norm{\ucirc_k}_2 \cdot \Reg_{\I_k}^{\mathcal{Z}}\left( \frac{\ucirc_k}{\norm{\ucirc_k}_2} \right)\right]\\
    & \leq \Ot\left( \sum_{k=1}^S \norm{\ucirc_k}_2 \sqrt{\abs{\I_k}}  \right) + \Ot\left(\sum_{k=1}^S \norm{\ucirc_k}_2 \Big( \sqrt{\frac{dT}{S}} + \sqrt{\frac{Sd}{T}} \abs{\I_k} \Big) \right)\\
    & \leq \Ot\left(\sum_{k=1}^S \norm{\ucirc_k}_2 \Big( \sqrt{\frac{dT}{S}} + \sqrt{\frac{Sd}{T}} \abs{\I_k} \Big) \right) \\
    & \leq  \Ot\left(\max_{k\in[S]}\norm{\ucirc_k}_2 \cdot \sqrt{dST} \right)
\end{align*}
where the second inequality is because $\sqrt{\abs{\I_k}} \leq \sqrt{\frac{dT}{S}} + \sqrt{\frac{Sd}{T}} \abs{\I_k}$. Hence, we finish the proof.
\end{proof}

\section{Lemmas Related to Online Mirror Descent}
\label{appendix:OMD}
This section collects several useful lemmas related to online mirror descent (OMD).

We first introduce a general regret guarantee for OMD due to~\citet{bubeck2012regret}.
\begin{lemma}[{Theorem 5.5 of~\citet{bubeck2012regret}}]
\label{lemma:OMD-bubeck}
Let $\mathcal{D} \subset \R^d$ be an open convex set and let $\overline{\mathcal{D}}$ be the closure of $\mathcal{D}$. Let $\X$ be a compact and convex set and let $F$ be a Legendre function defined on $\overline{\mathcal{D}} \supset \X$ such that $\nabla F(x)-\metaLR \nabla \ell(x) \in \mathcal{D}^{*}$ holds for any $(x, \ell) \in(\X \cap \mathcal{D}) \times \mathcal{L}$, where $\mathcal{D}^* = \nabla F(\mathcal{D})$ is the dual space of $\mathcal{D}$ under $F$. Consider the following online mirror descent:
\begin{equation}
\begin{split}
x'_{t+1} &= \nabla F^{*}\left(\nabla F(x_t)-\metaLR \nabla \ell_t (x_t\right)),\\
x_{t+1} &= \argmin_{x \in \X} D_{F}(x, x'_{t+1}),
\end{split}
\end{equation}
where $F^*$ is the Legendre--Fenchel transform of $F$ defined by $F^*(u) = \sup_{x \in \X} (x^\T u - F(x))$. Then, we have 
\begin{align}
\label{eqn: OMD-bubeck}
\sum_{t=1}^{T} \ell_{t}(x_{t})-\sum_{t=1}^{T} \ell_{t}(x) \leq & \frac{F(x)-F(x_{1})}{\metaLR} +\frac{1}{\metaLR} \sum_{t=1}^{T} D_{F^{*}}\Big(\nabla F(x_{t})-\metaLR \nabla \ell_{t}(x_{t}), \nabla F(x_{t})\Big) .
\end{align}
\end{lemma}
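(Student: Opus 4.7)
The plan is to follow the classical mirror-descent analysis, specialized to the Legendre setting of the lemma. First, by convexity of each $\ell_t$, $\langle \nabla \ell_t(x_t), x_t - x\rangle$ upper-bounds $\ell_t(x_t) - \ell_t(x)$, so it suffices to bound the sum of linearized regrets. The update, together with the Legendre property $\nabla F^* = (\nabla F)^{-1}$ (which applies because the hypothesis $\nabla F(x_t) - \metaLR \nabla \ell_t(x_t) \in \D^*$ keeps the dual iterate in the image of $\nabla F$), gives $\nabla F(x'_{t+1}) = \nabla F(x_t) - \metaLR \nabla \ell_t(x_t)$. Consequently, $\metaLR \langle \nabla \ell_t(x_t), x_t - x\rangle = \langle \nabla F(x_t) - \nabla F(x'_{t+1}), x_t - x\rangle$.

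Next I would invoke the three-point identity for Bregman divergences, which follows by a direct expansion of definitions:
\[
\langle \nabla F(b) - \nabla F(c),\, b - a\rangle \;=\; D_F(a, b) + D_F(b, c) - D_F(a, c).
\]
Applying this with $(a,b,c) = (x, x_t, x'_{t+1})$ yields
\[
\metaLR \langle \nabla \ell_t(x_t), x_t - x\rangle \;=\; D_F(x, x_t) - D_F(x, x'_{t+1}) + D_F(x_t, x'_{t+1}).
\]
Then the generalized Pythagorean inequality for the Bregman projection $x_{t+1} = \argmin_{y \in \X} D_F(y, x'_{t+1})$ gives $D_F(x, x'_{t+1}) \geq D_F(x, x_{t+1})$ for all $x \in \X$. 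Combining and summing over $t \in [T]$ telescopes the first two Bregman terms into $D_F(x, x_1) - D_F(x, x_{T+1}) \leq D_F(x, x_1)$; using $x_1 = \argmin_{y \in \X \cap \D} F(y)$, the first-order optimality $\langle \nabla F(x_1), x - x_1\rangle \geq 0$ implies $D_F(x, x_1) \leq F(x) - F(x_1)$.

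For the remaining stability terms $D_F(x_t, x'_{t+1})$, the key conversion is the conjugate-duality identity for Legendre functions, $D_F(u, v) = D_{F^*}(\nabla F(v), \nabla F(u))$, which follows from Fenchel--Young equality $F(u) + F^*(\nabla F(u)) = \langle u, \nabla F(u)\rangle$ together with $\nabla F^* = (\nabla F)^{-1}$. Applying this with $(u,v) = (x_t, x'_{t+1})$ and substituting the OMD update gives $D_F(x_t, x'_{t+1}) = D_{F^*}(\nabla F(x_t) - \metaLR \nabla \ell_t(x_t), \nabla F(x_t))$. Dividing by $\metaLR$ and combining with the previous paragraph produces the stated regret bound.

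The only real obstacle is bookkeeping around the Legendre hypothesis rather than any algebraic cleverness: the assumption $\nabla F(x) - \metaLR \nabla \ell(x) \in \D^*$ is precisely what guarantees $x'_{t+1} \in \D$, so that $\nabla F(x'_{t+1})$ is well defined and both the three-point identity and the conjugate-duality identity apply; the fact that the Bregman projection $x_{t+1}$ remains in $\X \cap \D$ (not merely in $\overline{\D}$) is standard for Legendre regularizers but must be quoted to justify iterating the argument. Everything else reduces to convexity and direct manipulation of Bregman-divergence identities.
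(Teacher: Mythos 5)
Your proof is correct and is essentially the standard argument for this result: the paper itself gives no proof, importing the lemma verbatim as Theorem 5.5 of \citet{bubeck2012regret}, and your chain (convexity linearization, $\nabla F^*=(\nabla F)^{-1}$ for Legendre $F$, three-point identity, Bregman--Pythagorean projection step, telescoping with $D_F(x,x_1)\leq F(x)-F(x_1)$ at the minimizer $x_1$, and the conjugate identity $D_F(u,v)=D_{F^*}(\nabla F(v),\nabla F(u))$) is exactly the proof given in that reference. No gaps beyond the Legendre bookkeeping you already flagged.
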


We next introduce an important lemma related to the online mirror descent with weighted entropy regularizer, which is a  version of~\citep[Lemma 1]{COLT'21:impossible-tuning} in the fixed learning rate and non-optimistic setting. Note that this is actually an interval version of~\citep[Lemma 1]{COLT'21:impossible-tuning}, replacing the summation range from $[T]$ to an interval $\I \subseteq [T]$, which is also used in~\citep[Appendix C.3]{COLT'21:impossible-tuning}.

\begin{lemma}[{Lemma 1 of~\citet{COLT'21:impossible-tuning}}]
\label{lemma:impossible-tuning-lemma}
Consider the following online mirror descent update over a compact convex decision subset $\Omega \subseteq \Delta_d$,
\[
    w_{t+1} = \argmin_{w \in \Omega} \Big\{ \inner{w, \ell_t + a_t} + D_\psi(w,w_t) \Big\}
\]
where $\psi(w) = \sum_{n=1}^d \frac{1}{\baseLR_n} w_n \ln w_n$ is the weighted entropy regularizer. Suppose that for all $t \in [T]$, $32 \baseLR_n \abs{\ell_{t,n}} \leq 1$ holds for all $n \in [d]$ such that $w_{t,n} >0$. 
Then the above update ensures for any $u \in \Omega$,
\begin{equation*}
    \sum_{t \in \I} \inner{\ell_t, w_t - u} \leq \sum_{t \in \I_k} \Big( D_\psi(u, w_t) - D_\psi(u, w_{t+1})\Big) + 32 \sum_{t \in \I} \sum_{n=1}^d \baseLR_n u_n \ell_{t,n}^2 - 16 \sum_{t \in \I} \sum_{n=1}^d \baseLR_n w_{t,n} \ell_{t,n}^2.
\end{equation*}
\end{lemma}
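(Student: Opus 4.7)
My plan is to follow the standard online mirror descent template and then carefully bound the stability term using the specific form $a_{t,n} = 32\eta_n \ell_{t,n}^2$ that is implicit in the statement (this is the correction that underlies the meta algorithm in Section~4 above). First, I would invoke the first-order optimality condition of the OMD update at $w_{t+1}$ against any $u\in\Omega$, which gives $\inner{\ell_t + a_t,\, w_{t+1}-u}\le \inner{\nabla\psi(w_t)-\nabla\psi(w_{t+1}),\, w_{t+1}-u}$. Rewriting the right-hand side via the three-point Bregman identity $D_\psi(u,w_t)-D_\psi(u,w_{t+1})-D_\psi(w_{t+1},w_t)$, and adding $\inner{\ell_t+a_t,w_t-w_{t+1}}$ to both sides, produces
\[
\inner{\ell_t + a_t,\, w_t - u}\;\le\; D_\psi(u,w_t) - D_\psi(u,w_{t+1}) + \mathrm{stab}_t,
\]
where $\mathrm{stab}_t = \inner{\ell_t+a_t, w_t-w_{t+1}} - D_\psi(w_{t+1},w_t)$.

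Next, to bound $\mathrm{stab}_t$, I introduce the closed-form unconstrained update $\tilde w_{t+1,n} = w_{t,n}\exp(-\eta_n(\ell_{t,n}+a_{t,n}))$, which satisfies $\nabla\psi(\tilde w_{t+1}) = \nabla\psi(w_t) - (\ell_t+a_t)$ by the weighted-entropy dual map. The generalized Pythagorean inequality for the Bregman projection $w_{t+1} = \arg\min_{w\in\Omega} D_\psi(w,\tilde w_{t+1})$ lets me replace $w_{t+1}$ by $\tilde w_{t+1}$ in $\mathrm{stab}_t$ at no cost. A direct computation, substituting $\eta_n(\ell_{t,n}+a_{t,n}) = -\ln(\tilde w_{t+1,n}/w_{t,n})$, reduces $\mathrm{stab}_t$ to
\[
\sum_{n=1}^{d} \frac{w_{t,n}}{\eta_n}\Bigl(e^{-\eta_n(\ell_{t,n}+a_{t,n})} - 1 + \eta_n(\ell_{t,n}+a_{t,n})\Bigr).
\]

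I would then apply the elementary inequality $e^{-x} - 1 + x \le x^2$ (or the tighter $\le x^2/2$) on the range $|x|\le 1/2$. This range assumption holds because the hypothesis $32\eta_n|\ell_{t,n}|\le 1$ together with $a_{t,n}=32\eta_n\ell_{t,n}^2$ gives $|\eta_n(\ell_{t,n}+a_{t,n})|\le 1/32 + 1/32 \le 1/2$. This yields $\mathrm{stab}_t \le \sum_n \eta_n w_{t,n}(\ell_{t,n}+a_{t,n})^2$, which after $(\ell_{t,n}+a_{t,n})^2\le 2\ell_{t,n}^2 + 2a_{t,n}^2$ and the bound $a_{t,n}^2=(32\eta_n|\ell_{t,n}|)^2\ell_{t,n}^2 \le \ell_{t,n}^2$ gives $\mathrm{stab}_t \le 16\sum_n \eta_n w_{t,n}\ell_{t,n}^2$ after using the tighter Taylor constant.

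Finally I subtract $\inner{a_t,w_t-u}$ from both sides to isolate the $\inner{\ell_t,w_t-u}$ we actually want, and telescope over $t\in\I$. Since $\inner{a_t,u-w_t} = 32\sum_n \eta_n u_n\ell_{t,n}^2 - 32\sum_n \eta_n w_{t,n}\ell_{t,n}^2$, combining with the stability bound gives a net coefficient of $+32\eta_n u_n\ell_{t,n}^2 + (16-32)\eta_n w_{t,n}\ell_{t,n}^2 = +32\eta_n u_n\ell_{t,n}^2 - 16\eta_n w_{t,n}\ell_{t,n}^2$, which is exactly the conclusion of the lemma. The main obstacle I expect is the constant bookkeeping in the Taylor step: one must use the sharp form $e^{-x}-1+x\le x^2/2$ on $|x|\le 1/2$ (rather than the slack $\le x^2$) so that the stability coefficient $16$ is strictly smaller than the $32$ generated by $\inner{a_t,u-w_t}$, leaving a genuinely negative term $-16\eta_n w_{t,n}\ell_{t,n}^2$ on the right-hand side rather than a nonnegative remainder. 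Handling this carefully mirrors the proof of Lemma~1 in Chen--Luo--Wei (COLT'21) and is where most of the technical care lies.
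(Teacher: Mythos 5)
The paper does not actually prove this lemma: it is imported (in a fixed-learning-rate, non-optimistic, per-interval form) from Lemma~1 of \citet{COLT'21:impossible-tuning}, so there is no in-paper proof to compare against. Your reconstruction follows the same standard route as that source---OMD first-order optimality plus the three-point identity, passing to the unconstrained iterate $\tilde w_{t+1,n}=w_{t,n}e^{-\eta_n(\ell_{t,n}+a_{t,n})}$, a second-order bound on $\sum_n \frac{w_{t,n}}{\eta_n}\bigl(e^{-x_n}-1+x_n\bigr)$ with $x_n=\eta_n(\ell_{t,n}+a_{t,n})$, and finally letting the explicit correction $\inner{a_t, u-w_t}$ generate the $+32\eta_n u_n\ell_{t,n}^2-32\eta_n w_{t,n}\ell_{t,n}^2$ terms---and it is correct in substance. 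You also rightly note that the statement as printed never specifies $a_t$; your inference $a_{t,n}=32\eta_n\ell_{t,n}^2$ is exactly what the algorithm and the original lemma use, and the claim is not provable without fixing it.

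Two bookkeeping points. First, the ``sharp'' inequality $e^{-x}-1+x\le x^2/2$ holds only for $x\ge 0$, and here $x_n$ can be negative since $\ell_{t,n}$ is a signed inner product, so the step you flag as the main obstacle is, strictly speaking, unavailable---but it is also unnecessary. With $|x_n|\le \tfrac{1}{32}+\tfrac{1}{32}\le \tfrac12$ one has $e^{-x_n}-1+x_n\le x_n^2$, and since $a_{t,n}=(32\eta_n|\ell_{t,n}|)\,|\ell_{t,n}|\le|\ell_{t,n}|$ this yields $\mathrm{stab}_t\le \sum_n \eta_n w_{t,n}(\ell_{t,n}+a_{t,n})^2\le 4\sum_n\eta_n w_{t,n}\ell_{t,n}^2\le 16\sum_n\eta_n w_{t,n}\ell_{t,n}^2$, which is all the final cancellation needs: the correction contributes $-32\eta_n w_{t,n}\ell_{t,n}^2$ and the stability costs at most $16\eta_n w_{t,n}\ell_{t,n}^2$, leaving the stated $-16$ term. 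So the constants close with ample slack, and your concern that the factor $x^2/2$ is essential is misplaced (your own arithmetic with $(\ell+a)^2\le 2\ell^2+2a^2$ already gives coefficient $4$, not $16$). Second, invoking the ``generalized Pythagorean inequality'' is a slight misattribution: the step you actually need is that $\tilde w_{t+1}$ maximizes $x\mapsto\inner{\ell_t+a_t, w_t-x}-D_\psi(x,w_t)$ over the positive orthant (equivalently, the closed-form computation of $D_\psi(w_t,\tilde w_{t+1})$), which is the standard and correct argument; with that phrasing fixed, your proof is complete.
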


\end{document}